\theoremstyle{plain}
\newtheorem{theorem}{Theorem}[section]
\newtheorem{lemma}[theorem]{Lemma}
\newtheorem{corollary}[theorem]{Corollary}
\newtheorem*{corollary*}{Corollary}
\theoremstyle{definition}
\newtheorem{definition}[theorem]{Definition}
\newtheorem{assumption}[theorem]{Assumption}
\theoremstyle{remark}
\icmltitlerunning{Finite-Sample Analysis of Learning High-Dimensional Single ReLU Neuron}
\begin{document}

\twocolumn[
\icmltitle{Finite-Sample Analysis of Learning High-Dimensional Single ReLU Neuron}



\icmlsetsymbol{equal}{*}

\begin{icmlauthorlist}
\icmlauthor{Jingfeng Wu}{equal,jhu}
\icmlauthor{Difan Zou}{equal,hku}
\icmlauthor{Zixiang Chen}{equal,ucla}
\icmlauthor{Vladimir Braverman}{rice}
\icmlauthor{Quanquan Gu}{ucla}
\icmlauthor{Sham M.~Kakade}{harvard}
\end{icmlauthorlist}

\icmlaffiliation{jhu}{Department of Computer Science, Johns Hopkins University}
\icmlaffiliation{hku}{Department of Computer Science, The University of Hong Kong}
\icmlaffiliation{ucla}{Department of Computer Science, University of California, Los Angeles}
\icmlaffiliation{rice}{Department of Computer Science, Rice University}
\icmlaffiliation{harvard}{Department of Computer Science and Department of Statistics, Harvard University}


\icmlcorrespondingauthor{Vladimir Braverman}{vb21@rice.edu}
\icmlcorrespondingauthor{Quanquan Gu}{qgu@cs.ucla.edu}
\icmlcorrespondingauthor{Sham M.~Kakade}{sham@seas.harvard.edu}

\icmlkeywords{SGD, ReLU, High-dimension, Risk bound}

\vskip 0.3in
]



\printAffiliationsAndNotice{\icmlEqualContribution} 

\begin{abstract}
This paper considers the problem of learning a single ReLU neuron with squared loss (a.k.a., ReLU regression) in the overparameterized regime, where the input dimension can exceed the number of samples.
We analyze a Perceptron-type algorithm called GLM-tron \citep{kakade2011efficient} and provide its dimension-free risk upper bounds for high-dimensional ReLU regression in both well-specified and misspecified settings. Our risk bounds recover several existing results as special cases. 
Moreover, in the well-specified setting, we provide an instance-wise matching risk lower bound for GLM-tron.
Our upper and lower risk bounds provide a sharp characterization of the high-dimensional ReLU regression problems that can be learned via GLM-tron.
On the other hand, we provide some negative results for stochastic gradient descent (SGD) for ReLU regression with symmetric Bernoulli data: if the model is well-specified, the excess risk of SGD is provably no better than that of GLM-tron ignoring constant factors, for each problem instance; and in the noiseless case, GLM-tron can achieve a small risk while SGD unavoidably suffers from a constant risk in expectation.
These results together suggest that GLM-tron might be preferable to SGD for high-dimensional ReLU regression.

\end{abstract}

\allowdisplaybreaks

\section{Introduction}
In modern machine learning such as deep learning, the number of model parameters often exceeds the amount of training data, which is often referred to as overparameterization.
Yet, overparameterized models (when properly optimized) can still achieve strong generalization performance in practice. 
Understanding the statistical learning mechanism in the overparameterized regime has drawn great attention in the learning theory community.

Recently, overparameterized linear regression problems have been extensively investigated. 
Dimensional-free, finite-sample, and instance-wise excess risk bounds have been established for various algorithms, including the minimal $\ell_2$-norm interpolator \citep{bartlett2020benign}, ridge regression \citep{tsigler2020benign,cheng2022dimension}, low-norm interpolator \citep{zhou2020uniform,zhou2021optimistic,koehler2021uniform} and the online stochastic gradient descent (SGD) methods \citep{zou2021benign,wu2022iterate}.
These results together deliver a relatively comprehensive picture of when and how high-dimensional linear regression problems can be learned with finite samples.

However, when the model is not linear, the overparameterized regime is much less well understood, even for the arguably simplest \emph{ReLU regression} problems (see \eqref{eq:risk}). 
This work aims to fill this gap by providing sharp risk bounds for learning high-dimensional ReLU regression problems with finite samples. 

\paragraph{High-Dimensional ReLU Regression.}
The problem of ReLU Regression aims to minimize the following risk:
\begin{equation}\label{eq:risk}
    \risk(\wB):=  \Ebb \big(\relu(\xB^\top\wB) - y\big)^2,\quad \wB \in \Hbb,
\end{equation}
where $\Hbb$ is a Hilbert space that can be either $d$-dimensional for a finite $d$ or countably infinite dimensional;
\(\relu(\cdot) := \max\{\cdot, 0 \}\) is the \emph{Rectified Linear Unit} (ReLU);
$(\xB, y) \in \Hbb \otimes \Rbb$ denotes a pair of an input feature vector and the corresponding scalar response; the expectation is taken over some unknown distribution of $(\xB, y)$; and $\wB \in \Hbb$ denotes the model parameter. 
It is worth noting that in general $\risk(\cdot)$ is non-convex due to
the non-linearity of $\relu$.
Therefore, ReLU regression is significantly harder than linear regression. 

Given $N$ i.i.d.\ samples, $(\xB_t, y_t)_{t=1}^N$, two iterative algorithms will be considered for optimizing \eqref{eq:risk}. 
The first algorithm is \emph{stochastic gradient descent} (SGD), which is initialized from $\wB_0$ and then makes the following update:
for $t=1,\dots,N$,
\begin{align}
    \wB_{t}
    &= \wB_{t-1} - \gamma_t \cdot \gB_t, \ \text{and}\ \label{eq:sgd}\tag{\text{SGD}}\\
    \gB_t &:= \big( \relu(\xB_t^\top \wB_{t-1}) - y_{t} \big) \xB_t\cdot \onebb[\xB_t^\top\wB_{t-1} > 0] \notag
\end{align}
where $(\gamma_t)_{t=0}^{N}$ refers to a stepsize scheduler, e.g., a geometrically decaying stepsize scheduler \citep{ge2019step,wu2022iterate},
\begin{equation}\label{eq:geometry-tail-decay-lr}
\text{for}\ t\ge 1, \ 
    \gamma_{t} = 
    \begin{cases}
    \gamma_{t-1} / 2, & t\ \% \ \big( N / \log(N) \big) = 0; \\
    \gamma_{t-1}, & \text{otherwise};
    \end{cases}
\end{equation}
and the output is the last iterate, i.e., $\wB_N$.
The second algorithm is known as \emph{Generalized Linear Model Perceptron} (GLM-tron) \citep{kalai2009isotron,kakade2011efficient}, which is also initialized from $\wB_0$ and makes the following update:
for $t=1,\dots,N$,
\begin{equation}\label{eq:tron}\tag{\text{GLM-tron}}
    \wB_{t}
    = \wB_{t-1} - \gamma_t \cdot \big( \relu(\xB_t^\top \wB_{t-1}) - y_{t} \big) \xB_t,
\end{equation}
where $(\gamma_t)_{t=0}^N$ is a stepsize scheduler, e.g., \eqref{eq:geometry-tail-decay-lr};
and the output is the last iterate, i.e., $\wB_N$. 
Comparing these two algorithms, the only difference is that \eqref{eq:tron} ignores the derivative of $\relu(\cdot)$ in its updates.

\paragraph{Contribution 1 (Well-Specified Setting).}
We first consider the well-specified setting (also known as the ``noisy teacher'' setting \citep{frei2020agnostic}), where the expectation of the label conditioned on the input is a linear function followed by ReLU. 
In this setting, we provide a risk upper bound, $\min\risk(\cdot) + \Ocal(\Deff / N)$, for \eqref{eq:tron}, where $\Deff$ is an \emph{effective} dimension jointly determined by the sample size, stepsize, and the data covariance matrix, and is independent of the ambient dimension.
In particular, $\Deff$ is small when the spectrum of the data covariance matrix decays fast. 
Moreover, we provide an instance-wise nearly-matching risk lower bound, demonstrating the tightness of our analysis. 
These bounds are in a similar flavor as the benign-overfitting-type bounds established for high-dimensional linear models (see, e.g.,  \citet{bartlett2020benign,tsigler2020benign,zou2021benign}), but are the first of their kind for high-dimensional non-linear models.

\paragraph{Contribution 2 (Misspecified Setting).}
We then consider the misspecified setting (also known as the agnostic setting, see, e.g., \citet{diakonikolas2020approximation}), where no distributional assumption is made on the label generation. 
In this case, we provide an $\Ocal(\min\risk(\cdot) + \Deff/N )$ risk upper bound for \eqref{eq:tron}, where the $\Deff$ is the same effective dimension defined in the well-specified setting.
Therefore, we can characterize when \eqref{eq:tron} achieves a constant-factor approximation for misspecified ReLU regression in the overparameterized regime. 
In particular, when specialized to the finite-dimensional case, our upper bound improves an existing analysis for GLM-tron by \citet{diakonikolas2020approximation}.

\paragraph{Contribution 3 (Comparison with SGD).}
We also show some negative results on \eqref{eq:sgd} for ReLU regression with symmetric Bernoulli data:
in the well-specified case, we show that the excess risk achieved by \eqref{eq:sgd} is always no better than that achieved by \eqref{eq:tron} ignoring constant factors, for every problem instance;
in the noiseless case, 
\eqref{eq:sgd} unavoidably suffers from a constant risk (in expectation) while \eqref{eq:tron} is able to attain an arbitrarily small risk. 
These together suggest a potentially more preferable algorithmic bias of \eqref{eq:tron} (compared with \eqref{eq:sgd})
in ReLU regression.

\paragraph{Contribution 4 (Techniques).}
From a technical perspective, we introduce new analysis techniques which extend the operator method initially developed for linear models (see, e.g., \citet{jain2017parallelizing,zou2021benign,wu2022iterate} and references therein) to handle the non-linearity of ReLU. 
The key idea is, instead of controlling the entire covariance matrix of the iterates as in the linear case, one should work with  the \emph{diagonal} matrix  to better deal with the non-linearity of ReLU. 
Our novel development of the operator method can be of independent  interest.

\paragraph{Paper Organization.}
The remaining paper is organized as follows.
We first review related literature in Section \ref{sec:related}.
Then we set up the preliminaries in Section \ref{sec:setup}.
We present our main results for well-specified, misspecified ReLU regression, and the comparison between GLM-tron and SGD in Sections \ref{sec:well-specified}, \ref{sec:misspecified} and \ref{sec:sgd}, respectively.
We sketch our proof techniques in Section \ref{sec:proof}. 
Finally, the paper is concluded in Section \ref{sec:conclusion}.
All proofs are deferred to the appendix.

\section{Related Work}\label{sec:related}

\paragraph{ReLU Regression.}
We first review a set of literature on the hardness  results and achievable bounds for ReLU regression. 
On the negative side, \citet{Goel2020TightHR} showed that learning ReLU regression is NP-hard without distributional assumption.
Moreover,
\citet{goel2019time} showed that even for Gaussian features, learning ReLU regression with small \emph{excess} risk is as hard as the learning sparse parities with noise problem, which is believed to be computationally intractable.
On the positive side, \citet{frei2020agnostic} showed that under certain conditions (e.g., bounded and well-spread features),
GD or SGD can learn ReLU regression problems with $\min\risk(\cdot) +o(1)$ risk in the well-specified cases and $\Ocal( \min\risk(\cdot) +o(1) )$ risk in the misspecified cases.
Compared to \citet{frei2020agnostic}, our risk bounds for \eqref{eq:tron} are more general in both settings and can recover their bounds. 
For finite-dimensional misspecified ReLU regression, \citet{diakonikolas2022learning} showed that a constant-factor approximation is possible with only poly-logarithmic samples.
However, their result becomes vacuous in the overparameterized regime.
Finally, in a significantly easier, noiseless setting where $y = \relu(\wB_*^\top \xB)$ for some $\wB_*\in\Hbb$,
there are far more results (see, e.g., \citet{soltanolkotabi2017learning,du2017convolutional,yehudai2020learning,frei2020agnostic} and the references therein).
Although our results can be directly applied, the noiseless setting is not the main focus of our paper.

Tangibly related to ReLU regression, the problem of learning leaky ReLU regression has been studied by \citet{mei2018landscape,foster2018uniform,frei2020agnostic,yehudai2020learning}. 
Since ReLU is not a strictly increasing function (unlikely leaky ReLU), these results for leaky ReLU regression cannot be applied to ReLU regression.

Recent work by \citet{zhou2022non} provided dimension-free bounds on the generalization gap between the \emph{Moreau envelope} of the empirical and population loss for general GLMs including ReLU regression. 
But their analysis is limited to Gaussian data while our analysis imposes much fewer constraints on the data distribution.

\paragraph{GLM-Tron.}
The GLM-tron algorithm dates back to at least \citet{kalai2009isotron,kakade2011efficient} for learning the well-specified generalized linear model (GLM), where the expectation of the label conditioning on the feature is generated through a GLM.
As a special case, their results apply to well-specified ReLU regression as well.
However, our results are significantly different from theirs. 
First of all, in the well-specified regime, we show nearly matching upper and lower excess risk bounds for \eqref{eq:tron}, which can recover the excess risk upper bounds from \citet{kalai2009isotron,kakade2011efficient}.
Moreover, 
from a technical standpoint,
their analysis is motivated by the classical analysis for the perceptron algorithm (see, e.g., Section 4.1.7 in \citet{bishop2006pattern}),
while we take a completely different approach by analyzing \eqref{eq:tron} in ReLU regression with the operator methods developed for analyzing SGD in linear regression (see, e.g., \citet{zou2021benign,wu2022iterate}).
We refer the reader to Section \ref{sec:proof} for a detailed overview of our techniques.
On the other hand, we remark that our analysis is specialized to ReLU regression and may not directly apply to general GLMs covered by \citet{kalai2009isotron,kakade2011efficient}.

More recently, \citet{diakonikolas2020approximation} revisited GLM-tron for learning misspecified ReLU regression and showed a risk upper bound of 
\( \Ocal( \min \risk(\cdot) + \sqrt{d/N} ) \)
, where $d$ is the ambient dimension and $N$ is the sample size. 
Their bound becomes vacuous in the overparameterized regime. 
In comparison, our bound in the misspecified setting can be applied in the overparameterized setting. 
Moreover, when specialized to the finite-dimensional cases, our bound improves the bound in \citet{diakonikolas2020approximation}.

\section{Preliminaries}\label{sec:setup}
In this part, we set up some additional preliminaries before presenting our results. 
The following defines the data covariance matrix.
\begin{definition}[Data covariance matrix]\label{assump:data:covariance}
Assume that each entry and the trace of the $\Ebb[\xB \xB^\top]$ are finite.
Define 
\(
\HB := \Ebb [\xB \xB^\top].
\)
Denote the eigenvalues of $\HB$ by $(\lambda_i)_{i\ge 1}$, sorted in non-increasing order. 
\end{definition}

In what follows, we will make the following assumption about the symmetricity of the feature vector. 
\begin{assumption}[Symmetricity conditions]\label{assump:symmetric}
Assume that for every $\uB \in \Hbb$ and $\vB \in \Hbb$, it holds that 
\begin{align*}
     &\Ebb \big[\xB \xB^\top \cdot \ind{\xB^\top \uB > 0, \xB^\top \vB>0} \big] \\
     & \qquad = \Ebb \big[\xB \xB^\top \cdot \ind{\xB^\top \uB < 0, \xB^\top \vB<0} \big]; \\
      &\Ebb \big[ (\xB^\top \vB)^2 \xB \xB^\top \cdot \ind{\xB^\top \uB > 0, \xB^\top \vB>0} \big] 
      \\
      &\qquad = \Ebb \big[ (\xB^\top \vB)^2 \xB \xB^\top \cdot \ind{\xB^\top \uB < 0, \xB^\top \vB<0} \big].
\end{align*}
\end{assumption}
Assumption~\ref{assump:symmetric} requires that both the second and fourth moments of $\xB$, when projected into a sector, are invariant under sign flipping. 
Clearly, Assumption~\ref{assump:symmetric} holds when $\xB$ follows a symmetric distribution, i.e., $\xB$ and $-\xB$ satisfy the same distribution, which covers Gaussian or symmetric Bernoulli distributions.
We also remark that Assumption~\ref{assump:symmetric} can be slightly relaxed; see more discussions in Appendix \ref{append:sec:symmetricity}. 

Most existing results for ReLU regression impose some distributional conditions on the feature vectors. For example, \citet{frei2020agnostic,yehudai2020learning} assumed that the p.d.f.\ of $\xB$ is ``well-spreaded'' along every two-dimensional projection.
\citet{diakonikolas2020approximation,diakonikolas2022learning} assumed concentration and anti-concentration (and anti-anti-concentration) conditions on $\xB$. 
Our Assumption~\ref{assump:symmetric} only involves up to the fourth moments of $\xB$ and is not directly comparable to theirs that involve the entire p.d.f.\ of $\xB$.

\paragraph{Notation.}
We reserve upper-case calligraphic letters for linear operators on symmetric matrices. 
For two positive-value functions $f(x)$ and $g(x)$ we write  $f(x)\lesssim g(x)$ or $f(x)\gtrsim g(x)$
if $f(x) \le cg(x)$ or $f(x) \ge cg(x)$ for some absolute constant 
$c>0$ respectively;
we write $f(x) \eqsim g(x)$ if $f(x) \lesssim g(x) \lesssim f(x)$.
For two vectors $\uB$ and $\vB$ in a Hilbert space, their inner product is denoted by $\abracket{\uB, \vB}$ or equivalently, $\uB^\top \vB$.
For a matrix $\AB$, its spectral norm is denoted by $\norm{\AB}_2$.
For two matrices $\AB$ and $\BB$ of appropriate dimension, their inner product is defined as $\langle \AB, \BB \rangle := \tr(\AB^\top \BB)$.
For a positive semi-definite (PSD) matrix $\AB$ and a vector $\vB$ of appropriate dimension, we write $\norm{\vB}_{\AB}^2 := \vB^\top \AB \vB$.
The Kronecker/tensor product is denoted by $\otimes$.
Moreover, $\log (\cdot )$ refers to logarithm base $2$.

Denote the eigen decomposition of the data covariance by
\(\HB = \sum_{i} \lambda_i \vB_i \vB_i^\top\), where $(\lambda_i)_{i\ge 1}$ are eigenvalues in a non-increasing order and $(\vB_i)_{i\ge 1}$ are the corresponding eigenvectors.
We denote $\HB_{k^*:k^\dagger} := \sum_{k^* < i \le k^\dagger} \lambda_i \vB_i \vB_i^\top$, where $0\le k^* \le k^\dagger$ are two integers, and we allow $k^\dagger = \infty$. For example,
\[
\HB_{0:k} = \sum_{1\le i \le k} \lambda_i \vB_i \vB_i^\top,\quad 
\HB_{k:\infty} = \sum_{i > k} \lambda_i \vB_i \vB_i^\top.
\]
Similarly, we denote $\IB_{k^*:k^\dagger} := \sum_{k^* < i \le k^\dagger} \vB_i \vB_i^\top$.

\section{Well-Specified ReLU Regression}\label{sec:well-specified}
In this part, we present our results for well-specified ReLU regression. 
In the literature, the well-specified setting is also extensively referred to as the ``noisy teacher'' setting \citep{frei2020agnostic}.
We formally define a well-specified noise as follows.
\begin{assumption}[Well-specified noise]\label{assump:noise:well-specified}
Assume that there exists a parameter $\wB_*\in \Hbb$ such that 
\[
\Ebb[ y  | \xB] = \relu(\xB^\top \wB_*).
\]
Moreover, denote the variance of the additive noise by
\[
\sigma^2:= \risk(\wB_*) = \Ebb[ (y - \relu(\xB^\top \wB_*))^2].
\]
\end{assumption}
Clearly, in the well-specified case, 
we have 
\begin{align*}
    \risk(\wB) = \risk(\wB_*) + \Ebb[ (\relu(\xB^\top\wB) - \relu(\xB^\top \wB_*))^2],
\end{align*}
which implies that $\wB^* \in \arg\min \risk(\cdot)$.
In this case, we will work with the \emph{excess risk}, defined by
\begin{equation}\label{eq:excess-risk}
\excessrisk(\wB) := \risk(\wB) - \risk(\wB_*). 
\end{equation}

\paragraph{Excess Risk Landscape.}
Our first observation is that the landscape of the excess risk \eqref{eq:excess-risk} in ReLU regression is closely related to that in linear regression, i.e., a quadratic landscape. 
The following lemma rigorously characterizes this connection.
\begin{lemma}[Excess risk landscape]\label{thm:loss-landscape}
Under Assumptions \ref{assump:symmetric} and \ref{assump:noise:well-specified}, the following holds for \eqref{eq:excess-risk}:
\[
 0.25 \cdot \|\wB - \wB_*\|_{\HB}^2  \le \excessrisk(\wB) \le \| \wB - \wB_*\|_{\HB}^2.
\]
\end{lemma}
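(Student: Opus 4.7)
Under Assumption~\ref{assump:noise:well-specified}, decomposing $y = \relu(\xB^\top\wB_*) + \epsilon$ with $\Ebb[\epsilon\mid\xB]=0$ gives, by the tower property,
\[
\excessrisk(\wB) = \Ebb\bigl[(\relu(\xB^\top\wB) - \relu(\xB^\top\wB_*))^2\bigr].
\]
The upper bound is immediate from the fact that $\relu$ is $1$-Lipschitz, so pointwise $(\relu(a)-\relu(b))^2 \le (a-b)^2$; taking expectation gives $\excessrisk(\wB)\le \Ebb[(\xB^\top(\wB-\wB_*))^2] = \|\wB-\wB_*\|_\HB^2$.

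For the lower bound, the plan is to split the expectation according to the four sign combinations of $a := \xB^\top\wB$ and $b := \xB^\top\wB_*$ and exploit Assumption~\ref{assump:symmetric}. I would introduce
\[
\Acal_1 := \Ebb\bigl[\xB\xB^\top \ind{a>0,b>0}\bigr], \qquad \Acal_2 := \Ebb\bigl[\xB\xB^\top \ind{a>0,b<0}\bigr],
\]
both PSD. Applying the first identity of Assumption~\ref{assump:symmetric} with $(\uB,\vB)=(\wB,\wB_*)$ yields $\Ebb[\xB\xB^\top\ind{a<0,b<0}] = \Acal_1$, and applying it with $(\uB,\vB)=(\wB,-\wB_*)$ yields $\Ebb[\xB\xB^\top\ind{a<0,b>0}] = \Acal_2$. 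Summing the four pieces gives $\HB = 2\Acal_1 + 2\Acal_2$, so with $\Delta := \wB-\wB_*$,
\[
\|\Delta\|_\HB^2 = 2\Delta^\top \Acal_1\Delta + 2\Delta^\top \Acal_2 \Delta.
\]

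Next I would compute $\excessrisk(\wB)$ directly on each quadrant. On $\{a>0,b>0\}$ the integrand equals $(a-b)^2 = (\xB^\top\Delta)^2$; on $\{a<0,b<0\}$ it is $0$; on $\{a>0,b<0\}$ it equals $a^2 = (\xB^\top\wB)^2$; on $\{a<0,b>0\}$ it equals $b^2 = (\xB^\top\wB_*)^2$. Using the two identifications above,
\[
\excessrisk(\wB) = \Delta^\top \Acal_1 \Delta + \wB^\top \Acal_2 \wB + \wB_*^\top \Acal_2 \wB_*.
\]
Combining the two displays,
\[
\excessrisk(\wB) - \tfrac14 \|\Delta\|_\HB^2
= \tfrac12 \Delta^\top \Acal_1 \Delta + \tfrac12 (\wB+\wB_*)^\top \Acal_2 (\wB+\wB_*),
\]
which is nonnegative because $\Acal_1,\Acal_2\succeq 0$. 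This gives the desired lower bound.

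The main technical subtlety is the correct bookkeeping of the four sign quadrants and verifying that Assumption~\ref{assump:symmetric} (applied with both $(\wB,\wB_*)$ and $(\wB,-\wB_*)$) really does collapse them into just two PSD operators $\Acal_1,\Acal_2$; once that reduction is in place, the remaining algebra is the elementary identity $\tfrac12\wB^\top\Acal_2\wB+\tfrac12\wB_*^\top\Acal_2\wB_*+\wB^\top\Acal_2\wB_* = \tfrac12(\wB+\wB_*)^\top\Acal_2(\wB+\wB_*)$. (The fourth-moment half of Assumption~\ref{assump:symmetric} is not needed for this lemma; it will enter only in later variance-style arguments.)
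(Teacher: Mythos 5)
Your upper-bound argument is the same as the paper's (ReLU is $1$-Lipschitz). For the lower bound you take a genuinely different route: you partition the second moment by the signs of $a:=\xB^\top\wB$ and $b:=\xB^\top\wB_*$ and reduce the inequality to PSD-ness of $\Acal_1,\Acal_2$, whereas the paper never partitions — it expands the square, uses Assumption~\ref{assump:symmetric} to establish $\Ebb[(\relu(a)-\relu(b))^2]=\Ebb[(\relu(-a)-\relu(-b))^2]$, and then applies the pointwise inequality $(a-b)^2=\big((\relu(a)-\relu(b))-(\relu(-a)-\relu(-b))\big)^2\le 2(\relu(a)-\relu(b))^2+2(\relu(-a)-\relu(-b))^2$. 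Your route has the advantage of making the slack in the factor $1/4$ visible (the remainder is an explicit sum of nonnegative quadratic forms), but as written it has a gap.

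The gap: your identity $\HB = 2\Acal_1 + 2\Acal_2$ and your quadrant decomposition of $\excessrisk(\wB)$ silently assume $\Pbb(a=0)=\Pbb(b=0)=0$. This is not guaranteed by Assumption~\ref{assump:symmetric}; in fact the paper's flagship example, the symmetric Bernoulli distribution (Assumption~\ref{assump:bernoulli}), has atoms, and $\Pbb(\xB^\top\wB=0)>0$ whenever some coordinate of $\wB$ vanishes. Concretely, take $d=2$, symmetric Bernoulli, $\wB=(0,1)$, $\wB_*=(1,0)$: then $a$ and $b$ are never both nonzero, so $\Acal_1=\Acal_2=0$ and your identity would read $\HB=0$, which is false; your formula would also give $\excessrisk(\wB)=0$, while the true value is $1/2$. (The lemma's conclusion still holds there — $\excessrisk(\wB)=1/2\ge\frac14\cdot 1=\frac14\|\Delta\|_\HB^2$ — so it is the proof, not the statement, that breaks.)

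The fix is not hard and stays within your framework. Introduce $\Bcal_1:=\Ebb[\xB\xB^\top\ind{a>0,b=0}]$ and $\Bcal_2:=\Ebb[\xB\xB^\top\ind{a=0,b>0}]$. Applying the first identity of Assumption~\ref{assump:symmetric} with $\uB=\vB=\wB$ gives $\Ebb[\xB\xB^\top\ind{a>0}]=\Ebb[\xB\xB^\top\ind{a<0}]$; combining this with your two applications of the assumption (on $(\wB,\wB_*)$ and $(\wB,-\wB_*)$) forces $\Ebb[\xB\xB^\top\ind{a<0,b=0}]=\Bcal_1$ and $\Ebb[\xB\xB^\top\ind{a=0,b<0}]=\Bcal_2$. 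The corrected bookkeeping is then
\[
\|\Delta\|_\HB^2 = 2\Delta^\top\Acal_1\Delta + 2\Delta^\top\Acal_2\Delta + 2\wB^\top\Bcal_1\wB + 2\wB_*^\top\Bcal_2\wB_*,
\]
\[
\excessrisk(\wB) = \Delta^\top\Acal_1\Delta + \wB^\top\Acal_2\wB + \wB_*^\top\Acal_2\wB_* + \wB^\top\Bcal_1\wB + \wB_*^\top\Bcal_2\wB_*,
\]
which yields $\excessrisk(\wB)-\tfrac14\|\Delta\|_\HB^2 = \tfrac12\Delta^\top\Acal_1\Delta + \tfrac12(\wB+\wB_*)^\top\Acal_2(\wB+\wB_*) + \tfrac12\wB^\top\Bcal_1\wB + \tfrac12\wB_*^\top\Bcal_2\wB_* \ge 0$ unconditionally. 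With that patch your argument is complete and, as noted, more informative than the paper's. Your remark that the fourth-moment half of Assumption~\ref{assump:symmetric} is unused here is correct and matches the paper (which proves the lemma under only the second-moment symmetry condition of Assumption~\ref{assump:symmetric:moment}\ref{item:symmetric:moment-2-2}).
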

Even though the excess risk \eqref{eq:excess-risk} could be non-convex locally, Lemma~\ref{thm:loss-landscape} suggests that the landscape of the excess risk in a large scale is ``approximately'' quadratic in the sense of ignoring some multiplicative factors. 
This landscape enables us to build sharp upper and lower bounds on the excess risk by bounding a simpler quadratic, $\|\wB - \wB_*\|^2_{\HB}$.

\paragraph{Operators.}
We follow \citet{zou2021benign,wu2022iterate} and introduce some matrix operators for applying the operator methods  to analyze \eqref{eq:tron}.
Firstly, we denote the covariance of the~\eqref{eq:tron} iterates by
\begin{equation}\label{eq:tron:cov}
        \AB_t := \Ebb ( \wB_{t} - \wB_*  )( \wB_{t} - \wB_*  )^{\top},\quad t\ge 0.
\end{equation}
We next define a set of linear operators on the matrix space:
\begin{equation}\label{eq:operators}
    \begin{gathered}
    \Ical := \IB \otimes \IB,\quad 
    \Mcal := \Ebb [\xB^{\otimes 4} ],\quad 
    \widetilde{\Mcal}:= \HB\otimes \HB,\\
    \Tcal(\gamma) :=  \IB\otimes \HB + \HB \otimes \IB - \gamma \cdot \Mcal,  \\
    \widetilde{\Tcal}(\gamma) := \IB\otimes \HB + \HB \otimes \IB - \gamma \cdot \widetilde{\Mcal}.
\end{gathered}
\end{equation}

\paragraph{A Key Lemma.}
The next lemma is the key to our analysis, which relates the covariance of a sequence of \eqref{eq:tron} iterates for a ReLU regression problem
with the covariance of a sequence of ``imaginary'' SGD iterates for an ``imaginary'' linear regression problem. 
\begin{lemma}[Generic bounds on the GLM-tron iterates]\label{thm:tron:covariance}
    Under Assumptions \ref{assump:symmetric} and \ref{assump:noise:well-specified}, the following holds for \eqref{eq:tron:cov}:
    \begin{enumerate}[label=(\Alph*),nosep]
        \item \(
        \AB_{t+1} \preceq \big(\Ical - \frac{\gamma_t}{2}\cdot \Tcal(2\gamma_t)\big) \circ  \AB_t  +  \gamma_t^2 \sigma^2 \cdot \HB;
  \)
  \item \(\AB_{t+1} \succeq \Big(\Ical - \frac{\gamma_t}{2}\cdot \Tcal\big(\frac{\gamma_t}{2}\big)\Big) \circ  \AB_t  +  \frac{\gamma_t^2 \sigma^2}{4} \cdot \HB.\)
    \end{enumerate}
\end{lemma}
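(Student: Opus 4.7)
The plan is to expand the recursion at the level of the error $\eB_t := \wB_t - \wB_*$. Writing $\phi_t := \relu(\xB_t^\top \wB_{t-1}) - \relu(\xB_t^\top \wB_*)$ and $\eta_t := y_t - \relu(\xB_t^\top \wB_*)$, the \eqref{eq:tron} update becomes $\eB_t = \eB_{t-1} - \gamma_t \phi_t \xB_t + \gamma_t \eta_t \xB_t$. Under Assumption~\ref{assump:noise:well-specified} we have $\Ebb[\eta_t \mid \xB_t] = 0$, so forming outer products and taking expectation kills every cross term involving $\eta_t$ and yields
\[
\AB_t = \AB_{t-1} - \gamma_t\,\Ebb[\phi_t(\xB_t\eB_{t-1}^\top + \eB_{t-1}\xB_t^\top)] + \gamma_t^2\,\Ebb[\phi_t^2\xB_t\xB_t^\top] + \gamma_t^2\,\Ebb[\eta_t^2\xB_t\xB_t^\top].
\]
The drift term evaluates \emph{exactly} via the second-moment half of Assumption~\ref{assump:symmetric}, which implies $\Ebb[\xB\xB^\top\ind{\xB^\top\vB>0}]=\tfrac12\HB$ for every $\vB$: splitting $\phi_t\xB_t=\ind{\xB_t^\top\wB_{t-1}>0}\xB_t\xB_t^\top\wB_{t-1}-\ind{\xB_t^\top\wB_*>0}\xB_t\xB_t^\top\wB_*$ and conditioning on $\wB_{t-1}$ gives $\Ebb[\phi_t\xB_t\mid\wB_{t-1}]=\tfrac12\HB\eB_{t-1}$, so the drift equals $-\tfrac{\gamma_t}{2}(\IB\otimes\HB+\HB\otimes\IB)(\AB_{t-1})$, which is the linear part shared by (A) and (B).

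For (A) the remaining second-moment term is handled by the $1$-Lipschitzness of ReLU: $\phi_t^2\le(\xB_t^\top\eB_{t-1})^2$ pointwise, hence $\Ebb[\phi_t^2\xB_t\xB_t^\top]\preceq\Mcal(\AB_{t-1})$. Combined with the standard well-specified noise bound $\Ebb[\eta_t^2\xB_t\xB_t^\top]\preceq\sigma^2\HB$, the three ingredients assemble into $(\Ical-\tfrac{\gamma_t}{2}\Tcal(2\gamma_t))\circ\AB_{t-1}+\gamma_t^2\sigma^2\HB$, proving (A).

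The main obstacle is (B), since $\phi_t\equiv 0$ on the entire half-space $\{\xB_t^\top\wB_{t-1}<0,\,\xB_t^\top\wB_*<0\}$, so no pointwise lower bound on $\phi_t^2$ in terms of $(\xB_t^\top\eB_{t-1})^2$ is available. My plan is to write $\phi_t$ as a line integral along the segment from $\wB_*$ to $\wB_{t-1}$, giving the exact representation $\phi_t=(\xB_t^\top\eB_{t-1})\,\mu_t$ where $\mu_t:=\int_0^1\ind{\xB_t^\top(\wB_*+s\eB_{t-1})>0}\,ds\in[0,1]$. Under the sign flip $\xB_t\mapsto-\xB_t$ one has $\mu_t\mapsto 1-\mu_t$ while $(\xB_t^\top\eB_{t-1})^2\xB_t\xB_t^\top$ is invariant, so Assumption~\ref{assump:symmetric} (applied to each indicator in the double-integral expansion of $\mu_t^2$ and aggregated) yields
\[
\Ebb[(\xB_t^\top\eB_{t-1})^2\mu_t^2\xB_t\xB_t^\top] = \Ebb[(\xB_t^\top\eB_{t-1})^2(1-\mu_t)^2\xB_t\xB_t^\top].
\]
Averaging the two sides and invoking the elementary bound $\mu^2+(1-\mu)^2\ge\tfrac12$ on $[0,1]$ delivers $\Ebb[\phi_t^2\xB_t\xB_t^\top]\succeq\tfrac14\Mcal(\AB_{t-1})$, which together with the analogous $\Ebb[\eta_t^2\xB_t\xB_t^\top]\succeq\tfrac{\sigma^2}{4}\HB$ reassembles into $(\Ical-\tfrac{\gamma_t}{2}\Tcal(\gamma_t/2))\circ\AB_{t-1}+\tfrac{\gamma_t^2\sigma^2}{4}\HB$, establishing (B). This sign-flip symmetrization is the one place where the non-linearity of ReLU must be confronted head-on, and it is precisely what forces the factor of $4$ gap between (A) and (B).
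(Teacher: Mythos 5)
Your proposal is correct in substance and arrives at exactly the recursions the paper proves, but by a genuinely different route. For part (A) you compute the first-order (drift) term exactly from the half-space second-moment identity $\Ebb[\xB\xB^\top\ind{\xB^\top \vB>0}]=\tfrac12\HB$ and then use only the $1$-Lipschitzness of $\relu$ to get $\Ebb[\phi_t^2\xB_t\xB_t^\top]\preceq\Mcal\circ\AB_{t-1}$; the paper instead expands into contraction/fluctuation/noise pieces, uses the weighted fourth-moment symmetry to show that the fluctuation quadratic plus the cross terms are jointly $\preceq 0$, and then bounds the remaining fourth-moment term by $\Mcal$. Your (A) is leaner, needing no fourth-moment symmetry at all. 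For part (B), the paper lower-bounds the cross term via $\uB\vB^\top+\vB\uB^\top\succeq-\tfrac12(\uB-\vB)^{\otimes 2}$ and the identity $\Ebb[\xB^{\otimes4}\ind{\xB^\top\uB>0}]=\tfrac12\Mcal$ to retain a $\tfrac{\gamma_t^2}{4}\Mcal\circ\AB_{t-1}$ term, whereas you use the exact representation $\phi_t=(\xB_t^\top\eB_{t-1})\mu_t$, a sign-flip symmetrization of $\mu_t$ against $1-\mu_t$, and $\mu^2+(1-\mu)^2\ge\tfrac12$ to obtain $\Ebb[\phi_t^2\xB_t\xB_t^\top]\succeq\tfrac14\Mcal\circ\AB_{t-1}$ directly; the constants match and your route makes the origin of the factor $4$ transparent. (Your noise bounds $\tfrac{\sigma^2}{4}\HB\preceq\Ebb[\eta_t^2\xB_t\xB_t^\top]\preceq\sigma^2\HB$ are at the same level of implicit assumption as the paper's own step, which silently sets $\Ebb[\epsilon_t^2\xB_t\xB_t^\top]=\sigma^2\HB$, so no additional issue there.)

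One point you should make explicit: with $\uB_s:=\wB_*+s\,\eB_{t-1}$, the symmetrization in (B) requires, for each pair $(s,r)$, the identity $\Ebb[(\xB^\top\eB_{t-1})^2\xB\xB^\top\ind{\xB^\top\uB_s>0,\xB^\top\uB_r>0}]=\Ebb[(\xB^\top\eB_{t-1})^2\xB\xB^\top\ind{\xB^\top\uB_s<0,\xB^\top\uB_r<0}]$, i.e.\ a sector-restricted fourth-moment sign-flip symmetry in which the weight direction $\eB_{t-1}$ is tied to neither indicator. The second display of Assumption~\ref{assump:symmetric} only covers weights of the form $(\xB^\top\vB)^2$ with $\vB$ equal to one of the indicator directions, and rewriting $(\xB^\top\eB_{t-1})^2$ through $\uB_s-\uB_r=(s-r)\eB_{t-1}$ still leaves a mixed term $(\xB^\top\uB_s)(\xB^\top\uB_r)$ that the stated condition does not control. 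So, strictly, you are invoking a mildly stronger fourth-moment condition than the letter of Assumption~\ref{assump:symmetric}. It does hold whenever $\xB$ and $-\xB$ are equal in distribution (the paper's motivating examples), and it is of the same nature as condition (C) of the paper's relaxed Assumption~\ref{assump:symmetric:moment}, which the paper's own lower-bound proof needs; but you should state the required identity as an assumption in its own right rather than attributing it to Assumption~\ref{assump:symmetric} verbatim.
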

In the remaining part of this section, we will derive sharp risk bounds for \eqref{eq:tron} in high-dimensional ReLU regression based on Lemma~\ref{thm:tron:covariance} and the results for SGD in high-dimensional linear regression developed by \citet{zou2021benign,zou2021benefits,wu2022iterate,wu2022power}.

\subsection{Symmetric Bernoulli Distributions}

In order to gain intuitions on the behaviors of \eqref{eq:tron}, we start with a simple, symmetric Bernoulli  data model defined as follows. 
Note that this is just the symmetrization of the one-hot data model considered in \citet{zou2021benefits}.
\begin{assumption}
[Symmetric Bernoulli distribution]\label{assump:bernoulli}
Let $(\eB_i)_{i\ge 1}$ be a set of orthogonal basis for $\Hbb$.
    Assume that
    \(\Pbb\{\xB = \eB_i\} = \Pbb \{\xB = -\eB_i\} = \lambda_i/2\) for $i\ge1$, 
    where $\lambda_i \ge 0$ and $\sum_i \lambda_i = 1$.
\end{assumption}
Clearly Assumption~\ref{assump:bernoulli} implies Assumption~\ref{assump:symmetric}.
We now present our instance-wise sharp excess risk bounds for \eqref{eq:tron} under Assumption~\ref{assump:bernoulli}.

\begin{theorem}[Risk Bounds for GLM-tron]\label{thm:tron:bernoulli}
Suppose that Assumptions \ref{assump:noise:well-specified} and \ref{assump:bernoulli} hold.
Let $\wB_{N}$ be the output of \eqref{eq:tron} with stepsize scheduler \eqref{eq:geometry-tail-decay-lr}.
Assume that $N > 100$. 
Let $\Neff := N / \log(N)$.
Suppose that $ \gamma_0 < 1/2$. 
\begin{enumerate}[label=(\Alph*),nosep,leftmargin=*]
    \item
For every $k^* \ge 0$ it holds that
\begin{align*}
    \Ebb \excessrisk (\wB_{N})
    &\lesssim 
    \big\| \wB_0 - \wB_* \big\|^2_{\prod_{t=1}^{N}(\IB-\frac{\gamma_t}{2}\HB)\HB} + \sigma^2 \cdot \frac{\Deff}{\Neff},
\end{align*}
where $\Deff$ is defined by
\begin{equation}\label{eq:effective-dim}
    \Deff := k^* + \Neff^2 \gamma^2_0 \cdot \sum_{i>k^*} \lambda_i^2.
\end{equation}
\item
For $\Deff$ defined by \eqref{eq:effective-dim} with
\begin{equation}\label{eq:opt-index-sets}
    k^* := \max\{k: \lambda_k \ge 1/(\gamma_0 \Neff)\},
\end{equation}
it holds that
\begin{align*}
   \Ebb \excessrisk (\wB_{N})
    &\gtrsim 
    \big\| \wB_0 - \wB_* \big\|^2_{\prod_{t=1}^{N}(\IB-\gamma_t\HB)\HB} + 
     \sigma^2  \cdot  \frac{\Deff}{\Neff}.
\end{align*}
\end{enumerate}
\end{theorem}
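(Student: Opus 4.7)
The plan is to exploit Assumption~\ref{assump:bernoulli} to reduce the operator recursions of Lemma~\ref{thm:tron:covariance} to decoupled scalar recursions along the eigenbasis $\{\eB_i\}$ of $\HB$, then combine with the landscape sandwich of Lemma~\ref{thm:loss-landscape} to convert these into matching upper and lower bounds on $\Ebb\excessrisk(\wB_N)$. Concretely, under Assumption~\ref{assump:bernoulli} a direct computation gives $(\Mcal\circ\AB)_{jk}=\Ebb[x_jx_k(\xB^\top\AB\xB)]=\lambda_j A_{jj}\delta_{jk}$; since $\HB$ is already diagonal in $\{\eB_i\}$, the map $\AB\mapsto \HB\AB+\AB\HB-\gamma\Mcal\circ\AB$ sends diagonals to diagonals and, crucially, the $(i,i)$-entry of the image depends only on $(\AB)_{ii}$. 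Reading off the $(i,i)$-entry of the PSD inequalities in Lemma~\ref{thm:tron:covariance} (which is preserved because $\eB_i\eB_i^\top\succeq 0$), and writing $a_i^{(t)}:=(\AB_t)_{ii}$, one obtains
\begin{equation*}
a_i^{(t+1)} \le \bigl(1 - \gamma_t \lambda_i (1-\gamma_t)\bigr)\, a_i^{(t)} + \gamma_t^2 \sigma^2 \lambda_i,
\end{equation*}
together with the analogous lower recursion whose contraction factor is $1-\gamma_t\lambda_i+\gamma_t^2\lambda_i/4$ and whose noise term carries a factor $1/4$. Because $\gamma_0<1/2$ (so $\gamma_t(1-\gamma_t)\asymp\gamma_t$ and $\gamma_t\lambda_i<1/2$), both contractions are $1-\Theta(\gamma_t\lambda_i)\in[1/2,1]$, and Lemma~\ref{thm:loss-landscape} yields $\Ebb\excessrisk(\wB_N)\eqsim\sum_i\lambda_i\,a_i^{(N)}$.

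Next I would split $a_i^{(N)}=b_i^{(N)}+v_i^{(N)}$ in the usual bias/variance manner: $b_i$ follows the recursion initialized at $(\wB_0-\wB_*)_i^2$ with $\sigma^2=0$, and $v_i$ starts at $0$ with only the noise term. Unrolling the bias recursion gives $b_i^{(N)}\le\prod_{t=1}^N(1-\tfrac{\gamma_t}{2}\lambda_i)\cdot(\wB_0-\wB_*)_i^2$ (and $\ge\prod_{t=1}^N(1-\gamma_t\lambda_i)\cdot(\wB_0-\wB_*)_i^2$ for the lower bound), so summing against $\lambda_i$ recovers precisely the weighted norms $\|\wB_0-\wB_*\|^2_{\prod_t(\IB-\frac{\gamma_t}{2}\HB)\HB}$ and $\|\wB_0-\wB_*\|^2_{\prod_t(\IB-\gamma_t\HB)\HB}$ stated in the theorem. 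For the variance,
\begin{equation*}
\sum_i \lambda_i\, v_i^{(N)} \eqsim \sigma^2 \sum_i \lambda_i^2 \sum_{t=1}^N \gamma_t^2 \!\!\prod_{s=t+1}^N\!\! \bigl(1 - \Theta(\gamma_s \lambda_i)\bigr),
\end{equation*}
which I would evaluate using the structure of the geometric scheduler \eqref{eq:geometry-tail-decay-lr}: the scheduler spends $\Neff$ iterations at each halving of $\gamma_t$, and a phase-by-phase analysis collapses the per-coordinate variance to $\min\{1/\Neff,\,\gamma_0^2\Neff\lambda_i^2\}$ up to absolute constants. Splitting the resulting sum at any threshold $k^*$ gives the upper bound $\sigma^2\Deff/\Neff$ with $\Deff=k^*+\gamma_0^2\Neff^2\sum_{i>k^*}\lambda_i^2$, completing part (A).

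The main obstacle is (B): to realize a matching lower bound at the specific threshold $k^*$ in \eqref{eq:opt-index-sets}, both regimes of the $\min$ must be attained simultaneously. For $i\le k^*$ the condition $\gamma_0\Neff\lambda_i\gtrsim 1$ ensures that sufficiently many tail stepsizes saturate the contraction and force $v_i^{(N)}\gtrsim\sigma^2/\Neff$ per coordinate; for $i>k^*$ the contraction is too weak to dissipate the accumulated noise over $\Neff$ steps, giving $v_i^{(N)}\gtrsim\gamma_0^2\Neff\sigma^2\lambda_i^2$. The per-coordinate calculations here track, line for line, the SGD lower-bound arguments of \citet{wu2022iterate,wu2022power} on a diagonal linear-regression instance; since the scalar recursion derived above has exactly the SGD structure, those arguments apply verbatim on each eigencoordinate, with the only bookkeeping difference being the overall factor $1/4$ inherited from Lemma~\ref{thm:tron:covariance}(B), which is absorbed into the hidden constants. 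Summing the per-coordinate lower bounds against $\lambda_i$ and invoking the $0.25\|\cdot\|_\HB^2$ lower side of Lemma~\ref{thm:loss-landscape} then yields (B) and makes the bounds instance-wise matching up to absolute constants.
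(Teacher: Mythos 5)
Your proposal takes essentially the same route as the paper: reduce the matrix recursions of Lemma~\ref{thm:tron:covariance} to diagonal (per-eigencoordinate scalar) recursions via the Bernoulli identity $\Mcal\circ\AB=\HB\,\diag(\AB)$, split bias/variance, evaluate the sums under the geometric stepsize scheduler using the existing SGD machinery (Lemma~\ref{lemma:tron:misspecified:prod-sum-bounds} / \citet{zou2021benefits,wu2022iterate}), and convert back to excess risk via Lemma~\ref{thm:loss-landscape}. This matches the paper's proof; the only small slip is a bookkeeping typo in the lower-bound per-coordinate claims (you state $v_i^{(N)}\gtrsim\sigma^2/\Neff$ and $v_i^{(N)}\gtrsim\gamma_0^2\Neff\sigma^2\lambda_i^2$, whereas these are really bounds on the risk contribution $\lambda_i v_i^{(N)}$), which does not affect the argument.
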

\begin{proof}[Proof Sketch]
    We first use Lemmas \ref{thm:loss-landscape} and \ref{thm:tron:covariance} to relate \eqref{eq:tron} for ReLU regression problems to SGD for linear regression problems. Then we invoke the one-hot analysis in \citet{zou2021benefits} to get the results.
\end{proof}

\subsection{Hypercontractive Distributions}
We are ready to present our results for the more interesting distributions  that satisfy the \emph{hypercontractivity} conditions. 
\begin{assumption}[Hypercontractivity conditions]\label{assump:gaussian}
Assume that the fourth moment of $\xB$ is finite and:
\begin{enumerate}[label=(\Alph*),nosep]
    \item \label{item:gaussian:upper} There is a constant $\alpha > 0$, such that for every PSD matrix $\AB$, we have
\begin{equation*}
\Ebb [\xB \xB^\top \AB \xB \xB^\top] \preceq \alpha \cdot \tr (\HB \AB) \cdot \HB.
\end{equation*}
Clearly, it must hold that $\alpha \ge 1$. 
\item \label{item:gaussian:lower} There is a constant $\beta > 0$, such that for every PSD matrix $\AB$, we have
\[
\Ebb [\xB \xB^\top \AB \xB \xB^\top] - \HB \AB \HB   \succeq  \beta \cdot \tr (\HB \AB) \cdot \HB.
\]
\end{enumerate}
\end{assumption}
One can verify that Assumption~\ref{assump:gaussian} holds with $\alpha=3$ and $\beta=1$ when $\xB \sim \Ncal(0, \HB)$.
Moreover, Assumption~\ref{assump:gaussian}\ref{item:gaussian:upper} holds when $\HB^{-1/2}\xB$ is sub-Gaussian or sub-Exponential
and Assumption~\ref{assump:gaussian}\ref{item:gaussian:lower} holds when $\HB^{-1/2}\xB$ follows a multi-dimensional spherically symmetric distribution \citep{zou2021benign,wu2022iterate}.
For more examples of Assumption~\ref{assump:gaussian} we refer the readers to \citet{zou2021benign,wu2022iterate}.

\begin{theorem}[Risk Bounds for GLM-tron]\label{thm:tron:gaussian}
Suppose that Assumptions \ref{assump:symmetric} and \ref{assump:noise:well-specified} hold.
Let $\wB_{N}$ be the output of \eqref{eq:tron} with stepsize scheduler \eqref{eq:geometry-tail-decay-lr}.
Assume that $N > 100$. 
Let $\Neff := N / \log(N)$.
\begin{enumerate}[label=(\Alph*),nosep]
    \item
If in addition Assumption~\ref{assump:gaussian}\ref{item:gaussian:upper} holds,
then for $\gamma_0 < 1/(4\alpha(\tr(\HB)) )$ it holds that
\begin{align*}
   & \Ebb \excessrisk (\wB_{N})
    \lesssim 
    \bigg\| \prod_{t=1}^{N}\Big(\IB-\frac{\gamma_t}{2}\HB\Big)(\wB_0 - \wB_*) \bigg\|^2_\HB \\
    &\quad + \Big( \alpha \big\| \wB_0 - \wB_* \big\|^2_{\frac{\IB_{0:k^*} }{\Neff \gamma_0} + \HB_{k^*:\infty}} + \sigma^2 \Big) \cdot \frac{\Deff}{\Neff},
\end{align*}
where $\Deff$ is defined by \eqref{eq:effective-dim} and $k^*\ge 0$ is arbitrary.
\item
If in addition Assumption~\ref{assump:gaussian}\ref{item:gaussian:lower} holds, then
for $\gamma_0 < 1/\lambda_1$,
it holds that
\begin{align*}
    \Ebb \excessrisk (\wB_{N})
    &\gtrsim 
    \bigg\| \prod_{t=1}^{N}\Big(\IB-\frac{\gamma_t}{2}\HB\Big)(\wB_0 - \wB_*) \bigg\|^2_\HB  \\
    &\quad + 
     \big( \beta \|\wB_0 - \wB_* \|^2_{\HB_{k^*:\infty}} + \sigma^2 \big) \cdot  \frac{\Deff}{\Neff},
\end{align*}
where $\Deff$ is defined by \eqref{eq:effective-dim} and $k^*$ is defined by \eqref{eq:opt-index-sets}.
\end{enumerate}
\end{theorem}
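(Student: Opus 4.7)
The plan is to reduce Theorem~\ref{thm:tron:gaussian} to existing instance-wise risk bounds for SGD on high-dimensional linear regression with geometrically decaying stepsizes \citep{zou2021benign,wu2022iterate}, by chaining the two structural results proved earlier in this section. Lemma~\ref{thm:loss-landscape} sandwiches $\mathbb{E}\excessrisk(\wB_N)$ by $\tr(\HB\AB_N)$ up to universal constants, so it suffices to control $\AB_N$ in the positive-semidefinite order. Lemma~\ref{thm:tron:covariance} then provides two-sided operator recursions for $\AB_t$ of the same form as those satisfied by SGD iterate covariances in linear regression, except that $\Tcal(\gamma)$ is built from the genuine fourth-moment tensor $\Mcal=\Ebb[\xB^{\otimes 4}]$ rather than the surrogate $\widetilde{\Mcal}=\HB\otimes\HB$. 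Thus the proof reduces to pushing Assumption~\ref{assump:gaussian} through these recursions so as to align $\Mcal$ with $\widetilde{\Mcal}$, and then invoking the established linear-SGD bias/variance analysis under the schedule~\eqref{eq:geometry-tail-decay-lr}.

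For the upper bound (A), I would unroll Lemma~\ref{thm:tron:covariance}(A) to split $\AB_N$ into a bias part driven by $(\wB_0-\wB_*)(\wB_0-\wB_*)^\top$ and a variance part driven by $\sigma^2\HB$. Assumption~\ref{assump:gaussian}\ref{item:gaussian:upper} gives $\Mcal\circ\AB\preceq\alpha\,\tr(\HB\AB)\,\HB$, and the stepsize restriction $\gamma_0<1/(4\alpha\tr(\HB))$ then forces the operator $\Ical-(\gamma_t/2)\,\Tcal(2\gamma_t)$ to act as a contraction comparable to $\Ical-(\gamma_t/2)(\IB\otimes\HB+\HB\otimes\IB)$ on the dominant scale. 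The bias part collapses to the geometric contraction $\|\prod_{t}(\IB-\gamma_t\HB/2)(\wB_0-\wB_*)\|_\HB^2$, while the variance part, after the hypercontractivity inflation that converts residual bias into ``ghost noise,'' picks up the extra $\alpha\|\wB_0-\wB_*\|^2_{\IB_{0:k^*}/(\Neff\gamma_0)+\HB_{k^*:\infty}}$ factor. Specializing the effective-dimension calculation for the geometric tail-decay schedule then yields the $\Deff/\Neff$ rate for arbitrary $k^*\ge 0$.

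The lower bound (B) proceeds symmetrically. Lemma~\ref{thm:loss-landscape} gives $\mathbb{E}\excessrisk(\wB_N)\gtrsim \tr(\HB\AB_N)$, and Lemma~\ref{thm:tron:covariance}(B) provides the matching $\succeq$ recursion. Assumption~\ref{assump:gaussian}\ref{item:gaussian:lower}, which states $\Mcal\circ\AB-\HB\AB\HB\succeq\beta\,\tr(\HB\AB)\,\HB$, under the milder stepsize condition $\gamma_0<1/\lambda_1$, delivers the matching operator lower bound on the recursion. Specializing the corresponding linear-SGD lower bound to the index $k^*$ defined in~\eqref{eq:opt-index-sets} then yields the bias contribution $\|\prod_{t}(\IB-\gamma_t\HB/2)(\wB_0-\wB_*)\|_\HB^2$ together with $(\beta\|\wB_0-\wB_*\|^2_{\HB_{k^*:\infty}}+\sigma^2)\cdot \Deff/\Neff$.

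The delicate step throughout is the passage from the genuine fourth-moment operator $\Mcal$ appearing in $\Tcal(\gamma)$ to the surrogate $\widetilde{\Mcal}$ used in the linear-SGD machinery. Assumption~\ref{assump:gaussian} is designed precisely to enable this translation at the operator level, but one must quantify the inflation of bias into ghost noise tightly enough for the upper bound to match the lower bound up to constants, and one must do so without losing the benign-overfitting flavor of the $\Deff$ decomposition. This is the technical novelty alluded to in Contribution~4: rather than propagate the full iterate covariance through the recursion, one propagates only its diagonal in the eigenbasis of $\HB$, which is enough to absorb the nonlinearity of $\relu$ while preserving the sharpness of the underlying linear-regression analysis.
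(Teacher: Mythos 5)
Your proposal is correct and follows the same route as the paper: sandwich $\mathbb{E}\excessrisk(\wB_N)$ by $\tr(\HB\AB_N)$ via Lemma~\ref{thm:loss-landscape}, establish the two-sided operator recursions on $\AB_t$ via Lemma~\ref{thm:tron:covariance}, and observe that these are exactly the bias/variance recursions for SGD on a linear-regression problem with stepsizes rescaled by constant factors; the paper then simply invokes Corollary~3.4 of \citet{wu2022power} (restated in the appendix), which already bakes in the passage from $\Mcal$ to $\widetilde{\Mcal}$ under Assumption~\ref{assump:gaussian} and the ``ghost-noise'' inflation of the bias iterate that produces the $\alpha\|\wB_0-\wB_*\|^2_{\IB_{0:k^*}/(\Neff\gamma_0)+\HB_{k^*:\infty}}$ prefactor you describe. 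So the ``delicate step'' you highlight is real but is imported wholesale from prior work rather than re-derived here. One misattribution worth flagging: the diagonal-propagation technique you credit in your last paragraph is \emph{not} used in this theorem. In the well-specified case Lemma~\ref{thm:tron:covariance} delivers a genuine PSD ordering on the full matrix $\AB_t$, and the linear-SGD machinery is applied directly. Working only with the diagonal of $\AB_t$ in the $\HB$-eigenbasis is the new device introduced for the \emph{misspecified} analysis (Theorem~\ref{thm:tron:gaussian:misspecified} and Lemma~\ref{thm:misspecified:covariance-bound}), where the noise--data correlation prevents a PSD bound; it also appears, for a different reason, in the Bernoulli proof of Theorem~\ref{thm:tron:bernoulli}. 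This does not affect the correctness of your argument for Theorem~\ref{thm:tron:gaussian}, but the framing overstates what is needed here.
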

\begin{proof}[Proof Sketch]
    We first use Lemmas \ref{thm:loss-landscape} and  \ref{thm:tron:covariance} to relate \eqref{eq:tron} for ReLU regression problems to SGD for linear regression problems. Then we invoke Corollary 3.4 in \citet{wu2022power} to get the results.
\end{proof}

These bounds in Theorem \ref{thm:tron:gaussian} match those of SGD for high-dimensional linear regression shown in \citet{wu2022power} (and also \citet{zou2021benign,wu2022iterate}) and can be interpreted in a similar manner.
Specifically,
in the upper bound, the first error term shows that $\wB_N$ recovers the true model parameter geometrically at each dimension and is at most 
\[
\|\wB_0 - \wB_* \|_2^2 / (\gamma_0 \Neff),
\]
and the second error term is at most 
\[
\Big( \alpha \big\| \wB_0 - \wB_* \big\|^2_2 + \sigma^2 \Big) \cdot \frac{\Deff}{\Neff}.
\]
Provided a bounded signal-to-noise ratio and a constant initial stepsize (which might not be optimal), the expected risk decreases at a rate of $\Ocal(\Deff / \Neff)$. 
Moreover, the lower bound justifies the sharpness of the upper bound.

We remark that $\Deff$ is independent of the ambient dimension, and is small so long as the spectrum of $\HB$ decays fast. This enables \eqref{eq:tron} to achieve a small excess risk even in the overparameterized regime.

The following corollary provides three concrete examples.
\begin{mycorollary}\label{thm:examples}
Under the same conditions as Theorem \ref{thm:tron:gaussian}, suppose that $\gamma_0 = 1/(4 \alpha\tr(\HB))$, and $\|\wB_0 - \wB^* \|_2 $ is finite. Recall the eigenspectrum of $\HB$ is $(\lambda_k)_{k \ge 1}$.
\begin{enumerate}
    \item If $\lambda_k = k^{-(1+r)}$ for some constant $r > 0$, then the excess risk is $\Ocal\big(   N^{\frac{-r}{1+r}}\cdot \log^{\frac{r}{1+r}} (N) \big)$.
    \item If $\lambda_k = k^{-1} \log^{-r}(k+1)$ for some constant $r > 1$, then the excess risk is $\Ocal\big( \log^{-r} (N) \big)$.
    \item If $\lambda_k = 2^{-k}$, then the excess risk is $\Ocal\big( N^{-1}  \log^2 (N) \big)$.
\end{enumerate}
\end{mycorollary}

\paragraph{Iterate Averaging.}
Theorem \ref{thm:tron:gaussian} focuses on the last iterate of \eqref{eq:tron} with decaying stepsize~\eqref{eq:geometry-tail-decay-lr}. 
We remark that this theorem can also be extended to constant stepsize GLM-tron with iterate averaging. See Theorem \ref{append:thm:tron:iterate-avg} in Appendix \ref{append:sec:iterate-avg}, where we show matching upto constant factor upper and lower risk bounds for constant stepsize GLM-tron with iterate averaging.
It is proved similarly by invoking Lemmas \ref{thm:loss-landscape} and \ref{thm:tron:covariance} and related results from \citet{zou2021benign}.

\paragraph{Applications in the Classical Regime.}
In the next corollary,  we apply our instance-dependent risk bounds to the classical regime, i.e., finite dimension and/or bounded $\ell_2$-norm.

\begin{corollary}[Classical regime]\label{thm:well-specified:recover}
Under the setting of Theorem \ref{thm:tron:gaussian}, in addition assume that \( \sigma^2\lesssim 1,\ \|\wB_0 - \wB_*\|_2 \lesssim 1,\ \lambda_1 \lesssim 1.\)
We then have the following:
\begin{enumerate}[label=(\Alph*),nosep]
    \item \label{item:well-specified:slow-rate}
If $\tr(\HB)\lesssim 1$, then 
by choosing 
\( \gamma_0 \eqsim 1/{\sqrt{\Neff}}\) and \( k^* := \max\{k: \lambda_k \ge 1/\sqrt{\Neff}  \}\),
we have 
\[\Ebb \Delta(\wB_N) \lesssim \frac{1}{\sqrt{\Neff}} = \sqrt{\frac{\log(N)}{N}}.\]
\item \label{item:well-specified:fast-rate}
If $d$ is finite, then by choosing 
\(\gamma_0 \eqsim 1/\tr(\HB)\) and \(k^* = d,\)
we have 
\[\Ebb \Delta(\wB_N) \lesssim \frac{d}{\Neff} = \frac{d\log(N)}{N}.\]
\end{enumerate}
It is worth remarking that the $\log(N)$ factors in the above rates can be removed when considering constant-stepsize GLM-tron with iterate-averaging (see Theorem \ref{append:thm:tron:iterate-avg}). 
\end{corollary}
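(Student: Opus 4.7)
The plan is to invoke the upper bound of Theorem~\ref{thm:tron:gaussian}(A),
\[
\Ebb \Delta(\wB_N) \lesssim \Bigl\| \prod_{t=1}^{N}\bigl(\IB - \tfrac{\gamma_t}{2}\HB\bigr)(\wB_0-\wB_*)\Bigr\|_{\HB}^2 + \Bigl(\alpha \, \|\wB_0-\wB_*\|^2_{\IB_{0:k^*}/(\Neff\gamma_0)+\HB_{k^*:\infty}} + \sigma^2\Bigr) \cdot \frac{\Deff}{\Neff},
\]
for each of the two parameter regimes, and then control each of its three constituents (the bias term, the effective dimension $\Deff$, and the weighted norm) using the added assumptions $\sigma^2\lesssim 1$, $\|\wB_0-\wB_*\|_2\lesssim 1$, and $\lambda_1\lesssim 1$.

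A shared preliminary would be a uniform spectral bound on the bias. The scheduler \eqref{eq:geometry-tail-decay-lr} produces $\sum_{t=1}^{N}\gamma_t \eqsim \gamma_0\Neff$ (a geometric series over the $\log N$ epochs of length $\Neff$), so applying $1-x \le e^{-x}$ together with the scalar inequality $u e^{-cu}\lesssim 1$ (taken with $u = \lambda_i \gamma_0 \Neff/2$) yields, uniformly in $i$,
\[
\lambda_i \prod_{t=1}^{N}(1-\gamma_t\lambda_i/2)^2 \lesssim \frac{1}{\gamma_0 \Neff}.
\]
Therefore the bias term is $\lesssim \|\wB_0-\wB_*\|_2^2/(\gamma_0\Neff) \lesssim 1/(\gamma_0\Neff)$, which is all we will need.

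For part~(A), I would set $\gamma_0 \eqsim 1/\sqrt{\Neff}$, so the bias is $\lesssim 1/\sqrt{\Neff}$. A Markov-type argument using $\tr(\HB)\lesssim 1$ gives $k^* \le \tr(\HB)\sqrt{\Neff} \lesssim \sqrt{\Neff}$, and the tail piece satisfies $\Neff^2\gamma_0^2 \sum_{i>k^*}\lambda_i^2 \le \Neff \cdot (1/\sqrt{\Neff}) \cdot \tr(\HB) \lesssim \sqrt{\Neff}$; hence $\Deff/\Neff \lesssim 1/\sqrt{\Neff}$. The weighted norm is bounded by $\max\{1/(\gamma_0\Neff), \lambda_{k^*+1}\} \cdot \|\wB_0-\wB_*\|_2^2 \lesssim 1/\sqrt{\Neff}$, so the variance contribution (including the $\sigma^2$ piece) is $\lesssim 1/\sqrt{\Neff}$ as well. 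For part~(B), taking $k^* = d$ kills the tail so $\Deff = d$; with $\gamma_0 \eqsim 1/\tr(\HB)$ the bias bound gives $\lesssim \tr(\HB)/\Neff \le d \lambda_1/\Neff \lesssim d/\Neff$, and the weighted-norm factor is $\lesssim \tr(\HB)/\Neff$, so the variance contribution is also $\lesssim d/\Neff$.

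The main obstacle is establishing the shared spectral bound on the bias (via the epoch-sum identity for $\gamma_t$ and the $u e^{-cu}$ trick) together with the Markov-type bound on $k^*$ in part~(A); once these are in hand, the remainder is substitution, and the final remark about removing the $\log(N)$ factor follows by substituting the iterate-averaging variant in Theorem~\ref{append:thm:tron:iterate-avg} in place of Theorem~\ref{thm:tron:gaussian}(A).
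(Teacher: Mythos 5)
Your proof is correct and follows essentially the same route as the paper: invoke Theorem~\ref{thm:tron:gaussian}(A), bound the bias term by $\|\wB_0-\wB_*\|_2^2/(\gamma_0\Neff)$ via the exponential/scalar inequality, use the Markov-type bound on $k^*$ and the tail estimate $\sum_{i>k^*}\lambda_i^2 \le \lambda_{k^*+1}\tr(\HB)$ for part~(A), and kill the tail with $k^*=d$ in part~(B). The only difference is cosmetic: you carry the weighted-norm factor explicitly to the end, whereas the paper absorbs it into the implicit constant; both are valid under the stated hypotheses.
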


In Corollary \ref{thm:well-specified:recover}, the condition  $\|\wB_0 -\wB_*\|_2 \lesssim 1$  corresponds to the bounded $\ell_2$-norm condition of $\wB_*$ made in \citet{kakade2011efficient,frei2020agnostic} (by taking initialization $\wB_0 = 0$). 
The condition $\tr(\HB) \lesssim 1$ corresponds to the bounded $\ell_2$-norm condition of features made in \citet{kakade2011efficient,frei2020agnostic} (because $\Ebb[\|\xB\|_2^2 ] = \tr(\HB)$). 
Then Corollary \ref{thm:well-specified:recover}\ref{item:well-specified:slow-rate} matches the $\tilde{\Ocal}(1/\sqrt{N})$ rate for GLM-tron in \citet{kakade2011efficient}, and nearly matches the $\Ocal(1/\sqrt{N})$ rate for GD in \citet{frei2020agnostic}.
Corollary \ref{thm:well-specified:recover}\ref{item:well-specified:fast-rate} shows a faster $\tilde{\Ocal}(d/N)$ rate in the finite-dimensional regime.

\section{Misspecified ReLU Regression}\label{sec:misspecified}

In this part, we present our results for misspecified ReLU regression.
This setting is also known as the agnostic setting in literature \citep{goel2019time,diakonikolas2020approximation}.
We first define a misspecified noise as follows.

\begin{assumption}[Misspecified noise]\label{assump:noise:misspecified}
Denote the minimum population risk by 
\[\OPT := \min_{\wB'\in\Hbb} \risk(\wB').\]
Moreover, assume that there exists an optimal model parameter $\wB^* \in \arg\min_{\wB'\in\Hbb} \risk(\wB')$ such that 
    \begin{equation}\label{eq:misspecified:noise}
        \Ebb \big[ (y - \relu(\xB^\top\wB_*))^2\xB \xB^\top \big] \preceq \sigma^2\cdot \HB 
    \end{equation}
holds for some constant $\sigma^2 > 0$.
\end{assumption}

Different from the well-specified case,
Assumption~\ref{assump:noise:misspecified} does not directly impose any probability condition on the label-generating process. 
In particular, it captures the situation when $1-\OPT$ fraction of the label is generated without noise while the rest $\OPT$ fraction of the label is \emph{adversarially} given \citep{diakonikolas2020approximation}.

Moreover, we empathize that the condition \eqref{eq:misspecified:noise} in Assumption~\ref{assump:noise:misspecified} is very weak and conservative.  
In particular condition \eqref{eq:misspecified:noise} holds trivially when $y$ is bounded, $\|\wB_*\|_{\HB}$ is finite and $\xB$ satisfies the hypercontractivity condition in Assumption~\ref{assump:gaussian}\ref{item:gaussian:upper}, because:
\begin{align*}
    \text{l.h.s.\ of \eqref{eq:misspecified:noise}} &\preceq  2\Ebb \big[ y^2 \xB \xB^\top \big]
    +  2\Ebb \big[ ( \xB^\top\wB_*)^2 \xB \xB^\top \big] \\
    &\preceq \big(2 (\sup\{y\})^2 + 2\alpha \|\wB_*\|_{\HB}^2 \big)\cdot\HB. 
\end{align*}
The above requirements on $y$, $\wB_*$ and $\xB$ are already weaker than that required in the literature for learning miss-specified ReLU regression \citep{frei2020agnostic,diakonikolas2020approximation,goel2019time}.

In the misspecified setting, the label can correlate with data in an arbitrary manner. 
This breaks our nice Lemma~\ref{thm:tron:covariance} proved in the well-specified setting.
In order to analyze \eqref{eq:tron} in the misspecified setting, we extend the operator methods from considering PSD matrices to considering only the \emph{diagonals} of PSD matrices (see Section \ref{sec:proof} for more discussions).
With the new techniques, we obtain the following instance-dependent risk bound.

\begin{theorem}[Risk Bounds for GLM-tron]\label{thm:tron:gaussian:misspecified}
Suppose that Assumptions \ref{assump:symmetric}, \ref{assump:gaussian}\ref{item:gaussian:upper} and \ref{assump:noise:misspecified}  hold.
Let $\wB_{N}$ be the output of \eqref{eq:tron} with stepsize scheduler \eqref{eq:geometry-tail-decay-lr}.
Assume that $N > 100$. 
Let $\Neff := N / \log(N)$.
Then for $\gamma_0 < 1/(8\alpha(\tr(\HB)) )$, 
it holds that
\begin{align*}
   \Ebb \risk (\wB_{N})
    & \lesssim \OPT + 
    \bigg\| \prod_{t=1}^{N}\Big(\IB-\frac{\gamma_t}{2}\HB\Big)(\wB_0 - \wB_*) \bigg\|^2_\HB  \\
    & \quad + (1+\SNR)\cdot\sigma^2 \cdot  \frac{\Deff}{\Neff},
\end{align*}
where $\Deff$ is defined by \eqref{eq:effective-dim}, $k^* \ge 0$ is arbitrary, and 
\begin{align*}
    &\quad \ \SNR \\
    &:= {\alpha\Big(\OPT +\|\wB_*\|_{\HB}^2 + \big\| \wB_0 - \wB_* \big\|^2_{\frac{\IB_{0:k^*} }{\Neff \gamma_0} + \HB_{k^*:\infty}} \Big) } /  {\sigma^2} \\
    &\le \alpha (\OPT + \|\wB_*\|_{\HB}^2 + \|\wB_0 - \wB_*\|_{\HB}^2) / \sigma^2.
\end{align*}
\end{theorem}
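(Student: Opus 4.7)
The plan is to parallel the strategy of Theorem \ref{thm:tron:gaussian}: obtain an operator-type recursion on the iterate covariance that matches the recursion for SGD on a suitable linear regression, and then invoke Corollary~3.4 of \citet{wu2022power}. The reduction to a weighted $\HB$-distance is handled up front by $1$-Lipschitzness of $\relu$, which gives
\[
\risk(\wB) \le 2\OPT + 2\,\Ebb\bigl[(\relu(\xB^\top\wB)-\relu(\xB^\top\wB_*))^2\bigr] \le 2\OPT + 2\,\|\wB-\wB_*\|_{\HB}^2,
\]
so it suffices to bound $\Ebb\|\wB_N - \wB_*\|_{\HB}^2$ and to pay an additive $2\OPT$ at the end, which furnishes the $\OPT$ term in the stated risk bound.

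The key difficulty, and the reason Lemma \ref{thm:tron:covariance} does not apply directly, is that the residual $\epsilon_t := y_t - \relu(\xB_t^\top \wB_*)$ is no longer conditionally mean zero given $\xB_t$. Expanding one GLM-tron step as
\[
\wB_t - \wB_* = \wB_{t-1} - \wB_* - \gamma_t\bigl(\relu(\xB_t^\top\wB_{t-1}) - \relu(\xB_t^\top\wB_*)\bigr)\xB_t + \gamma_t\epsilon_t\xB_t
\]
and forming $\AB_t$ as in \eqref{eq:tron:cov} therefore produces a non-vanishing, non-PSD cross term $\gamma_t^2\,\Ebb[\epsilon_t\langle\xB_t,\wB_{t-1}-\wB_*\rangle\,\xB_t\xB_t^\top]$, which breaks the PSD monotonicity argument behind Lemma \ref{thm:tron:covariance}. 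I expect this to be the main obstacle.

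To overcome it, the plan is to give up tracking the full matrix $\AB_t$ and instead track only its diagonal $\widetilde{\AB}_t$ in the eigenbasis of $\HB$. Under Assumption \ref{assump:symmetric}, the ReLU quadratic term still contributes to $\widetilde{\AB}_t$ a contraction of the same shape as in Lemma \ref{thm:tron:covariance}. The cross term is handled on the diagonal by Young's inequality: one half is absorbed into a slightly weaker contraction of $\widetilde{\AB}_t$, and the other half becomes $\gamma_t^2\,\Ebb[\epsilon_t^2 \xB_t\xB_t^\top] \preceq \gamma_t^2\sigma^2 \HB$ via Assumption \ref{assump:noise:misspecified}, multiplied by an overhead that is controlled via the hypercontractivity bound in Assumption \ref{assump:gaussian}\ref{item:gaussian:upper} by $\alpha \cdot \|\wB_{t-1}-\wB_*\|^2$ in an appropriately weighted norm. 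Together with contributions $\alpha(\OPT + \|\wB_*\|_{\HB}^2)$ that arise from bounding the unconditional second moment of $\epsilon_t$ through Assumption \ref{assump:noise:misspecified}, this overhead is exactly the $(1+\SNR)$ factor in the theorem. The tightened stepsize condition $\gamma_0 < 1/(8\alpha\tr(\HB))$, compared with $1/(4\alpha\tr(\HB))$ in Theorem \ref{thm:tron:gaussian}, absorbs the factor of $2$ introduced by Young's inequality.

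Once the recursion on the diagonal takes the schematic form
\[
\widetilde{\AB}_{t+1} \preceq \Bigl(\Ical - \tfrac{\gamma_t}{2}\,\Tcal(2\gamma_t)\Bigr)\circ\widetilde{\AB}_t + \gamma_t^2(1+\SNR)\,\sigma^2\cdot\HB,
\]
it matches the standard operator recursion for SGD on linear regression with inflated noise variance $(1+\SNR)\sigma^2$. Applying Corollary~3.4 of \citet{wu2022power} exactly as in the proof of Theorem \ref{thm:tron:gaussian} then yields the bias term $\bigl\|\prod_{t=1}^{N}(\IB - \tfrac{\gamma_t}{2}\HB)(\wB_0-\wB_*)\bigr\|_{\HB}^2$ and the variance $(1+\SNR)\sigma^2\Deff/\Neff$; combining with the Lipschitz reduction from the first paragraph completes the proof.
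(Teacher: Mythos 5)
Your overall architecture matches the paper's: the Lipschitz reduction $\risk(\wB)\le 2\OPT+2\|\wB-\wB_*\|_{\HB}^2$, the decision to track only the diagonal of the error covariance in the eigenbasis of $\HB$, the use of Assumption~\ref{assump:gaussian}\ref{item:gaussian:upper} to control the fourth-moment pieces, and the endgame via the linear-regression operator bounds of \citet{wu2022power}. However, there is a genuine gap at the step you yourself identify as the crux: the \emph{leading-order} cross term $\gamma_t\,\Ebb\big[\epsilon_t\big((\wB_{t-1}-\wB_*)\xB_t^\top+\xB_t(\wB_{t-1}-\wB_*)^\top\big)\big]$ is of order $\gamma_t$, not $\gamma_t^2$, and your plan to split it by Young's inequality so that ``the other half becomes $\gamma_t^2\,\Ebb[\epsilon_t^2\xB_t\xB_t^\top]\preceq\gamma_t^2\sigma^2\HB$'' does not go through. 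To extract a $\gamma_t^2$ from an order-$\gamma_t$ term you must weight the inequality, and then either (i) the absorbed half is of the form $c\cdot\mathring{\AB}_{t-1}$ with $c$ independent of $\gamma_t\HB$, which cannot be folded into the contraction $(\IB-\tfrac{\gamma_t}{2}\HB)\mathring{\AB}_{t-1}$ in the overparameterized regime where the relevant eigenvalues can be arbitrarily small; or (ii) you weight per coordinate by $\gamma_t\lambda_i$, in which case the noise half is of size $\Ocal(\gamma_t\sigma^2)\cdot\IB$ per step, and since $\sum_t\gamma_t\prod_{k>t}(\IB-\tfrac{\gamma_k}{2}\HB)\preceq 16\HB^{-1}$, its accumulation contributes $\Ocal(\sigma^2\cdot\mathrm{rank}(\HB))$ to $\la\HB,\mathring\AB_N\ra$, which is vacuous in high dimension. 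Bounding $\epsilon_t$ by its conditional second moment at this stage is simply too lossy.

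The paper's Lemma~\ref{lemma:tron:agnostic:cross} handles this differently, and the difference matters for the form of the final bound: it first integrates out the fresh sample, rewriting the cross term through the \emph{fixed} vector $\aB:=\Ebb[\epsilon_t\HB^{-1/2}\xB_t]$, and only then applies the matrix AM--GM bound with a carefully weighted split, yielding $\tfrac{\gamma_t}{2}\HB\mathring\AB_{t-1}+2\gamma_t\,\diag(\aB\aB^\top)$ with $\tr(\diag(\aB\aB^\top))=\|\aB\|_2^2\le\OPT$. The extra noise therefore enters at order $\gamma_t$ but with total trace controlled by $\OPT$, and its accumulation produces the additive $\Ocal(\OPT)$ in the theorem --- it is not folded into an inflated $(1+\SNR)\sigma^2$ variance as your sketch suggests. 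A second, smaller consequence: because of this order-$\gamma_t$ term $\XiB$, the resulting diagonal recursion is \emph{not} the standard SGD variance recursion, so Corollary~3.4 of \citet{wu2022power} cannot be invoked verbatim as in Theorem~\ref{thm:tron:gaussian}; the paper must split bias and variance, and control the variance piece by a crude uniform bound plus a bootstrap (Lemmas~\ref{lemma:tron:misspecified:crude-var-bound} and~\ref{lemma:tron:misspecified:prod-sum-bounds}), which is also where the $\alpha\,\OPT$ and $\alpha\|\wB_*\|_{\HB}^2$ contributions to $\SNR$ actually arise. Your $\gamma_t^2$ cross terms (the ones with the indicator prefactors) are indeed handled by Cauchy--Schwarz as you describe, but they are not the obstruction; the order-$\gamma_t$ term is, and your proposal as written does not dispose of it.
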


Similar to the well-specified setting, 
Theorem \ref{thm:tron:gaussian:misspecified} allows \eqref{eq:tron} to achieve a constant-factor approximation even in the overparameterized regime, as long as the spectrum of $\HB$ decays fast such that $\Deff$ is small compared to $\Neff$.

\paragraph{Applications in the Finite-Dimensional Regime.}
The next corollary shows that, when applied to the finite-dimensional regime, our bound improves an existing bound, $\Ocal(\OPT + \sqrt{d/N})$, of GLM-tron for misspecified ReLU regression proved by \citet{diakonikolas2020approximation}.

\begin{corollary}[Finite-dimensional regime]\label{coro:gtron:gaussianLmisspecified}
Under the setting of Theorem \ref{thm:tron:gaussian:misspecified}, in addition assume that $d$ is finite and
\begin{align*}
 \sigma^2 \lesssim 1, \ 
 \|\wB_0-\wB_*\|_2\lesssim 1,\ \|\wB_*\|_2 \lesssim 1,\ \lambda_1 \lesssim 1.
\end{align*}
Then by choosing $\gamma_0 \eqsim 1/\tr(\HB)$ and $k^*=d$, we have
\begin{align*}
\Ebb \risk (\wB_N) \lesssim \OPT + \frac{d}{\Neff} = \OPT + \frac{d\log(N)}{N}.
\end{align*}
\end{corollary}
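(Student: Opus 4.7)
The plan is to substitute the stated choices $\gamma_0 \eqsim 1/\tr(\HB)$ and $k^* = d$ directly into the three terms of the bound in Theorem \ref{thm:tron:gaussian:misspecified} and verify that each piece is $O(\OPT + d/\Neff)$ under the boundedness hypotheses. Two simplifications come for free from $k^*=d$ in finite dimension: the tail projector $\HB_{k^*:\infty}$ vanishes, and $\IB_{0:k^*} = \IB$, so the effective dimension collapses to $\Deff = d$, while the weighted norm $\|\wB_0 - \wB_*\|^2_{\IB_{0:k^*}/(\Neff\gamma_0) + \HB_{k^*:\infty}}$ reduces to $\|\wB_0 - \wB_*\|_2^2/(\Neff\gamma_0)$.

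The only genuine computation is the bias term $\| \prod_{t=1}^{N}(\IB - \gamma_t\HB/2)(\wB_0 - \wB_*) \|^2_\HB$. I would diagonalize in the eigenbasis of $\HB$ and reduce to bounding the scalar $\lambda_i \prod_{t=1}^{N}(1-\gamma_t\lambda_i/2)^2$ uniformly in $i$. Since the schedule \eqref{eq:geometry-tail-decay-lr} halves the stepsize every $\Neff$ iterations, a geometric series gives total stepmass $\sum_t \gamma_t \eqsim \gamma_0 \Neff \eqsim \Neff/\tr(\HB)$. Because $\gamma_0 < 1/(8\alpha \tr(\HB))$ forces $\gamma_t\lambda_i/2 \in [0,1]$, the inequality $1-x\le e^{-x}$ gives $\prod_t(1-\gamma_t\lambda_i/2)^2 \le \exp(-c\Neff\lambda_i/\tr(\HB))$ for some absolute $c>0$. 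The elementary inequality $xe^{-ax}\le 1/(ea)$ then yields $\lambda_i \prod_t(1-\gamma_t\lambda_i/2)^2 \lesssim \tr(\HB)/\Neff$ for every $i$; factoring this uniform bound out of the sum and using $\tr(\HB) \le d\lambda_1 \lesssim d$ together with $\|\wB_0-\wB_*\|_2^2\lesssim 1$ produces a bias of order $d/\Neff$, as required.

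For the SNR-weighted variance term, I would just bound $\|\wB_*\|_\HB^2 \le \lambda_1\|\wB_*\|_2^2 \lesssim 1$ and $\|\wB_0-\wB_*\|_\HB^2 \le \lambda_1\|\wB_0-\wB_*\|_2^2 \lesssim 1$, so the cruder upper bound on $\SNR \cdot \sigma^2$ in Theorem \ref{thm:tron:gaussian:misspecified} is $\lesssim \alpha(\OPT + 1) \lesssim \OPT + 1$. Combined with $\sigma^2 \lesssim 1$ this gives $(1+\SNR)\sigma^2 \lesssim \OPT + 1$, so the third term is $\lesssim (\OPT + 1)\cdot d/\Neff$. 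Summing the three contributions produces $\OPT + d/\Neff + (\OPT + 1)\cdot d/\Neff$, and absorbing the cross term $\OPT \cdot d/\Neff$ into $\OPT + d/\Neff$ (the bound is only informative when $d/\Neff \lesssim 1$, in which case $\OPT\cdot d/\Neff \le \OPT$) yields the claimed $\OPT + d/\Neff = \OPT + d\log(N)/N$.

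The argument is essentially a substitution into Theorem \ref{thm:tron:gaussian:misspecified}; I expect no step to be a real obstacle, with the mildest care needed only in verifying the stepmass calculation for the geometric schedule and in applying the $xe^{-ax}\le 1/(ea)$ trick uniformly across eigendirections for the bias bound. No further operator-level reasoning is required.
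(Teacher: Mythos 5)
Your proposal is correct and follows essentially the same route as the paper's proof: substitute $k^*=d$ and $\gamma_0 \eqsim 1/\tr(\HB)$ into Theorem \ref{thm:tron:gaussian:misspecified}, bound the bias term by $1/(\Neff\gamma_0) \eqsim \tr(\HB)/\Neff \lesssim d/\Neff$ via the $\lambda e^{-a\lambda}\lesssim 1/a$ argument with $\lambda_1\lesssim 1$, and observe that the SNR-weighted variance prefactor is $\lesssim \OPT+1$ under the boundedness assumptions. If anything you are slightly more explicit than the paper, which silently absorbs the $(1+\SNR)\sigma^2$ factor and the resulting $\OPT\cdot d/\Neff$ cross term that you flag and handle in the informative regime $d/\Neff\lesssim 1$.
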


\section{Comparing GLM-tron with SGD}\label{sec:sgd}

In this part, we show some negative results for \eqref{eq:sgd} in ReLU regression with symmetric Bernoulli data.

\paragraph{Well-Specified Case.}
We first consider well-specified ReLU regression with symmetric Bernoulli data.
We provide the following risk lower bound for \eqref{eq:sgd}.
\begin{theorem}[Risk lower bound for SGD]\label{thm:sgd:bernoulli:lb}
Suppose that Assumptions \ref{assump:noise:well-specified}  and \ref{assump:bernoulli} hold.
Let $\wB_{N}$ be the output of \eqref{eq:sgd} with stepsize scheduler \eqref{eq:geometry-tail-decay-lr}. 
Assume that $N > 100$. 
Let $\Neff := N / \log(N)$.
Then for $ \gamma_0 < 1$, it holds that 
\begin{align*}
    \Ebb \excessrisk (\wB_{N})
    &\gtrsim 
    \big\| \wB_0 - \wB_* \big\|^2_{\prod_{t=1}^{N}(\IB-\gamma_t\HB)\HB} + 
     \sigma^2 \cdot\frac{\Deff}{\Neff} + \Psi,
\end{align*}
where $\Deff$ is defined by \eqref{eq:effective-dim} with $k^*$ defined by \eqref{eq:opt-index-sets},
and
\begin{equation*}
    \Psi := \Big\la\sum_{t=0}^{N-1} \gamma_t(1-\gamma_t)\prod_{k=t+1}^{N-1}(1-\gamma_k \HB)\HB,\ \FB_t \Big\ra
\end{equation*}
and $\FB_t \succeq 0$ is a PSD matrix. 
\end{theorem}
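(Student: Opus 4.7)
The plan is to exploit the diagonal structure afforded by symmetric Bernoulli data: since $\xB_t\in\{\pm\eB_i\}_{i\ge 1}$, only the $i$-th coordinate of $\wB_t$ is touched on the event $\xB_t=\pm\eB_i$, and the scalar chains $w_{t,i}$ evolve independently across $i$. By the landscape lower bound in Lemma \ref{thm:loss-landscape}, the excess risk satisfies $\Ebb\excessrisk(\wB_N)\gtrsim \sum_i \lambda_i\,\Ebb[(w_{N,i}-w_{*,i})^2]$, so it suffices to lower-bound each per-coordinate second moment, which reduces the problem to a one-dimensional analysis at each $i$.

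Fix $i$, set $u_t:=w_{t,i}-w_{*,i}$, and WLOG take $w_{*,i}\ge 0$. Enumerating over the four cases $(\mathrm{sign}(\xB_t{\cdot}\eB_i),\,\mathrm{sign}(w_{t-1,i}))$, I would derive an exact one-step identity of the form
\[
\Ebb_t[u_t^2] \;=\; \bigl(1-\gamma_t\lambda_i + c_t\gamma_t^2\lambda_i\bigr)\,u_{t-1}^2 \;+\; \tfrac{\gamma_t^2\lambda_i}{2}\sigma^2 \;+\; f_{t,i}(\wB_{t-1}),
\]
where $c_t$ is a bounded nonnegative constant and the nonnegative remainder $f_{t,i}$ collects the extra quadratic error produced whenever $\mathrm{sign}(\xB_t^\top\wB_{t-1})$ disagrees with $\mathrm{sign}(\xB_t^\top\wB_*)$ so that the SGD indicator truncates the step. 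The contraction $1-\gamma_t\lambda_i$ and the noise injection $\gamma_t^2\lambda_i\sigma^2$ on the right match exactly those for GLM-tron at the same coordinate; dropping $f_{t,i}$ and unrolling therefore recovers the first two claimed terms --- the bias $\|\wB_0-\wB_*\|^2_{\prod_t(\IB-\gamma_t\HB)\HB}$ and the variance $\sigma^2\cdot\Deff/\Neff$ --- by precisely the optimal-$k^*$ truncation argument used in the lower bound half of Theorem \ref{thm:tron:bernoulli}(B) and imported from \citet{zou2021benefits}.

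The main obstacle is to aggregate the $f_{t,i}$'s into the stated form $\Psi=\la\sum_{t}\gamma_t(1-\gamma_t)\prod_{k=t+1}^{N-1}(\IB-\gamma_k\HB)\HB,\ \FB_t\ra$. The plan here is threefold. First, I would define $\FB_t$ as the diagonal PSD matrix whose $i$-th diagonal entry is the conditional second moment of the sign-mismatched SGD update at step $t$ (e.g.\ terms of the form $w_{*,i}^2\,\Pbb_t[\mathrm{sign}(w_{t-1,i})\neq \mathrm{sign}(w_{*,i})]$ together with a cross-contribution that is nonnegative by a direct sign analysis), so that $\FB_t\succeq 0$ by construction as a conditional covariance. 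Second, I would read the factor $\gamma_t(1-\gamma_t)$ directly off the mismatched one-step arithmetic: $\gamma_t$ is the obvious step-size scaling of the extra noise, while $(1-\gamma_t)$ emerges from the surviving component $(1-\gamma_t)u_{t-1}$ on the truncated step. Third, I would propagate $f_{t,i}$ forward through the remaining $N-t-1$ updates using the diagonal version of the operator method outlined in Section \ref{sec:proof}, producing the contraction $\prod_{k=t+1}^{N-1}(\IB-\gamma_k\HB)$ and the outer $\HB$ via the final readout through Lemma \ref{thm:loss-landscape}.

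The technically hardest point will be showing that $\FB_t$ does not vanish and that its contribution \emph{survives} the forward propagation without cancellation against the other positive terms in the recursion; only then does $\Psi$ appear as a genuine additive excess over the matched GLM-tron lower bound. This survival is exactly what witnesses the provable gap between \eqref{eq:sgd} and \eqref{eq:tron} on symmetric Bernoulli instances, and is the step I expect to require the most care --- in particular, tracking the cross-term $-\lambda_i\gamma_t(1-\gamma_t)\,u_{t-1}w_{*,i}$ on the mismatched event and verifying its sign carefully enough that the diagonal operator machinery can preserve it all the way to time $N$.
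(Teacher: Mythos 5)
Your proposal is correct and is essentially the paper's own argument: the paper likewise derives a per-coordinate (diagonal) second-moment recursion for the SGD error under the Bernoulli model, isolating a contraction factor $(\IB-\gamma_t\HB)$, a noise injection $\tfrac{\gamma_t^2\sigma^2}{2}\HB$, and a nonnegative sign-mismatch term $2\gamma_t(1-\gamma_t)\,\Ebb[F(\wB_{t-1})]$ with $F(\wB):=\Ebb_{\xB}\big[\ind{\xB^\top\wB>0,\,\xB^\top\wB_*<0}\,\xB^\top\wB_*\cdot\xB^\top(\wB_*-\wB)\,\xB\xB^\top\big]\succeq 0$ (your $\FB_t$), then unrolls and reads off the bias/variance via the same truncation bounds and Lemma~\ref{thm:loss-landscape}. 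Two small remarks: the coordinates are not statistically independent (they are coupled through which basis vector is sampled), but the marginal second-moment recursions decouple exactly as you use them; and your worry about $\FB_t$ ``not vanishing'' is unnecessary for this theorem, since the statement only requires $\FB_t\succeq 0$ and all terms in the lower-bound recursion are nonnegative, so no cancellation can occur.
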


The excess risk lower bound for \eqref{eq:sgd} in Theorem \ref{thm:sgd:bernoulli:lb}
is in sharp contrast to the excess risk upper bound for \eqref{eq:tron} in Theorem \ref{thm:tron:bernoulli}:
the bias and variance error lower bounds for \eqref{eq:sgd} is comparable to the bias and variance error upper bounds for \eqref{eq:tron}; in addition, there is an extra non-negative error term $\Psi$ for \eqref{eq:sgd}.
This seems to suggest that \eqref{eq:sgd} is no better than \eqref{eq:tron}.
Our next theorem formalizes this observation. 

\begin{figure*}
    \centering
\subfigure[$\lambda_i\propto i^{-2}$, well-specified setting\label{fig:hd1}]{\includegraphics[width=0.3\linewidth]{./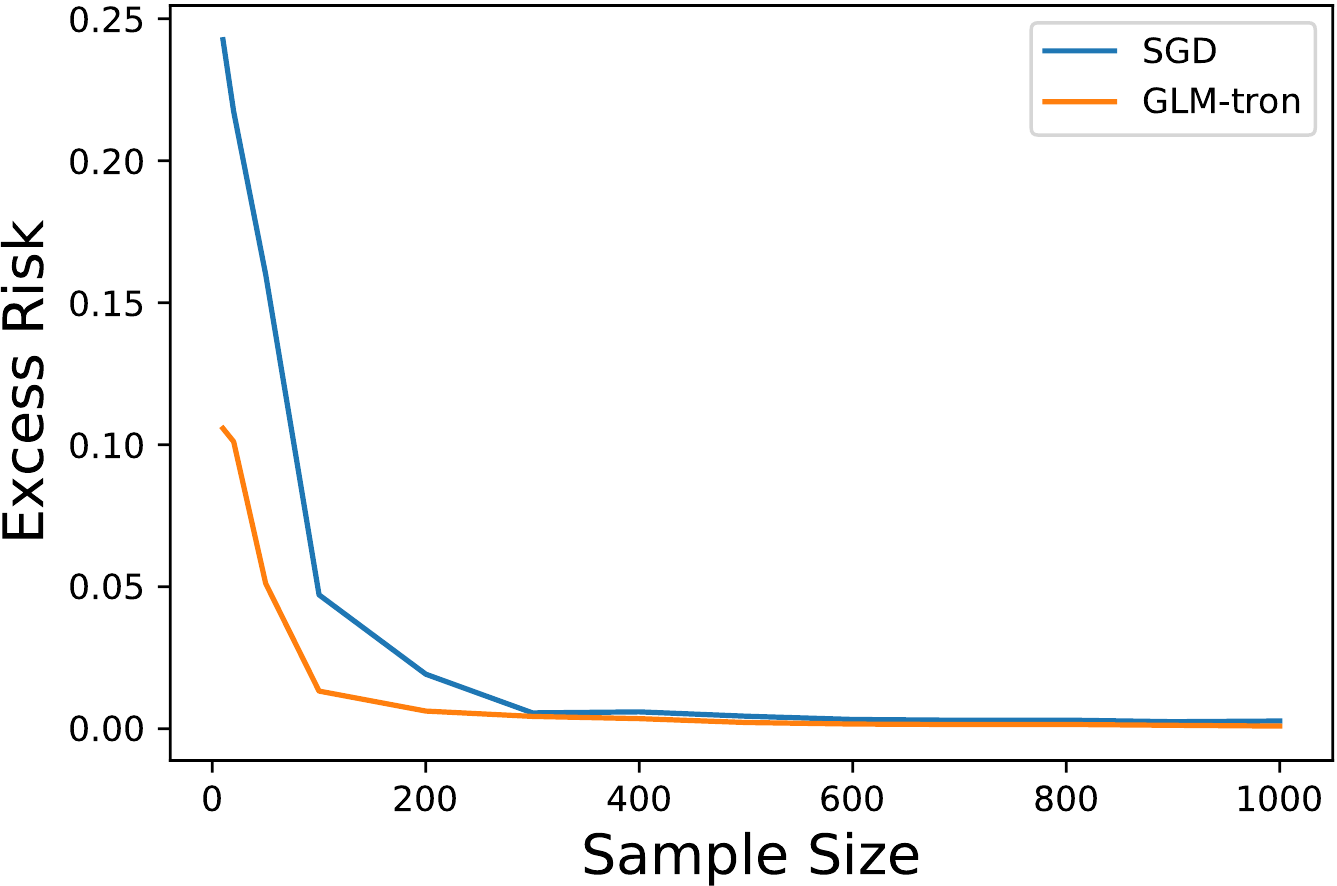}}
\subfigure[$\lambda_i\propto i^{-3}$, well-specified setting\label{fig:hd2}]{\includegraphics[width=0.3\linewidth]{./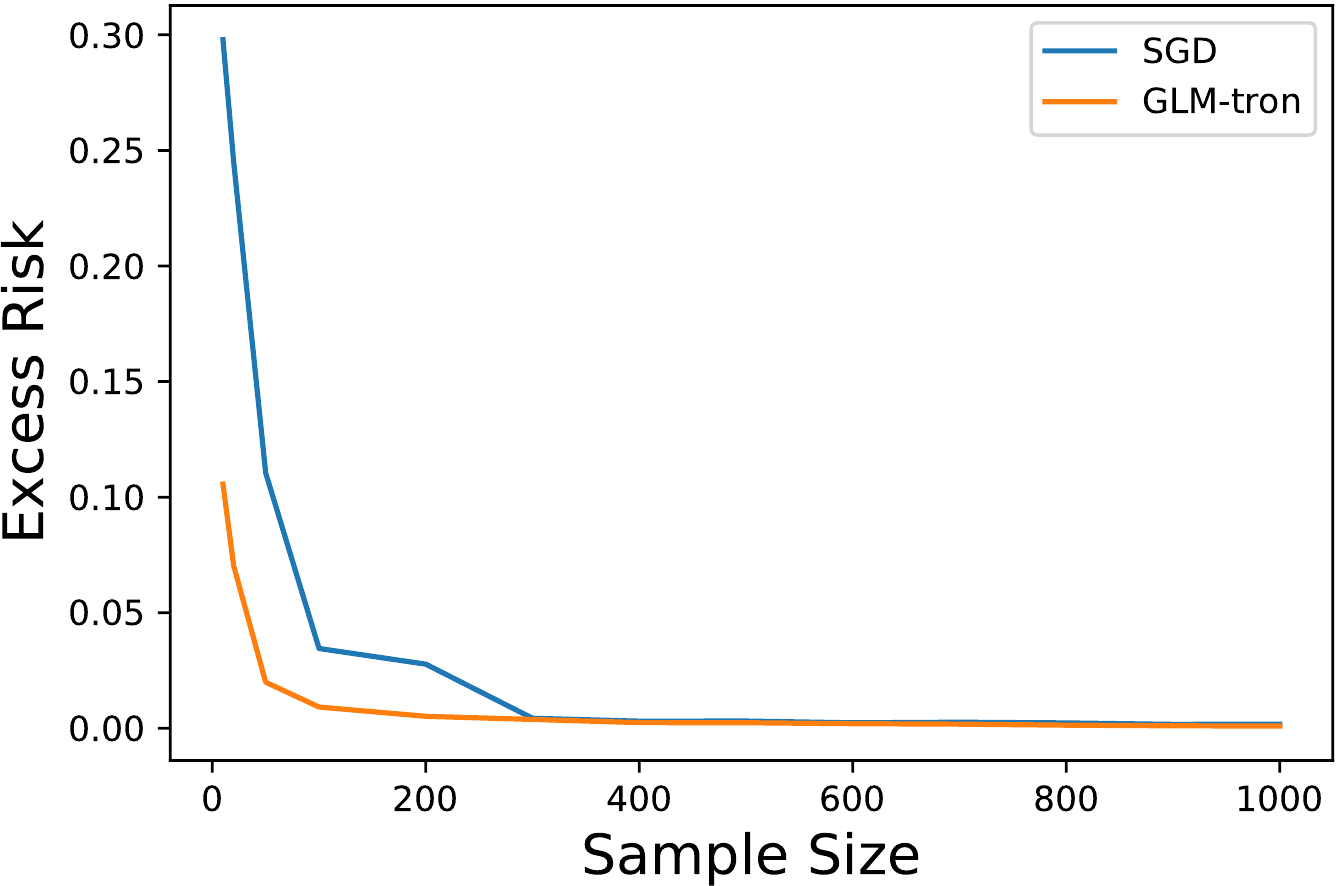}}
    \subfigure[2D illustration, noiseless setting\label{fig:2d}] {\includegraphics[width=0.31\linewidth]{./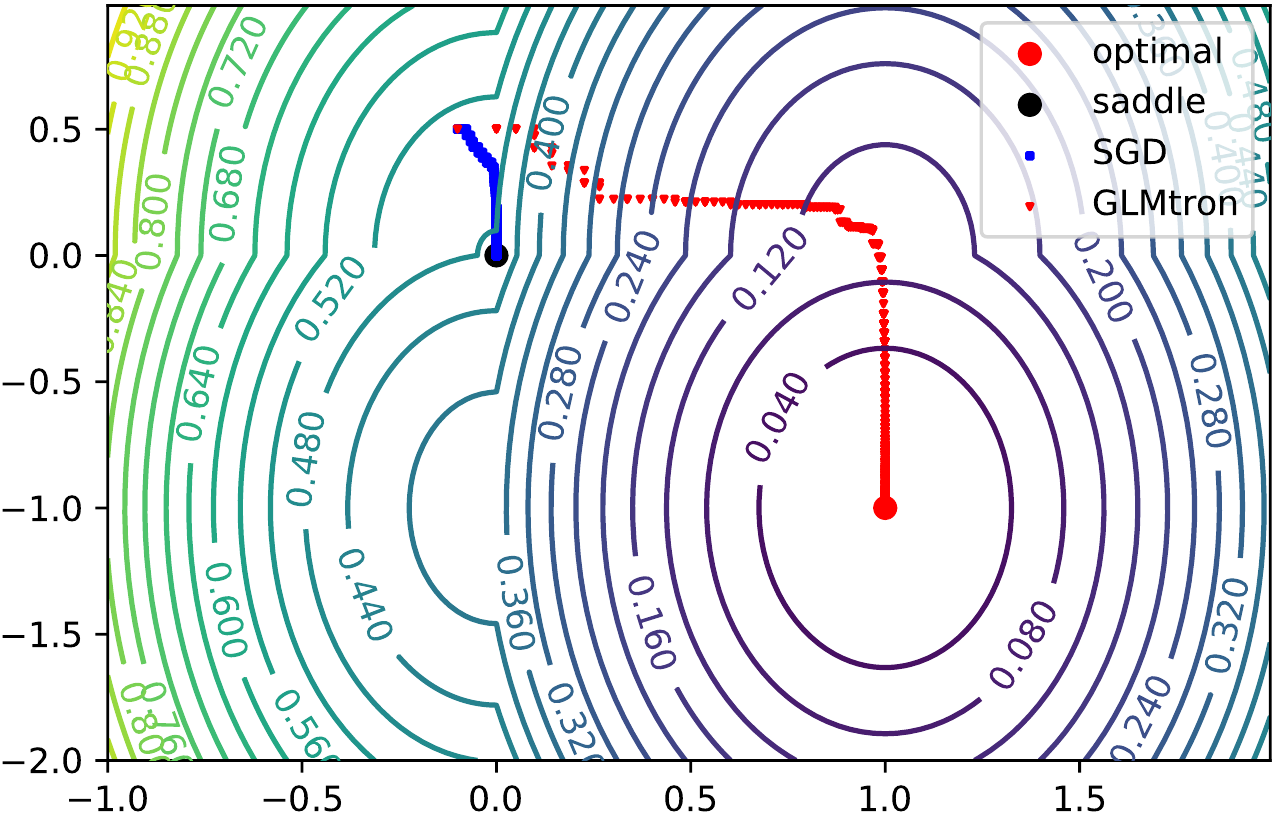}}
    \caption{\small
    (a) and (b): Excess risk comparison between \eqref{eq:sgd} and \eqref{eq:tron} in well-specified ReLU regression with symmetric Bernoulli data. Here $d=1,024$, $\sigma^2=0.01$ and $\wB_*=(i^{-1})_{i= 1}^d$.
    The eigen spectrum is $\lambda_i \propto i^{-2}$ and $\lambda_i\propto i^{-3}$ for (a) and (b), respectively.
    For each algorithm and each sample size, we do a grid search on the initial stepsize $\gamma_0 \in \{0.5, 0.25, 0.1, 0.075, 0.05, 0.025, 0.01\}$ and report the best excess risk. The plots are averaged over $20$ independent runs.
    (c): Training trajectories of \eqref{eq:sgd} and \eqref{eq:tron} on a 2D noiseless ReLU regression with symmetric Bernoulli data. Here $(\lambda_1, \lambda_2) = (0.8, 0.2)$ and $\wB_* = (1,-1)$.
}
    \label{fig:bernoulli}
\end{figure*}

\begin{theorem}[GLM-tron vs. SGD]\label{thm:tron-vs-sgd:every-case}
Fix an initialization $\wB_0$.
Consider a set of well-specified ReLU regression problems with symmetric Bernoulli data (denoted by $\Escr$) such that:
 Assumption~\ref{assump:noise:well-specified} and \ref{assump:bernoulli} hold and $\|\wB_0 - \wB_*\|_2^2 \lesssim \sigma^2$.
 Let $\wB^{\sgd}_{N}(\gamma^{\sgd}_0, \Pcal)$ and $\wB^{\tron}_{N}(\gamma^{\tron}_0, \Pcal)$ be the outputs of \eqref{eq:sgd} and \eqref{eq:tron} with the same stepsize scheduler \eqref{eq:geometry-tail-decay-lr}, initialization $\wB_0$, sample size $N>100$, and on the same problem instance $\Pcal \in \Escr$, respectively, where $\gamma^{\sgd}_0< 1$ and $\gamma^{\tron}_0<1/2$ denote their initial stepsizes, respectively.
 Then  for every problem $\Pcal \in \Escr$, it holds that
\begin{align*}  
& \min_{\gamma^{\tron}_0<1/2} \Ebb \excessrisk\big(\wB_N^{\tron}(\gamma^{\tron}_0, \Pcal)\big) \\
&\hspace{20mm} \lesssim \min_{\gamma^{\sgd}_0<1}\Ebb \excessrisk\big(\wB_N^{\sgd}(\gamma^{\sgd}_0, \Pcal)\big). 
\end{align*}
\end{theorem}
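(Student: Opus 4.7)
The plan is to combine the sharp upper bound for \eqref{eq:tron} in Theorem~\ref{thm:tron:bernoulli}(A) with the sharp lower bound for \eqref{eq:sgd} in Theorem~\ref{thm:sgd:bernoulli:lb}, exploiting two structural observations: the two bounds share the same variance structure up to a rescaling of the stepsize, and the SGD lower bound carries an additional non-negative term $\Psi$ which can be dropped for free.

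First I would fix an arbitrary $\gamma^{\sgd}_0 \in (0,1)$ and propose the choice $\gamma^{\tron}_0 := (2\gamma^{\sgd}_0) \wedge (1/4)$, which is always admissible since it satisfies $\gamma^{\tron}_0 < 1/2$. In the main regime $\gamma^{\sgd}_0 \le 1/8$, this gives $\gamma^{\tron}_0 = 2\gamma^{\sgd}_0$, and the bias-decaying factors coincide exactly since $(\IB - \gamma^{\tron}_t \HB/2) = (\IB - \gamma^{\sgd}_t \HB)$ for every $t$. Applying Theorem~\ref{thm:tron:bernoulli}(A) with the same cutoff $k^* := \max\{k : \lambda_k \ge 1/(\gamma^{\sgd}_0 \Neff)\}$ used in Theorem~\ref{thm:sgd:bernoulli:lb}, the GLM-tron effective dimension satisfies
\[
\Deff(2\gamma^{\sgd}_0; k^*) \;=\; k^* + 4\Neff^2 (\gamma^{\sgd}_0)^2 \sum_{i>k^*}\lambda_i^2 \;\le\; 4\,\Deff(\gamma^{\sgd}_0; k^*),
\]
so its variance term is at most four times the SGD variance term. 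Since $\Psi \ge 0$ can be dropped from the SGD lower bound, this yields the desired inequality with absolute constant $C \le 4$ in this regime.

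For the boundary regime $\gamma^{\sgd}_0 \in (1/8, 1)$, where $\gamma^{\tron}_0$ saturates at $1/4$, I would use the SNR hypothesis $\|\wB_0 - \wB_*\|_2^2 \lesssim \sigma^2$ to bound the GLM-tron bias uniformly. The elementary inequality $\max_{x \ge 0} x(1-\gamma x/2)^{2N} \lesssim 1/(\gamma N)$, suitably adapted to the tail-decaying schedule \eqref{eq:geometry-tail-decay-lr}, forces the GLM-tron bias to be $\lesssim \sigma^2/\Neff$. On the SGD side, the variance lower bound $\sigma^2 \Deff(\gamma^{\sgd}_0)/\Neff$ already exceeds $\sigma^2/\Neff$ whenever $k^*(\gamma^{\sgd}_0) \ge 1$, while an elementary monotonicity argument (using that $k^*(\cdot)$ is nondecreasing and that the second piece of $\Deff$ scales quadratically in $\gamma_0$) shows $\Deff$ grows by at most a constant factor when $\gamma_0$ doubles. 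Hence the GLM-tron variance at $\gamma^{\tron}_0 = 1/4$ is absorbed into a constant multiple of SGD's variance for any $\gamma^{\sgd}_0 \ge 1/8$.

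The main obstacle I anticipate is making the large-stepsize case fully rigorous: controlling the ratio $\Deff(\gamma^{\sgd}_0)/\Deff(1/4)$ uniformly over $\gamma^{\sgd}_0 \in [1/8, 1)$, and doing so simultaneously with the bias-to-variance domination via SNR. A clean path uses the decomposition $\Deff(\gamma_0) = k^*(\gamma_0) + \Neff^2\gamma_0^2 \sum_{i>k^*(\gamma_0)}\lambda_i^2$ together with the monotonicity of $k^*(\cdot)$; alternatively, if this becomes cumbersome, one can invoke the extra term $\Psi$ from Theorem~\ref{thm:sgd:bernoulli:lb}, which contains the factor $\gamma_t(1-\gamma_t)$ that is bounded away from zero precisely in this regime, to absorb the residual GLM-tron contributions.
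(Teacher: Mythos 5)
Your proposal follows the same overall strategy as the paper's proof: case-split on whether $\gamma^{\sgd}_0$ is small or large, match stepsizes via $\gamma^{\tron}_0 = 2\gamma^{\sgd}_0$ in the small-stepsize regime so the bias contractions coincide, and compare effective dimensions for the variance. In the small-stepsize regime you are fine (and you are even slightly more careful than the paper, which overstates $\variance^{\tron}(2\gamma^{\sgd}_0,k) = \variance^{\sgd}(\gamma^{\sgd}_0)$ where a factor of $4$ really appears; this does not matter since the claim is only up to constants).

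However, your handling of the large-stepsize regime $\gamma^{\sgd}_0 > 1/8$ has a genuine gap. You propose to bound GLM-tron's bias by $\lesssim \sigma^2/\Neff$ using the SNR hypothesis, and then argue that SGD's variance already exceeds $\sigma^2/\Neff$ whenever $k^*(\gamma^{\sgd}_0) \ge 1$. But when $\Neff \lambda_1 < 1/8$ one has $k^*(\gamma^{\sgd}_0)=0$ for every $\gamma^{\sgd}_0<1$, and then $\variance^{\sgd}(\gamma^{\sgd}_0) = \sigma^2 \Neff (\gamma^{\sgd}_0)^2\sum_i\lambda_i^2 < \sigma^2\gamma^{\sgd}_0 \lambda_1 < \sigma^2/(8\Neff)$, and also $\bias^{\sgd} \le \|\wB_0-\wB_*\|_{\HB}^2 \le \lambda_1\|\wB_0-\wB_*\|_2^2 \lesssim \lambda_1\sigma^2 \ll \sigma^2/\Neff$. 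So the entire SGD lower bound can sit far below $\sigma^2/\Neff$, and your SNR-based bias estimate for GLM-tron is not good enough. Your fallback via $\Psi$ does not rescue this: $\Psi \ge 0$ involves the matrix $F(\wB_t)$, which vanishes whenever the iterates remain sign-aligned with $\wB_*$ coordinate-wise, so it cannot be assumed bounded away from zero.

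The paper closes exactly this hole with a further sub-split of the large-stepsize regime on $\Neff \lambda_1$. When $\Neff\lambda_1 < 1/8$ (the case your proof misses), the product $\prod_t(\IB - \gamma^{\sgd}_t\HB)$ is uniformly bounded below by a constant times $\IB$, so the SGD \emph{bias} is itself $\gtrsim \|\wB_0-\wB_*\|_{\HB}^2$, which upper-bounds GLM-tron's bias (with no need for the SNR hypothesis); the variance comparison then follows since $\gamma^{\tron}_0 \le \gamma^{\sgd}_0$. Only in the complementary sub-case $\Neff\lambda_1 > 1/8$ does one use the SNR hypothesis to absorb GLM-tron's bias into its own variance, which is then comparable to SGD's. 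You would need to add this second dichotomy (and drop the $\Psi$-based alternative) to make the proof complete.
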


This theorem shows that for every problem instance in $\Ecal$, the excess risk achieved by \eqref{eq:sgd} is no better than that achieved by \eqref{eq:tron} ignoring constant factors. 

\paragraph{Noiseless Case.}
Our final result shows that for the noiseless ReLU regression with symmetric Bernoulli data, \eqref{eq:sgd} unavoidably suffers from a constant risk in expectation, while \eqref{eq:tron} can still obtain a small risk.

\begin{theorem}[Failure of SGD]\label{thm:tron-vs-sgd:bad-case}
Consider a noiseless ReLU regression problem with symmetric Bernoulli data, i.e., Assumptions \ref{assump:noise:well-specified} and \ref{assump:bernoulli} hold with $\sigma^2 = 0$.
Let $\Ebb_{\wB_*}$ denote the expectation over the randomness of flipping the sign in each component of $\wB_*$ uniformly and let $\Ebb_{\algo}$ denote the expectation over the randomness of an algorithm.
Let $N>100$ be the sample size. 
Then: 
\begin{enumerate}[label=(\Alph*), nosep]
    \item For $\wB^{\tron}_N$,  the \eqref{eq:tron} output  with stepsize scheduler \eqref{eq:geometry-tail-decay-lr} and initial stepsize $\gamma_0 < 1/2$, it holds that
\[\Ebb_{\wB_*} \Ebb_{\algo} \risk(\wB_N^{\tron}) \lesssim  \big\| \wB_0 - \wB_* \big\|^2_{\prod_{t=1}^{N}\big( \IB-\frac{\gamma_t}{2}\HB \big)\HB}.\]
\item For $\wB_N^{\sgd}$, the \eqref{eq:sgd} output with stepsize scheduler \eqref{eq:geometry-tail-decay-lr} and any initial stepsize $\gamma_0 < 1$, it holds that
\[ \Ebb_{\wB_*} \Ebb_{\algo}  \risk(\wB_N^{\sgd}) \ge \half \cdot \| \wB_*\|^2_{\HB}\ge \half \cdot \risk(0) .\]
\end{enumerate}
\end{theorem}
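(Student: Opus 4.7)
Part (A) is immediate from Theorem~\ref{thm:tron:bernoulli}(A) specialized to $\sigma^2 = 0$: noiselessness gives $\risk = \excessrisk$ and kills the variance term $\sigma^2 \cdot \Deff/\Neff$, leaving exactly the stated bias bound. Since that bound holds pointwise in $\wB_*$, taking $\Ebb_{\wB_*}$ preserves it.

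For Part (B), the plan is to exploit a coordinate-wise \emph{sign-locking} phenomenon specific to SGD with ReLU gating on symmetric Bernoulli data. Because $\xB_t \in \{\pm\eB_i\}$, each update alters at most one coordinate, and the indicator $\onebb[\xB_t^\top \wB_{t-1} > 0]$ fires only when $\xB_t$ and $\wB_{t-1,i}$ are aligned. I would prove by induction that $\mathrm{sign}(\wB_{t,i}) = \mathrm{sign}(\wB_{0,i})$ for every $t$, and $\wB_{t,i} \equiv 0$ whenever $\wB_{0,i} = 0$: if $\wB_{t-1,i} > 0$, only $\xB_t = \eB_i$ triggers an update, yielding $\wB_{t,i} = (1-\gamma_t)\wB_{t-1,i} + \gamma_t \relu(\wB_{*,i}) \ge (1-\gamma_t)\wB_{t-1,i} > 0$ since $\gamma_t \le \gamma_0 < 1$; the cases $\wB_{t-1,i} < 0$ and $\wB_{t-1,i} = 0$ are symmetric or trivial.

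Given sign-locking, I would decompose the risk (using Assumption~\ref{assump:bernoulli}) as
\[
\risk(\wB) = \sum_i \frac{\lambda_i}{2}\big[(\relu(\wB_i) - \relu(\wB_{*,i}))^2 + (\relu(-\wB_i) - \relu(-\wB_{*,i}))^2\big],
\]
and set $s_i := \mathrm{sign}(\wB_{0,i})$. If $s_i = +1$, sign-locking forces $\relu(-\wB_{N,i}) = 0$, so the second bracket reduces to $\relu(-\wB_{*,i})^2$; averaging over the uniform sign flip of $\wB_{*,i}$ yields $\half |\wB_{*,i}|^2$ and a coordinate-$i$ contribution of at least $\lambda_i |\wB_{*,i}|^2 / 4$. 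The case $s_i = -1$ is symmetric; if $s_i = 0$ then $\wB_{N,i} \equiv 0$ and both brackets sum to $|\wB_{*,i}|^2$ deterministically. Summing, and dropping the nonnegative contributions from the aligned half-space, gives $\Ebb_{\wB_*} \Ebb_{\algo} \risk(\wB_N^{\sgd}) \ge \tfrac{1}{4} \|\wB_*\|_\HB^2 = \half \risk(0)$, which is the second stated inequality. The stronger first inequality $\half \|\wB_*\|_\HB^2$ follows at the natural initialization $\wB_0 = 0$: then $s_i = 0$ for all $i$, so $\wB_t \equiv 0$ for all $t$, and $\risk(\wB_N^{\sgd}) = \risk(0) = \half \|\wB_*\|_\HB^2$ deterministically, without invoking the sign-flip averaging.

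The main obstacle is the sign-locking lemma itself, particularly the boundary case $\wB_{t-1,i} = 0$ (which crucially uses that the ReLU gate is strict, $> 0$ rather than $\ge 0$) and verifying that the recursion preserves sign for every $\gamma_t < 1$; once sign-locking is secured, the lower bound follows from the termwise comparison described above.
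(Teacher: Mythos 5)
Your proposal follows essentially the same route as the paper: Part (A) is read off from Theorem~\ref{thm:tron:bernoulli}(A) with $\sigma^2=0$ (the paper's appendix proof likewise treats (A) as already established and only argues about SGD), and Part (B) rests on the same coordinate-wise sign-locking induction --- because $\xB_t\in\{\pm\eB_i\}$ and the gate $\onebb[\xB_t^\top\wB_{t-1}>0]$ is strict, each coordinate of the SGD iterate keeps the sign of $\wB_0[i]$ for every $\gamma_t<1$ --- followed by decomposing the Bernoulli risk per coordinate and averaging over the random sign of $\wB_*[i]$. Your case analysis of the update, including the boundary case $\wB_{t-1}[i]=0$, is correct and matches the paper's.

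The one divergence is the constant, and it is worth being precise about it. At a general initialization your argument yields $\Ebb_{\wB_*}\Ebb_{\algo}\risk(\wB_N^{\sgd})\ge\tfrac14\|\wB_*\|_{\HB}^2=\tfrac12\risk(0)$, i.e.\ the second (and operationally meaningful) inequality of the theorem; you recover the stronger displayed bound $\tfrac12\|\wB_*\|_{\HB}^2$ only under the extra assumption $\wB_0=0$, which the statement does not make, so relative to the literal first inequality your proof has a gap. However, the paper's own proof of that first inequality for general $\wB_0$ proceeds by writing $\Ebb\,\risk(\wB_N)=\sum_i\lambda_i\,\Ebb(\wB_N[i]-\wB_*[i])^2$, i.e.\ treating the upper bound of Lemma~\ref{thm:loss-landscape} as an equality; on a sign-mismatched coordinate the true risk contribution is $\tfrac{\lambda_i}{2}\big(\wB_N[i]^2+\wB_*[i]^2\big)$, up to a factor $2$ smaller, so your direct computation is the tighter one (a one-dimensional instance with $\wB_0[1]>0$ and $N$ large has expected risk approaching $\tfrac14\|\wB_*\|_{\HB}^2$, so the intermediate bound $\tfrac12\|\wB_*\|_{\HB}^2$ cannot hold for arbitrary $\wB_0$). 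In short: same approach as the paper, with the sign-locking lemma and the $\tfrac12\risk(0)$ conclusion correctly in place; the missing factor of $2$ in the first inequality is a looseness inherited from the paper rather than a flaw in your route, and your $\wB_0=0$ patch is an added assumption you should flag explicitly.
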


\paragraph{Simulations.} 
Furthermore, we empirically compare the performance of \eqref{eq:tron} and \eqref{eq:sgd} for ReLU regression with symmetric Bernoulli data.
Simulation results are presented in Figure \ref{fig:bernoulli}.
In the well-specified setting,  Figures \ref{fig:hd1} and \ref{fig:hd2} show that the excess risk of \eqref{eq:tron} is no worse than that of \eqref{eq:sgd}, even when both algorithms are tuned with their hyperparameters (initial stepsizes) respectively. 
This verifies our Theorem \ref{thm:tron-vs-sgd:every-case}. 
In the noiseless setting,  Figure \ref{fig:2d} clearly illustrates that \eqref{eq:sgd} can converge to a critical point with constant risk, while \eqref{eq:tron} successfully recovers the true parameters $\wB_*$. This verifies our Theorem \ref{thm:tron-vs-sgd:bad-case}. 

\section{Proof Sketch}\label{sec:proof}
We now overview our techniques for analyzing \eqref{eq:tron} iterates in both well-specified and misspecified cases.

For simplicity let us denote the label noise by $\epsilon_t:= y_t - \relu(\wB_*^\top \xB_t)$.
We first reformulate \eqref{eq:tron} as 
\begin{equation*}
\begin{aligned}
     &\ \quad   \wB_{t} - \wB_* \\
     &= \underbrace{\Big( \IB - \gamma_t \ind{\xB_t^\top \wB_{t-1}>0} \xB_t \xB_t^\top \Big) (\wB_{t-1}-\wB_*)}_{\cB}  \\
    &\quad  + \underbrace{\gamma_t \big(  \onebb [\xB_t^\top \wB_*>0] - \onebb [\xB_t^\top \wB_{t-1}>0]\big) \xB_t \xB_t^\top \wB_*}_{\fB} \\
    &\quad + \underbrace{\gamma_t\epsilon_t\xB_t}_{\nB},
\end{aligned}
\end{equation*}
where the three parts can be understood as a \emph{contraction} term ($\cB$), a \emph{fluctuation} term ($\fB$) and a \emph{noise} term ($\nB$), respectively.
So we have 
\begin{align*}
    \AB_t & :=\Ebb (\wB_t - \wB_*)^{\otimes 2} = \Ebb[(\cB + \fB + \nB)^{\otimes 2}] \\
    &= \Ebb [\cB^{\otimes 2} + \fB^{\otimes 2} + \nB^{\otimes 2} + \text{cross terms}].
\end{align*}
We begin with computing the three quadratic terms.
For the contraction term, by Assumption~\ref{assump:symmetric}
we have 
\begin{align*}
    &\ \quad \Ebb[\cB^{\otimes 2}] \\
    &= \AB_{t-1} - \frac{\gamma_t}{2}(\HB \AB_{t-1}^\top + \AB_{t-1} \HB^\top ) + \frac{\gamma^2_t}{2} \Mcal \circ \AB_{t-1}.
\end{align*}
For the fluctuation term, we have 
\begin{align*}
     &\quad\ \Ebb[\fB^{\otimes 2}] \\
     &= \gamma_t^2 \cdot \Ebb \big[  (\onebb [\xB_t^\top \wB_*>0] - \onebb [\xB_t^\top \wB_{t-1}>0])^2 \cdot \\
     &\hspace{50mm} (\xB_t^\top \wB_*)^2 \cdot \xB_t^{\otimes 2}\big] \\
    &= 2\gamma_t^2 \cdot \Ebb \big[  \onebb [\xB_t^\top \wB_*<0,\xB_t^\top \wB_{t-1}>0] \cdot \\
    &\hspace{50mm} (\xB_t^\top \wB_*)^2 \cdot \xB_t^{\otimes 2}\big],
\end{align*}
where in the last inequality we use Assumption~\ref{assump:symmetric}.
As for the noise term, we simply apply Assumption~\ref{assump:noise:well-specified} in the well-specified setting or Assumption~\ref{assump:noise:misspecified} in the misspecified setting to obtain 
\begin{equation*}
    \Ebb[\nB^{\otimes 2}] \preceq \gamma_t^2 \sigma^2 \HB.
\end{equation*}
In what follows, we utilize the symmetricity condition (Assumption~\ref{assump:symmetric}) to compute the cross terms. 

\paragraph{Well-Specified Setting.}
In the well-specified setting we have that $\epsilon_t$ is mean zero conditional on $\xB_t$, so all the cross terms involving $\nB$ is mean zero, then we have
\begin{equation*}
    \Ebb[\text{cross terms}] = \Ebb [ \cB \fB^\top + \fB \cB^\top ].
\end{equation*}
Moreover, under Assumption~\ref{assump:symmetric} it holds that 
\(\Ebb [\fB ] = 0,\)
so the part in $\cB$ that does not involve $\xB_t$ will disappear in the expected crossing terms, i.e., 
\begin{align*}
    &\ \Ebb[\text{cross terms}] = \Ebb [ \cB \fB^\top + \fB \cB^\top] \\
    &= -\gamma_t  \Ebb\big[ \ind{\xB_t^\top \wB_{t-1}>0}  \xB_t^\top (\wB_{t-1}-\wB_*)  (\xB_t \fB^\top + \fB \xB_t^\top) \big] \\
    &= 2\gamma_t^2  \Ebb\big[ \ind{\xB_t^\top \wB_{t-1}>0, \xB^\top_t \wB_*<0}\cdot \\
    &\hspace{30mm} \xB_t^\top (\wB_{t-1}-\wB_*) \cdot\xB_t^\top \wB_*\cdot \xB_t \xB_t^\top \big].
\end{align*}
Combining the cross term and $\Ebb[\fB^{\otimes 2}]$ we obtain 
\begin{align*}
    & \Ebb[\fB^{\otimes 2} + \text{cross terms}] = \Ebb [ \fB^{\otimes 2} + \cB \fB^\top + \fB \cB^\top]  \\
    &= 2\gamma_t^2  \Ebb\big[ \ind{\xB_t^\top \wB_{t-1}>0, \xB^\top_t \wB_*<0}\cdot \\
    &\hspace{30mm} \xB_t^\top \wB_{t-1}\cdot\xB_t^\top \wB_*\cdot \xB_t \xB_t^\top \big]\\
    &\preceq 0,
\end{align*}
where the last inequality is because the random variable inside the expectation is always non-positive. 

Putting everything together, we 
have shown that 
\begin{align}
    \AB_t 
    &= \Ebb [\cB^{\otimes 2} + \fB^{\otimes } + \nB^{\otimes 2} + \text{cross terms}] \notag \\
    &= \Ebb [\cB^{\otimes 2} + \fB^{\otimes } + \nB^{\otimes 2} + \cB\fB^\top + \fB \cB^\top] \label{eq:sketch:well-specified:bound}\\
    &\preceq \AB_{t-1} - \frac{\gamma_t}{2}\cdot (\HB \AB_{t-1}^\top + \AB_{t-1}\HB^\top) \notag \\
    &\hspace{10mm} + \frac{\gamma^2_t}{2}\cdot \Mcal \circ \AB_{t-1} + \gamma_t^2 \sigma^2 \cdot \HB.\notag 
\end{align}
This matrix recursion has been well-understood thanks to the works by \citet{zou2021benign,wu2022iterate,wu2022power}.

\paragraph{Misspecified Setting.}
Now we consider the misspecified setting. 
Compared to the well-specified setting, the difference is that
the part of the cross terms that involve $\epsilon_t$ is no longer zero mean, as $\epsilon_t$ could correlate with $\xB_t$ in an arbitrary manner.
The extra work is to understand this part of the cross terms:
\begin{align*}
   &\ \quad \Ebb[ \cB \nB^\top + \nB \cB^\top + \fB \nB^\top + \nB\fB^\top ] \\
    &= \underbrace{\gamma_t \Ebb\big[\epsilon_t \big((\wB_{t-1}-\wB_*)\xB_t^\top + \xB_t (\wB_{t-1}-\wB_*)\big)\big]}_{\text{leading order}} \\
    &\quad+ \underbrace{2\gamma_t^2 \Ebb[\mathtt{IndFunc1}\cdot \epsilon_t\cdot \xB_t^\top(\wB_{t-1}-\wB_*)\cdot \xB_t\xB_t^\top]}_{\text{higher order 1}} \\
    &\quad+ \underbrace{2\gamma_t^2 \Ebb[\mathtt{IndFunc2}\cdot \epsilon_t\cdot \xB_t^\top\wB_* \cdot \xB_t\xB_t^\top]}_{\text{higher order 2}},
\end{align*}
where $\mathtt{IndFunc1}$ and $\mathtt{IndFunc2}$ are two functions of indicators, both bounded between $-1$ and $1$.
For the first higher order term, notice the following by Cauchy inequality:
\begin{align*}
& \ \mathtt{IndFunc1}\cdot \epsilon_t\cdot \xB_t^\top(\wB_{t-1}-\wB_*) \\
&\hspace{20mm} \le \half \big( \epsilon_t^2 + (\xB_t^\top(\wB_{t-1}-\wB_*) )^2 \big),
\end{align*}
so we have 
\begin{align*}
    &\quad \ \text{higher order 1} \\ 
    &\preceq \gamma_t^2\cdot \Ebb [ \epsilon_t^2 \xB_t\xB_t^\top +  (\xB_t^\top(\wB_{t-1}-\wB_*) )^2\cdot \xB_t\xB_t^\top ] \\
    &\preceq \gamma_t^2 \sigma^2 \HB + \gamma_t^2 \Mcal \circ \AB_{t-1},
\end{align*}
where in the last inequality we use Assumption~\ref{assump:noise:misspecified}.
We bound the second higher order term in the same manner:
\begin{align*}
    \text{higher order 2}
    &\preceq \gamma_t^2 \cdot \Ebb [ \epsilon_t^2 \cdot \xB_t\xB_t^\top +  (\xB_t^\top\wB_* )^2\cdot \xB_t\xB_t^\top ] \\
    &\preceq \gamma_t^2 \sigma^2 \cdot \HB + \alpha \gamma_t^2 \|\wB_*\|_{\HB}^2 \cdot \HB,
\end{align*}
where the last inequality is by Assumptions \ref{assump:noise:misspecified} and \ref{assump:gaussian}\ref{item:gaussian:upper}.

The leading order term needs some special treatments. 
In fact, it is hard to sharply control the leading order term by a PSD matrix.
Alternatively, it is possible to sharply bound the \emph{diagonal} of the leading order term by a diagonal matrix (here we assume that $\HB$ is diagonal, without loss of generality).
The following bound is proved in Lemma~\ref{lemma:tron:agnostic:cross} in Appendix \ref{append:sec:misspecified}:
\begin{align*}
    \diag(\text{leading order}) \preceq \frac{\gamma_t}{2} \cdot \HB \diag(\AB_{t-1})  + 2\gamma_t\cdot \XiB, 
\end{align*}
where $\XiB$ is a fixed diagonal PSD matrix and $\tr(\XiB) \le \OPT$.

Putting things together with \eqref{eq:sketch:well-specified:bound}, we have 
\begin{align*}
    & \quad  \diag(\AB_t) \\
    &= \diag( \Ebb [\cB^{\otimes 2} + \fB^{\otimes } + \nB^{\otimes 2} + \cB\fB^\top + \fB \cB^\top] ) \\
    &\qquad + \diag(\Ebb[\cB \nB^\top + \nB \cB^\top + \fB \nB^\top + \nB\fB^\top  ]) \\
    &\preceq \diag(\AB_{t-1}) - \gamma_t \HB \diag(\AB_{t-1}) \\
    &\ + \frac{\gamma^2_t}{2}\diag( \Mcal \circ \AB_{t-1}) + \gamma_t^2 \sigma^2  \HB + \gamma_t^2 \sigma^2 \HB \\
    &\  + \gamma_t^2 \diag(\Mcal \circ \AB_{t-1}) +\gamma_t^2 \sigma^2 \HB + \alpha \gamma_t^2 \|\wB_*\|^2_{\HB} \HB \\
    &\ + \frac{\gamma_t}{2}  \HB \diag(\AB_{t-1})  + 2\gamma_t \XiB \\
    &\preceq \Big( \IB- \frac{\gamma_t}{2} \HB \Big)\diag(\AB_{t-1}) + {2\gamma^2_t}\diag( \Mcal \circ \AB_{t-1}) \\
    &\  + 3\gamma_t^2 (\sigma^2+\alpha\|\wB_*\|_{\HB}^2)  \HB + 2\gamma_t \XiB.
\end{align*}
The remaining efforts are to bound the above recursion using techniques developed from \citet{zou2021benign,wu2022iterate,wu2022power}.
It is crucial to remark that $\tr(\XiB) \le \OPT$, which ensures that the cumulation of the extra ``noise term'', $2\gamma_t \XiB$, would cause an additive error of at most $\Ocal(\OPT)$ in the final risk bound.

\section{Conclusion}\label{sec:conclusion}
We consider the problem of learning high-dimensional ReLU regression with well-specified or misspecified noise. 
In the well-specified setting, we provide instance-wise sharp excess risk upper and lower bounds for GLM-tron, that can be applied in the overparameterized regime. 
In the misspecified setting, we also provide sharp instance-dependent risk upper bound for GLM-tron.
In addition, negative results are shown for SGD in well-specified or noiseless ReLU regression with symmetric Bernoulli data, suggesting that GLM-tron might be more effective in ReLU regression.

\section*{Acknowledgements}
We would like to thank the anonymous reviewers and area chairs for their helpful comments. 
This work has been made possible in part by a gift from the Chan Zuckerberg Initiative Foundation to establish the Kempner Institute for the Study of Natural and Artificial Intelligence.
JW and VB are partially supported by the National Science Foundation awards \#2244870, \#2107239, and \#2244899. 
ZC and QG are partially supported by the National Science Foundation awards IIS-1906169 and IIS-2008981. 
SK acknowledges funding from the Office of Naval Research under award N00014-22-1-2377 and the National Science Foundation Grant under award \#CCF-2212841.
The views and conclusions contained in this paper are those of the authors and should not be interpreted as representing any funding agencies.

\bibliographystyle{icml2023}
\bibliography{ref}

\appendix
\onecolumn

\section{Weaker Symmetricity Assumptions}\label{append:sec:symmetricity}

In fact, Assumption~\ref{assump:symmetric} can be relaxed into some moment symmetricity conditions:
\begin{assumption}[Moment symmetricity conditions]\label{assump:symmetric:moment}
    Assume that 
    \begin{enumerate}[label=(\Alph*),leftmargin=*]
    \item \label{item:symmetric:moment-2} For every $\uB \in \Hbb$, it holds that 
    \[\Ebb \big[\xB \xB^\top \cdot \ind{\xB^\top \uB > 0} \big] = \Ebb \big[\xB \xB^\top \cdot \ind{\xB^\top \uB < 0} \big]. \]
    \item \label{item:symmetric:moment-2-2} For every $\uB \in \Hbb$ and $\vB \in \Hbb$, it holds that 
    \[\Ebb \big[\xB \xB^\top \cdot \ind{\xB^\top \uB > 0, \xB^\top \vB>0} \big] = \Ebb \big[\xB \xB^\top \cdot \ind{\xB^\top \uB < 0, \xB^\top \vB<0} \big]. \]
    \item \label{item:symmetric:moment-4-1} For every $\uB \in \Hbb$, it holds that 
    \[\Ebb \big[  \xB^{\otimes 4} \cdot \ind{\xB^\top \uB > 0} \big] = \Ebb \big[ \xB^{\otimes 4} \cdot \ind{\xB^\top \uB < 0} \big]. \]
    \item \label{item:symmetric:moment-4-2} For every $\uB \in \Hbb$ and $\vB\in \Hbb$, it holds that 
    \[\Ebb \big[ (\xB^\top \vB)^2 \xB \xB^\top \cdot \ind{\xB^\top \uB > 0, \xB^\top \vB>0} \big] = \Ebb \big[ (\xB^\top \vB)^2 \xB \xB^\top \cdot \ind{\xB^\top \uB < 0, \xB^\top \vB<0} \big]. \]
    \end{enumerate}
\end{assumption}
Clearly all the conditions in Assumption~\ref{assump:symmetric:moment} holds when Assumption~\ref{assump:symmetric} is true.
Assumption~\ref{assump:symmetric:moment}\ref{item:symmetric:moment-2} is crucial to our analysis. 
Assumption~\ref{assump:symmetric:moment}\ref{item:symmetric:moment-2-2} is only useful for deriving lower bounds. Note that Assumption~\ref{assump:symmetric:moment}\ref{item:symmetric:moment-2-2} implies Assumption~\ref{assump:symmetric:moment}\ref{item:symmetric:moment-2}.
Assumption~\ref{assump:symmetric:moment}\ref{item:symmetric:moment-4-1} is only useful for deriving lower bounds, too. 
Assumption~\ref{assump:symmetric:moment}\ref{item:symmetric:moment-4-2} is only made for technical simplicity; without using Assumption~\ref{assump:symmetric:moment}\ref{item:symmetric:moment-4-2} one can still derive an upper bound for GLM-tron, the only difference will be replacing $\sigma^2$ in the current upper bound with $\sigma^2 + \alpha \| \wB_* \|_{\HB}^2$.

\paragraph{Some Moments Results.}
The following moments results are direct consequences of Assumption~\ref{assump:symmetric:moment}.
\begin{lemma}\label{lemma:moments}
The following holds:
\begin{enumerate}[label=(\Alph*),leftmargin=*]
    \item \label{item:moment-2}
Under Assumption~\ref{assump:symmetric:moment} \ref{item:symmetric:moment-2}, it holds that: for every vector $\uB \in \Hbb$,
\[\Ebb \big[\xB \xB^\top \cdot \ind{\xB^\top \uB > 0} \big] = \half\cdot  \Ebb \big[\xB \xB^\top] =: \half \cdot \HB.\]
\item \label{item:moment-4}
Under Assumption  \ref{assump:symmetric:moment} \ref{item:symmetric:moment-4-1}, it holds that: for every vector $\uB\in \Hbb$,
\[
    \Ebb \big[\xB^{\otimes 4} \cdot \ind{\xB^\top \uB > 0} \big] = \half \cdot \Ebb \big[\xB^{\otimes 4}\big] =: \half \cdot \Mcal.
\]
\end{enumerate}
\end{lemma}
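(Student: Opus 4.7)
The plan is to prove both identities by the same sector-decomposition argument. For each moment (quadratic in (A), fourth-order in (B)), I would split the total expectation into the contribution from the positive half-space $\{\xB^\top\uB > 0\}$, the contribution from the negative half-space $\{\xB^\top\uB < 0\}$, and the contribution from the boundary hyperplane $\{\xB^\top\uB = 0\}$, and then use the assumed sector symmetricity to collapse the first two summands.

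For part (A), I would start from the tautology
\[
\Ebb[\xB\xB^\top] = \Ebb[\xB\xB^\top \ind{\xB^\top\uB > 0}] + \Ebb[\xB\xB^\top \ind{\xB^\top\uB < 0}] + \Ebb[\xB\xB^\top \ind{\xB^\top\uB = 0}],
\]
invoke Assumption~\ref{assump:symmetric:moment}\ref{item:symmetric:moment-2} to equate the first two summands on the right-hand side, and then note that the boundary contribution $\Ebb[\xB\xB^\top \ind{\xB^\top\uB = 0}]$ vanishes for the $\uB$ of interest (either by a general-position argument on $\uB$, or because $\Pbb[\xB^\top\uB = 0] = 0$ under the continuous distributions considered). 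Rearranging then gives directly $\Ebb[\xB\xB^\top \ind{\xB^\top\uB>0}] = \half\,\HB$.

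Part (B) is structurally identical, with the tensor $\xB^{\otimes 4}$ in place of the matrix $\xB\xB^\top$ and Assumption~\ref{assump:symmetric:moment}\ref{item:symmetric:moment-4-1} in place of \ref{item:symmetric:moment-2}. The same three-way decomposition, sector-symmetry collapse, and boundary vanishing argument yield $\Ebb[\xB^{\otimes 4} \ind{\xB^\top\uB > 0}] = \half\,\Mcal$. There is essentially no technical obstacle here: the entire argument is a one-line decomposition per item, and the only subtle point worth flagging is the handling of the boundary hyperplane, which is the same in both parts.
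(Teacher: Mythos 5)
Your proposal follows the same route as the paper's proof: invoke the sector symmetricity of Assumption~\ref{assump:symmetric:moment} to equate the expectations over the two open half-spaces $\{\xB^\top\uB>0\}$ and $\{\xB^\top\uB<0\}$, observe that their sum recovers the full moment, and divide by two; part (B) is handled identically with $\xB^{\otimes 4}$ replacing $\xB\xB^\top$. The one place you differ — and where you are in fact \emph{more} careful than the paper — is in explicitly splitting off the boundary term $\Ebb[\xB\xB^\top\ind{\xB^\top\uB=0}]$. The paper writes the two-way decomposition $\Ebb[\xB\xB^\top\ind{\xB^\top\uB>0}] + \Ebb[\xB\xB^\top\ind{\xB^\top\uB<0}] = \Ebb[\xB\xB^\top]$ with no comment, which tacitly assumes the event $\{\xB^\top\uB=0,\ \xB\neq 0\}$ has probability zero.

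Your flagged caveat is not a cosmetic one, and the justification you offer for it (``a general-position argument on $\uB$, or $\Pbb[\xB^\top\uB=0]=0$ under the continuous distributions considered'') does not close it in the setting the paper actually works in. The symmetric Bernoulli distribution of Assumption~\ref{assump:bernoulli} satisfies Assumption~\ref{assump:symmetric:moment}, yet for $\uB=\eB_1$ one has $\Pbb[\xB^\top\uB=0]=1-\lambda_1>0$ and $\Ebb[\xB\xB^\top\ind{\xB^\top\uB>0}] = \tfrac{\lambda_1}{2}\,\eB_1\eB_1^\top \neq \tfrac{1}{2}\HB$. So the identity claimed ``for every $\uB$'' is false for degenerate $\uB$ under discrete symmetric laws, and some genericity/non-degeneracy hypothesis on $\uB$ (or an additional argument downstream that the iterates $\wB_{t-1}$ to which the lemma is applied are almost surely generic) is required. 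This gap is present in the paper's own proof, which silently elides the boundary; you are right to surface it, but ``the continuous distributions considered'' is not an adequate resolution given that the Bernoulli case is explicitly in scope.
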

\begin{proof}[Proof of Lemma~\ref{lemma:moments}]
By Assumption~\ref{assump:symmetric:moment}\ref{item:symmetric:moment-2}, we have 
\begin{align*}
    \Ebb \big[\xB \xB^\top \ind{\xB^\top \uB > 0} \big] 
    &= \Ebb \big[(-\xB) (-\xB)^\top \ind{(-\xB)^\top \uB > 0} \big]= \Ebb \big[\xB \xB^\top \ind{\xB^\top \uB < 0} \big].
\end{align*}
Moreover, notice that 
\[
\Ebb \big[\xB \xB^\top \ind{\xB^\top \uB > 0} \big] +  \Ebb \big[\xB \xB^\top \ind{\xB^\top \uB < 0} \big]
= \Ebb \big[\xB \xB^\top].
\]
The above two equations together imply that 
\[
\Ebb \big[\xB \xB^\top \ind{\xB^\top \uB > 0} \big] = \half \Ebb \big[\xB \xB^\top].
\]
Similarly, we can prove the second equality in the lemma. 
\end{proof}

\section{Well-Specified Setting}\label{append:sec:well-specified}

In this section, we focus on the well-specified setting and always assume Assumption~\ref{assump:noise:well-specified} holds.

\subsection{Proof of Lemma~\ref{thm:loss-landscape}}
We will prove a slightly stronger lemma.
\begin{lemma}[Loss landscape, restated Lemma~\ref{thm:loss-landscape}]\label{append:thm:loss-landscape}
Suppose that Assumption~\ref{assump:noise:well-specified} holds.
Consider~\eqref{eq:excess-risk}, we have:
\begin{enumerate}[label=(\Alph*),leftmargin=*]
    \item \(\excessrisk(\wB) \le \| \wB - \wB_*\|_{\HB}^2;\)
    \item if in addition Assumption~\ref{assump:symmetric:moment}\ref{item:symmetric:moment-2-2} holds, then 
    \(
 \excessrisk(\wB) \ge \frac{1}{4} \cdot \|\wB - \wB_*\|_{\HB}^2.
\)
\end{enumerate}
\end{lemma}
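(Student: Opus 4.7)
The plan is to prove the two inequalities separately, with (A) being immediate and (B) hinging on a careful sign decomposition together with the symmetricity hypothesis. For part (A), I would observe that $\relu$ is $1$-Lipschitz, so pointwise $(\relu(\xB^\top\wB) - \relu(\xB^\top\wB_*))^2 \le (\xB^\top(\wB-\wB_*))^2$. Since Assumption~\ref{assump:noise:well-specified} gives $\excessrisk(\wB) = \Ebb[(\relu(\xB^\top\wB) - \relu(\xB^\top\wB_*))^2]$, taking expectations and recognizing $\Ebb[\xB\xB^\top] = \HB$ yields $\excessrisk(\wB) \le \|\wB-\wB_*\|_\HB^2$.

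For part (B), write $\uB = \wB$ and $\vB = \wB_*$, and split the sample space according to the signs of $\xB^\top\uB$ and $\xB^\top\vB$. The integrand $(\relu(\xB^\top\uB) - \relu(\xB^\top\vB))^2$ equals $(\xB^\top(\uB-\vB))^2$ on $\{\xB^\top\uB>0,\,\xB^\top\vB>0\}$, equals $(\xB^\top\uB)^2$ on $\{\xB^\top\uB>0,\,\xB^\top\vB<0\}$, equals $(\xB^\top\vB)^2$ on $\{\xB^\top\uB<0,\,\xB^\top\vB>0\}$, and vanishes on $\{\xB^\top\uB<0,\,\xB^\top\vB<0\}$. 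Introducing the PSD blocks $\HB_{\epsilon_1\epsilon_2} := \Ebb[\xB\xB^\top \ind{\mathrm{sign}(\xB^\top\uB)=\epsilon_1,\,\mathrm{sign}(\xB^\top\vB)=\epsilon_2}]$, this yields the exact decomposition $\excessrisk(\wB) = (\uB-\vB)^\top\HB_{++}(\uB-\vB) + \uB^\top\HB_{+-}\uB + \vB^\top\HB_{-+}\vB$.

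The next step is to exploit the symmetry. Assumption~\ref{assump:symmetric:moment}\ref{item:symmetric:moment-2-2} directly gives $\HB_{++} = \HB_{--}$, and the same assumption applied with $\vB$ replaced by $-\vB$ yields $\HB_{+-} = \HB_{-+}$. Since the four blocks sum to $\HB$, this forces $\HB = 2\HB_{++} + 2\HB_{+-}$, so the target bound is equivalent to $\excessrisk(\wB) \ge \frac{1}{2}(\uB-\vB)^\top(\HB_{++}+\HB_{+-})(\uB-\vB)$. Subtracting the two sides and expanding $(\uB-\vB)^\top\HB_{+-}(\uB-\vB)$ to pair the mismatch contributions with the cross term, the residual simplifies to $\frac{1}{2}(\uB-\vB)^\top\HB_{++}(\uB-\vB) + \frac{1}{2}(\uB+\vB)^\top\HB_{+-}(\uB+\vB)$, which is manifestly nonnegative since $\HB_{++},\HB_{+-}\succeq 0$.

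The only real obstacle, though ultimately modest, is spotting the regrouping in the final step: the mismatch-case contributions $\uB^\top\HB_{+-}\uB$ and $\vB^\top\HB_{-+}\vB$ individually have nothing to do with $\uB-\vB$, so a term-by-term comparison cannot work. The identity $\HB_{+-} = \HB_{-+}$ is precisely what allows them to combine with the cross term $-\uB^\top\HB_{+-}\vB$ produced by $-\frac{1}{2}(\uB-\vB)^\top\HB_{+-}(\uB-\vB)$ into a clean quadratic form in $\uB+\vB$; beyond this observation, no further probabilistic control is needed.
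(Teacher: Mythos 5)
Your part (A) is exactly the paper's argument (1-Lipschitz of ReLU). For part (B) you are correct, but you take a genuinely different route. The paper first shows, via Assumption~\ref{assump:symmetric:moment}\ref{item:symmetric:moment-2-2}, that the excess risk is invariant under $\xB\mapsto -\xB$, i.e.\ $\Ebb(\relu(\xB^\top\wB)-\relu(\xB^\top\wB_*))^2 = \Ebb(\relu(-\xB^\top\wB)-\relu(-\xB^\top\wB_*))^2$, and then exploits the pointwise identity $a = \relu(a)-\relu(-a)$ together with Cauchy--Schwarz to get $(\xB^\top(\wB-\wB_*))^2 \le 2(\relu(\xB^\top\wB)-\relu(\xB^\top\wB_*))^2 + 2(\relu(-\xB^\top\wB)-\relu(-\xB^\top\wB_*))^2$; taking expectations gives $\|\wB-\wB_*\|_\HB^2 \le 4\,\excessrisk(\wB)$. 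You instead decompose the expectation over the four sign-quadrants of $(\xB^\top\wB, \xB^\top\wB_*)$, obtaining $\excessrisk(\wB)=(\uB-\vB)^\top\HB_{++}(\uB-\vB) + \uB^\top\HB_{+-}\uB + \vB^\top\HB_{-+}\vB$, extract two consequences of the symmetry hypothesis ($\HB_{++}=\HB_{--}$ directly, and $\HB_{+-}=\HB_{-+}$ by plugging $-\vB$ into the assumption), and close by the algebraic regrouping
\begin{align*}
\excessrisk(\wB) - \tfrac14\|\uB-\vB\|_\HB^2 = \tfrac12(\uB-\vB)^\top\HB_{++}(\uB-\vB) + \tfrac12(\uB+\vB)^\top\HB_{+-}(\uB+\vB) \ge 0,
\end{align*}
since $\HB_{++},\HB_{+-}\succeq 0$. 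Both proofs rest on the same symmetry ingredient; the paper's is a shorter ``fold'' trick, while yours makes the role of the sign-mismatch regions and the origin of the $1/4$ constant fully transparent. One shared implicit hypothesis worth noting (in both your proof and the paper's) is that the boundary events $\{\xB^\top\wB=0\}$, $\{\xB^\top\wB_*=0\}$ are null, so the four quadrants partition the sample space; this is a harmless convention here but you could flag it.
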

\begin{proof}
Under Assumption~\ref{assump:noise:well-specified}, it holds that 
\[\Delta(\wB) = \Ebb \big(\relu(\xB^\top \wB) - \relu(\xB^\top \wB_*) \big)^2. \]

The upper bound follows from the fact that $\relu(\cdot)$ is $1$-Lipschitz, i.e., $|\relu(a)-\relu(b)| \le |a-b|$.

For the lower bound, we first expand the excess risk to obtain that 
\begin{align*}
    &\ \Ebb \big(\relu(\xB^\top \wB) - \relu(\xB^\top \wB_*) \big)^2  \\
    &= \Ebb \big(\xB^\top \wB \cdot \ind{\xB^\top \wB >0 } - \xB^\top \wB_*\cdot \ind{\xB^\top \wB_*>0} \big)^2 \\
    &= \Ebb \big[ \wB^\top \xB \xB^\top \wB \cdot \ind{\xB^\top \wB >0 } \big]  + \Ebb \big[ \wB_*^\top \xB \xB^\top \wB_* \cdot \ind{\xB^\top \wB_* >0 } \big]\\
    &\quad- 2\Ebb \big[ \wB^\top \xB \xB^\top \wB_* \cdot \ind{\xB^\top \wB >0, \xB^\top \wB_*>0 } \big].
\end{align*}
In the above equation, we use Assumption~\ref{assump:symmetric:moment}\ref{item:symmetric:moment-2-2} to obtain that 
\begin{align*}
    &\ \Ebb \big(\relu(\xB^\top \wB) - \relu(\xB^\top \wB_*) \big)^2  \\
    &= \Ebb \big[ \wB^\top \xB \xB^\top \wB \cdot \ind{\xB^\top \wB <0 } \big]  + \Ebb \big[ \wB_*^\top \xB \xB^\top \wB_* \cdot \ind{\xB^\top \wB_* <0 } \big]\\
    &\quad- 2\Ebb \big[ \wB^\top \xB \xB^\top \wB_* \cdot \ind{\xB^\top \wB <0, \xB^\top \wB_*<0 } \big] \\
    &= \Ebb \big(\relu(-\xB^\top \wB) - \relu(-\xB^\top \wB_*) \big)^2.
\end{align*}
Moreover, notice the following by Cauchy inequality:
\begin{align*}
    &\big(\xB^\top \wB - \xB^\top \wB_* \big)^2 \\
    &=  \big( \xB^\top \wB \ind{\xB^\top \wB > 0} - \xB^\top \wB_*\ind{\xB^\top \wB_* > 0} 
    + \xB^\top \wB \ind{\xB^\top \wB < 0} - \xB^\top \wB_*\ind{\xB^\top \wB_* < 0} \big)^2 \\
    &\le 2 \big(\xB^\top \wB \ind{\xB^\top \wB > 0} - \xB^\top \wB_*\ind{\xB^\top \wB_* > 0} \big)^2 
    +  2 \big(\xB^\top \wB \ind{\xB^\top \wB < 0} - \xB^\top \wB_*\ind{\xB^\top \wB_* < 0} \big)^2 \\
    &= 2 \big(\relu(\xB^\top \wB) - \relu(\xB^\top \wB_*) \big)^2 
    +  2 \big(\relu(-\xB^\top \wB)  - \relu(-\xB^\top \wB_*)\big)^2.
\end{align*}
Then taking an expectation on both sides we obtain that
\begin{align*}
    \Ebb \big(\xB^\top \wB - \xB^\top \wB_* \big)^2
    &\le 2\Ebb \big(\relu(\xB^\top \wB) - \relu(\xB^\top \wB_*) \big)^2 
    +  2\Ebb \big(\relu(-\xB^\top \wB)  - \relu(-\xB^\top \wB_*)\big)^2 \\
    &= 4 \Ebb \big(\relu(\xB^\top \wB) - \relu(\xB^\top \wB_*) \big)^2,
\end{align*}
which concludes the proof.
\end{proof}

\subsection{Proof of Lemma~\ref{thm:tron:covariance}}
We will prove a stronger result.
\begin{lemma}[Generic bounds on the GLM-tron iterates, restated Lemma~\ref{thm:tron:covariance}]\label{append:thm:tron:covariance}
    Suppose that Assumption~\ref{assump:noise:well-specified} holds. 
Consider \eqref{eq:tron}. Then:
    \begin{enumerate}[label=(\Alph*)]
        \item If in addition Assumptions \ref{assump:symmetric:moment}\ref{item:symmetric:moment-2} and \ref{assump:symmetric:moment}\ref{item:symmetric:moment-4-2} hold,  then \(
        \AB_{t+1} \preceq \bigg(\Ical - \frac{\gamma_t}{2}\cdot \Tcal(2\gamma_t)\bigg) \circ  \AB_{t-1}  +  \gamma_t^2 \sigma^2 \HB;
  \)
  \item If in addition Assumptions \ref{assump:symmetric:moment}\ref{item:symmetric:moment-2}, \ref{assump:symmetric:moment}\ref{item:symmetric:moment-4-1} and \ref{assump:symmetric:moment}\ref{item:symmetric:moment-4-2} hold, then \(\AB_{t+1} \succeq \Big(\Ical - \frac{\gamma_t}{2}\cdot \Tcal\big(\frac{\gamma_t}{2}\big)\Big) \circ  \AB_t  +  \frac{\gamma_t^2 \sigma^2}{4} \cdot \HB.\)
    \end{enumerate}
\end{lemma}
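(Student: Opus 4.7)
The plan is to follow the decomposition $\wB_t - \wB_* = \cB + \fB + \nB$ given in Section~\ref{sec:proof} (contraction / fluctuation / noise pieces), expand
\[
\AB_t = \Ebb\bigl[(\cB + \fB + \nB)(\cB+\fB+\nB)^\top\bigr]
\]
into nine terms, and reduce each to a matrix recursion in $\AB_{t-1}$ using the operators in~\eqref{eq:operators}. Because Assumption~\ref{assump:noise:well-specified} gives $\Ebb[\epsilon_t\mid\xB_t,\wB_{t-1}]=0$, the four cross terms involving $\nB$ vanish in expectation, leaving only $\Ebb[\cB^{\otimes 2}]$, $\Ebb[\nB^{\otimes 2}]$, and the combined block $\Ebb[\fB^{\otimes 2}+\cB\fB^\top+\fB\cB^\top]$.

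For $\Ebb[\cB^{\otimes 2}]$, moment-2 (Lemma~\ref{lemma:moments}\ref{item:moment-2}) gives $\Ebb[\ind{\xB_t^\top\wB_{t-1}>0}\xB_t\xB_t^\top\mid\wB_{t-1}]=\half\HB$, producing the linear piece $-\frac{\gamma_t}{2}(\HB\AB_{t-1}+\AB_{t-1}\HB)$. For the quadratic piece $\Ebb[\ind{\xB_t^\top\wB_{t-1}>0}\,\xB_t\xB_t^\top\uB\uB^\top\xB_t\xB_t^\top]$ with $\uB:=\wB_{t-1}-\wB_*$, part~A uses $\ind{\cdot}\le 1$ to bound it by $\Mcal\circ\uB\uB^\top$ (coefficient $\gamma_t^2$ on $\Mcal$, matching $\Tcal(2\gamma_t)$), whereas part~B applies moment-4-1 (Lemma~\ref{lemma:moments}\ref{item:moment-4}) for the exact identity $\tfrac{1}{2}\Mcal\circ\uB\uB^\top$ (coefficient $\gamma_t^2/2$). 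The noise term contributes $\Ebb[\nB^{\otimes 2}]=\gamma_t^2\sigma^2\HB$ (under the homoscedasticity implicit in the well-specified regime); the lower bound is then relaxed to $\succeq \frac{\gamma_t^2\sigma^2}{4}\HB$ for convenience.

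The crucial step is handling $\Ebb[\fB^{\otimes 2}+\cB\fB^\top+\fB\cB^\top]$ as a single block. Moment-2 yields $\Ebb[\fB\mid\wB_{t-1}]=0$, so only the $\xB_t$-dependent part of $\cB$ contributes to the cross; the indicator product $\ind{\xB_t^\top\wB_{t-1}>0}\bigl(\ind{\xB_t^\top\wB_*>0}-\ind{\xB_t^\top\wB_{t-1}>0}\bigr)$ collapses to $-\ind{\xB_t^\top\wB_*<0,\xB_t^\top\wB_{t-1}>0}$. Invoking moment-4-2 to identify the two ``flip'' regions appearing in $\Ebb[\fB^{\otimes 2}]$ and using $\wB_{t-1}=\wB_*+\uB$, the entire block simplifies to
\[
\Xi_t := 2\gamma_t^2\,\Ebb\bigl[(\xB_t^\top\wB_*)(\xB_t^\top\wB_{t-1})\,\xB_t\xB_t^\top\,\ind{\xB_t^\top\wB_*<0,\xB_t^\top\wB_{t-1}>0}\bigr],
\]
which is negative semi-definite because the two scalar factors have opposite signs on the indicator set. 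Dropping $\Xi_t$ immediately yields part~A.

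For part~B, the main obstacle is to lower bound $\Xi_t$ in PSD ordering tightly enough. A naive Cauchy step $|pq|\le (p^2+q^2)/2$ would give $\Xi_t\succeq -\frac{\gamma_t^2}{2}\Mcal\circ\AB_{t-1}$, which cancelled against the $+\frac{\gamma_t^2}{2}\Mcal\circ\AB_{t-1}$ from $\Ebb[\cB^{\otimes 2}]$ leaves coefficient $0$ on $\Mcal$, short of the required $+\gamma_t^2/4$ appearing in $\Tcal(\gamma_t/2)$. The resolution is to exploit the indicator carefully: on the indicator set, put $a:=-\xB_t^\top\wB_*>0$ and $b:=\xB_t^\top\wB_{t-1}>0$; then $a+b=\xB_t^\top(\wB_{t-1}-\wB_*)=\xB_t^\top\uB$, and AM-GM gives $|(\xB_t^\top\wB_*)(\xB_t^\top\wB_{t-1})|=ab\le (\xB_t^\top\uB)^2/4$. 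Combined with $\ind{\xB_t^\top\wB_*<0,\xB_t^\top\wB_{t-1}>0}\le\ind{\xB_t^\top\wB_*<0}$ and a second application of moment-4-1 to supply the factor $\half$, this produces $\Xi_t\succeq -\frac{\gamma_t^2}{4}\Mcal\circ\AB_{t-1}$. Assembling this with the equality for $\Ebb[\cB^{\otimes 2}]$ and the relaxed noise contribution then delivers part~B.
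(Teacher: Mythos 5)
Your proposal is correct and follows essentially the same route as the paper's proof: the same expansion into contraction/fluctuation/noise pieces, the same collapse of $\Ebb[\fB^{\otimes 2}+\cB\fB^\top+\fB\cB^\top]$ into the single non-positive block $\Xi_t$ (dropped for the upper bound), and for the lower bound your AM--GM step $ab\le (a+b)^2/4$ is literally the paper's rank-one inequality $\uB\vB^\top+\vB\uB^\top\succeq-\tfrac12(\uB-\vB)^{\otimes2}$ in scalar form, followed by the same use of moment-4-1 to convert the half-space fourth moment into $\tfrac12\Mcal$ and produce the $\gamma_t^2/4$ coefficient. The only cosmetic difference is that you retain the indicator $\ind{\xB_t^\top\wB_*<0}$ where the paper retains $\ind{\xB_t^\top\wB_{t-1}>0}$; both are valid under Assumption~\ref{assump:symmetric:moment}\ref{item:symmetric:moment-4-1}.
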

\begin{proof}
From \eqref{eq:tron} we have
\begin{align*}
    \wB_{t}
    &= \wB_{t-1} - \gamma_t\cdot \big( \relu(\xB_t^\top \wB_{t-1}) - y_t \big) \xB_t \\
    &= \wB_{t-1} - \gamma_t \ind{\xB_t^\top \wB_{t-1}>0}\cdot \xB_t \xB_t^\top \wB_{t-1}  + \gamma_t \ind{\xB_t^\top \wB_*>0}\cdot \xB_t \xB_t^\top \wB_* +\gamma_t\epsilon_t\xB_t\\
    &= \wB_{t-1} - \gamma_t \ind{\xB_t^\top \wB_{t-1}>0}\cdot \xB_t \xB_t^\top (\wB_{t-1}-\wB_*)  \\
    &\quad + \gamma_t \big(  \ind{\xB_t^\top \wB_*>0} - \ind{\xB_t^\top \wB_{t-1}>0}\big)\cdot \xB_t \xB_t^\top \wB_*+\gamma_t\epsilon_t\xB_t,
\end{align*}
which implies that 
\begin{equation}\label{eq:tron:wt-w*}
\begin{aligned}
    \wB_{t} - \wB_* 
    & = \Big( \IB - \gamma_t \ind{\xB_t^\top \wB_{t-1}>0} \xB_t \xB_t^\top \Big) (\wB_{t-1}-\wB_*)  \\
    &\quad + \gamma_t \big(  \ind{\xB_t^\top \wB_*>0} - \ind{\xB_t^\top \wB_{t-1}>0}\big) \xB_t \xB_t^\top \wB_* + \gamma_t\epsilon_t\xB_t.
\end{aligned}
\end{equation}
Let us consider the expected outer product:
\begin{equation}\label{eq:gaussian:tron:out-product}
\begin{aligned}
    & \Ebb \big( \wB_{t} - \wB_*  \big)^{\otimes 2}\\ 
    &= \Ebb \underbrace{\Big( \IB - \gamma_t \ind{\xB_t^\top \wB_{t-1}>0}\cdot \xB_t \xB_t^\top \Big)^{\otimes 2} \circ  (\wB_{t-1}-\wB_*)^{\otimes 2}}_{(\texttt{quadratic term 1})} \\
    &\quad +\gamma_t^2 \cdot \Ebb  \underbrace{\big(  \ind{\xB_t^\top \wB_*>0} - \ind{\xB_t^\top \wB_{t-1}>0}\big)^2 \cdot \xB_t \xB_t^\top \wB_*\wB_*^\top \xB_t \xB_t^\top}_{(\texttt{quadratic term 2})}\\
    &\quad + \gamma_t\cdot \Ebb \underbrace{ \big(  \ind{\xB_t^\top \wB_*>0} - \ind{\xB_t^\top \wB_{t-1}>0}\big)\cdot \xB_t \xB_t^\top \wB_*(\wB_{t-1}-\wB_*)^\top \Big( \IB - \gamma_t \ind{\xB_t^\top \wB_{t-1}>0}\cdot \xB_t \xB_t^\top \Big) }_{(\texttt{crossing term 1})} \\
    &\quad +\gamma_t\cdot \Ebb \underbrace{\big(  \ind{\xB_t^\top \wB_*>0} - \ind{\xB_t^\top \wB_{t-1}>0}\big)\cdot \Big( \IB - \gamma_t \ind{\xB_t^\top \wB_{t-1}>0}\cdot \xB_t \xB_t^\top \Big) (\wB_{t-1}-\wB_*)\wB_*^\top \xB_t \xB_t^\top}_{(\texttt{crossing term 2})} \\
    &\quad + \gamma_t^2\cdot \Ebb \big( \epsilon_t^2 \xB_t\xB_t^\top\big),
\end{aligned}
\end{equation}
where the crossing terms involving $\epsilon_t$ has zero expectation because $\Ebb [ \epsilon_t | \xB_t ] = 0$. 

For the second quadratic term in \eqref{eq:gaussian:tron:out-product}, notice that 
\[
\big(  \ind{\xB_t^\top \wB_*>0} - \ind{\xB_t^\top \wB_{t-1}>0}\big)^2
= \ind{\xB_t^\top \wB_{t-1}>0, \xB_t^\top \wB_*<0} + \ind{\xB_t^\top \wB_{t-1}<0, \xB_t^\top \wB_*>0},
\]
then we have
\begin{align}
    &\ \Ebb (\texttt{quadratic term 2}) \notag \\
    &= \Ebb \bigg( \big(  \ind{\xB_t^\top \wB_*>0} - \ind{\xB_t^\top \wB_{t-1}>0}\big)^2  \cdot \big(\xB_t^\top \wB_*\big)^2  \cdot \xB_t \xB_t^\top \bigg) \notag \\
    &=  \Ebb \bigg(\Big(  \ind{\xB_t^\top \wB_{t-1}>0, \xB_t^\top \wB_*<0} + \ind{\xB_t^\top \wB_{t-1}<0, \xB_t^\top \wB_*>0}\Big)\cdot \big(\xB_t^\top \wB_*\big)^2  \cdot \xB_t \xB_t^\top  \bigg) \label{eq:gaussian:tron:quad-term2:ver0} \\
    &= 2\cdot \Ebb \Big(  \ind{\xB_t^\top \wB_{t-1}>0,\xB_t^\top \wB_*<0} \cdot \big(\xB_t^\top \wB_*\big)^2  \cdot \xB_t \xB_t^\top  \Big), \label{eq:gaussian:tron:quad-term2}
\end{align}
where the last equation is by Assumption~\ref{assump:symmetric:moment}\ref{item:symmetric:moment-4-2}.
For the crossing terms in \eqref{eq:gaussian:tron:out-product} we have that 
\begin{align}
    & \quad (\texttt{crossing term 1}) + (\texttt{crossing term 2}) \notag \\
    &= \big(  \ind{\xB_t^\top \wB_*>0} - \ind{\xB_t^\top \wB_{t-1}>0}\big)\cdot \bigg( \xB_t \xB_t^\top \wB_*(\wB_{t-1}-\wB_*)^\top+ (\wB_{t-1}-\wB_*)\wB_*^\top \xB_t \xB_t^\top \bigg) \notag \\
    &\quad - 2 \gamma_t \big(  \ind{\xB_t^\top \wB_*>0} - \ind{\xB_t^\top \wB_{t-1}>0}\big)\cdot\ind{\xB_t^\top \wB_{t-1}>0}\cdot \xB_t^\top \wB_* \cdot \xB_t^\top (\wB_{t-1}-\wB_*)\cdot \xB_t \xB_t^\top \notag \\
    &= \big(  \ind{\xB_t^\top \wB_*>0} - \ind{\xB_t^\top \wB_{t-1}>0}\big)\cdot \bigg( \xB_t \xB_t^\top \wB_*(\wB_{t-1}-\wB_*)^\top+ (\wB_{t-1}-\wB_*)\wB_*^\top \xB_t \xB_t^\top \bigg) \notag \\
    &\quad + 2 \gamma_t \ind{\xB_t^\top \wB_{t-1}>0, \xB_t^\top \wB_*<0}\cdot \xB_t^\top \wB_* \cdot \xB_t^\top (\wB_{t-1}-\wB_*)\cdot \xB_t \xB_t^\top,\label{eq:gaussian:tron:cross-term}
\end{align}
where in the last equality we use 
\begin{align*}
    &\quad  - \big(  \ind{\xB_t^\top \wB_*>0} - \ind{\xB_t^\top \wB_{t-1}>0}\big)\cdot\ind{\xB_t^\top \wB_{t-1}>0} \\
&= \ind{\xB_t^\top \wB_{t-1} >0} - \ind{\xB_t^\top \wB_*>0}\cdot\ind{\xB_t^\top \wB_{t-1}>0} \\
&= \ind{\xB_t^\top \wB_{t-1}>0,\xB_t^\top \wB_*<0}.
\end{align*}
Now we take expectation on both sides of \eqref{eq:gaussian:tron:cross-term}.
By Assumption~\ref{assump:symmetric:moment}\ref{item:symmetric:moment-2} (or Lemma~\ref{lemma:moments}\ref{item:moment-2}) the first term in \eqref{eq:gaussian:tron:cross-term} has zero expectation, therefore we obtain
\begin{align}
    & \quad \Ebb\big(   (\texttt{crossing term 1}) + (\texttt{crossing term 2}) \big) \notag \\
    &= 2 \gamma_t \cdot \Ebb\bigg( \ind{\xB_t^\top \wB_{t-1}>0, \xB_t^\top \wB_*<0}\cdot \xB_t^\top \wB_* \cdot \xB_t^\top (\wB_{t-1}-\wB_*)\cdot \xB_t \xB_t^\top\bigg) \notag \\
    &= 2 \gamma_t \cdot \Ebb\bigg(\ind{\xB_t^\top \wB_{t-1}>0, \xB_t^\top \wB_*<0}\cdot \xB_t^\top \wB_* \cdot \xB_t^\top \wB_{t-1}\cdot \xB_t \xB_t^\top\bigg) \notag \\
    &\quad  -  2 \gamma_t\cdot \Ebb\bigg( \ind{\xB_t^\top \wB_{t-1}>0, \xB_t^\top \wB_*<0}\cdot \big(\xB_t^\top \wB_*\big)^2\cdot \xB_t \xB_t^\top\bigg) , \label{eq:gaussian:tron:cross-term:2}
\end{align}
Now considering \eqref{eq:gaussian:tron:out-product} and applying \eqref{eq:gaussian:tron:quad-term2} and \eqref{eq:gaussian:tron:cross-term:2}, we obtain 
\begin{align}
      \Ebb \big( \wB_{t} - \wB_*  \big)^{\otimes 2} 
      &=  \Ebb\Big( \IB - \gamma_t \ind{\xB_t^\top \wB_{t-1}>0}\cdot \xB_t \xB_t^\top \Big)^{\otimes 2} \circ  (\wB_{t-1}-\wB_*)^{\otimes 2}+ \gamma_t^2 \sigma^2 \HB \notag \\
      &\quad + 2 \gamma_t^2 \cdot \Ebb\bigg(\ind{\xB_t^\top \wB_{t-1}>0, \xB_t^\top \wB_*<0}\cdot \xB_t^\top \wB_* \cdot \xB_t^\top \wB_{t-1}\cdot \xB_t \xB_t^\top\bigg)   .\label{eq:eq:gaussian:tron:out-product:1}
\end{align}

\paragraph{An Upper Bound.}
In \eqref{eq:eq:gaussian:tron:out-product:1}, we can use the indicator function to show that 
\[
\ind{\xB_t^\top \wB_{t-1}>0, \xB_t^\top \wB_*<0}\cdot \xB_t^\top \wB_* \cdot \xB_t^\top \wB_{t-1} \le 0,
\]
so we have 
\begin{align}
      \Ebb \big( \wB_{t} - \wB_*  \big)^{\otimes 2} 
      &\preceq  \Ebb\Big( \IB - \gamma_t \ind{\xB_t^\top \wB_{t-1}>0}\cdot \xB_t \xB_t^\top \Big)^{\otimes 2} \circ  (\wB_{t-1}-\wB_*)^{\otimes 2}+ \gamma_t^2 \sigma^2 \HB  \notag \\
      &= \Ebb \big( \wB_{t-1} - \wB_*  \big)^{\otimes 2} \notag \\
      &\quad  -\gamma_t \cdot \Ebb \Big( \ind{\xB_t^\top \wB_{t-1}>0}\cdot \xB_t \xB_t^\top \Big)\cdot  \Ebb \big( \wB_{t-1} - \wB_*  \big)^{\otimes 2} \notag \\
      &\quad - \gamma_t \cdot \Ebb\big( \wB_{t-1} - \wB_*  \big)^{\otimes 2}\cdot \Ebb \Big( \ind{\xB_t^\top \wB_{t-1}>0}\cdot \xB_t \xB_t^\top\Big) \notag \\
     &\quad + \gamma_t^2\cdot \Ebb\Big( \ind{\xB_t^\top \wB_{t-1}>0}\cdot \xB_t \xB_t^\top  \cdot \Ebb \big( \wB_{t-1} - \wB_*  \big)^{\otimes 2}\cdot \xB_t \xB_t^\top\Big)\notag \\
     &\quad + \gamma^2_t \sigma^2 \HB. \label{eq:gaussian:tron:out-product:upper}
\end{align}
By Assumption~\ref{assump:symmetric:moment}\ref{item:symmetric:moment-2} (or Lemma~\ref{lemma:moments}\ref{item:moment-2}) we have
\begin{align*}
     \Ebb \Big( \ind{\xB_t^\top \wB_{t-1}>0}\cdot \xB_t \xB_t^\top  \Big) = \half \HB,
\end{align*}
moreover 
\[ 
     \Ebb \Big( \ind{\xB_t^\top \wB_{t-1}>0}\cdot \xB_t \xB_t^\top \otimes \xB_t \xB_t^\top\Big)\preceq \Ebb \Big(  \xB_t \xB_t^\top \otimes \xB_t \xB_t^\top\Big) = \Mcal.\]
Then under notations of $\AB_t$, $\Tcal$ and $\Mcal$, \eqref{eq:gaussian:tron:out-product:upper} can be written as  
\begin{align*}
    \AB_{t} 
    &\preceq \AB_{t-1} - \frac{\gamma_t}{2} \big( \HB \AB_{t-1} + \AB_{t-1} \HB) + {\gamma^2_t} \Mcal \circ \AB_{t-1} + \gamma_t^2 \sigma^2 \HB \\
    &= \bigg(\Ical - \frac{\gamma_t}{2}\cdot \Tcal(2\gamma_t)\bigg) \circ  \AB_{t-1}  +  \gamma_t^2 \sigma^2 \HB.
\end{align*}

\paragraph{A Lower Bound.}
We now derive a lower bound for \eqref{eq:eq:gaussian:tron:out-product:1}.
We first notice the following fact:
for every two vectors $\vB$ and $\uB$, it holds that
\begin{equation}\label{eq:cross-term-lb}
\uB \vB^\top + \vB \uB^\top
= \half \big(  (\uB + \vB)^{\otimes 2}- (\uB - \vB)^{\otimes 2} \big)
\succeq - \half (\uB - \vB)^{\otimes 2}. 
\end{equation}   
Applying \eqref{eq:cross-term-lb}, we obtain that 
\begin{align*}
    & 2 \gamma_t^2 \cdot \Ebb\bigg(\ind{\xB_t^\top \wB_{t-1}>0, \xB_t^\top \wB_*<0}\cdot \xB_t^\top \wB_* \cdot \xB_t^\top \wB_{t-1}\cdot \xB_t \xB_t^\top\bigg)  \\ 
    &= \gamma_t^2 \cdot \Ebb\bigg(\ind{\xB_t^\top \wB_{t-1}>0, \xB_t^\top \wB_*<0}\cdot \xB_t \xB_t^\top \cdot \big( \wB_* \wB_{t-1}^\top + \wB_{t-1} \wB_*^\top \big) \cdot \xB_t \xB_t^\top\bigg)  \\ 
    &\succeq -\frac{\gamma_t^2}{2}\cdot \Ebb\bigg(\ind{\xB_t^\top \wB_{t-1}>0, \xB_t^\top \wB_*<0}\cdot \xB_t \xB_t^\top \cdot (\wB_{t-1}- \wB_{*})(\wB_{t-1}- \wB_{*})^\top \cdot \xB_t \xB_t^\top\bigg) \\
    &\succeq -\frac{\gamma_t^2}{2}\cdot \Ebb\bigg(\ind{\xB_t^\top \wB_{t-1}>0}\cdot \xB_t \xB_t^\top  \cdot \Ebb (\wB_{t-1}- \wB_{*})^{\otimes 2} \cdot \xB_t \xB_t^\top\bigg).
\end{align*}
We now bring this into \eqref{eq:eq:gaussian:tron:out-product:1}, then we get
\begin{align}
      \Ebb \big( \wB_{t} - \wB_*  \big)^{\otimes 2} 
      &\succeq  \Ebb\Big( \IB - \gamma_t \ind{\xB_t^\top \wB_{t-1}>0}\cdot \xB_t \xB_t^\top \Big)^{\otimes 2} \circ  (\wB_{t-1}-\wB_*)^{\otimes 2} + \gamma^2_t\sigma^2 \HB \notag \\
      &\quad -\frac{\gamma_t^2}{2}\cdot \Ebb\bigg(\ind{\xB_t^\top \wB_{t-1}>0}\cdot \xB_t \xB_t^\top  \cdot \Ebb (\wB_{t-1}-\wB_*)^{\otimes 2} \cdot \xB_t \xB_t^\top\bigg)\notag \\
      &= \Ebb \big( \wB_{t-1} - \wB_*  \big)^{\otimes 2} \notag \\
      &\quad  -\gamma_t \cdot \Ebb \Big( \ind{\xB_t^\top \wB_{t-1}>0}\cdot \xB_t \xB_t^\top \Big)\cdot  \Ebb \big( \wB_{t-1} - \wB_*  \big)^{\otimes 2} \notag \\
      &\quad - \gamma_t \cdot \Ebb\big( \wB_{t-1} - \wB_*  \big)^{\otimes 2}\cdot \Ebb \Big( \ind{\xB_t^\top \wB_{t-1}>0}\cdot \xB_t \xB_t^\top\Big) \notag \\
     &\quad + \gamma_t^2\cdot \Ebb\Big( \ind{\xB_t^\top \wB_{t-1}>0}\cdot \xB_t \xB_t^\top  \cdot \Ebb \big( \wB_{t-1} - \wB_*  \big)^{\otimes 2}\cdot \xB_t \xB_t^\top\Big) \notag\\
      &\quad + \gamma_t^2\sigma^2 \HB \notag \\
     &\quad - \frac{\gamma_t^2}{2}\cdot \Ebb\bigg(\ind{\xB_t^\top \wB_{t-1}>0}\cdot \xB_t \xB_t^\top  \cdot \Ebb (\wB_*- \wB_{t-1})^{\otimes 2} \cdot \xB_t \xB_t^\top\bigg) \notag \\
     &= \Ebb \big( \wB_{t-1} - \wB_*  \big)^{\otimes 2} \notag \\
      &\quad  -\gamma_t \cdot \Ebb \Big( \ind{\xB_t^\top \wB_{t-1}>0}\cdot \xB_t \xB_t^\top \Big)\cdot  \Ebb \big( \wB_{t-1} - \wB_*  \big)^{\otimes 2} \notag \\
      &\quad - \gamma_t \cdot \Ebb\big( \wB_{t-1} - \wB_*  \big)^{\otimes 2}\cdot \Ebb \Big( \ind{\xB_{t-1}^\top \wB_{t-1}>0}\cdot \xB_t \xB_t^\top\Big) \notag \\
     &\quad + \frac{\gamma_t^2}{2}\cdot \Ebb\Big( \ind{\xB_t^\top \wB_{t-1}>0}\cdot \xB_t \xB_t^\top  \cdot \Ebb \big( \wB_{t-1} - \wB_*  \big)^{\otimes 2}\cdot \xB_t \xB_t^\top\Big) \notag \\
     &\quad +\gamma^2_t\sigma^2\HB.\label{eq:eq:gaussian:tron:out-product:lower}
\end{align}
By Assumptions \ref{assump:symmetric:moment}\ref{item:symmetric:moment-2} and \ref{assump:symmetric:moment}\ref{item:symmetric:moment-4-1} (or Lemma~\ref{lemma:moments}\ref{item:moment-4}) we have
\begin{align*}
     \Ebb \Big( \ind{\xB_t^\top \wB_{t-1}>0}\cdot \xB_t \xB_t^\top  \Big) = \half \HB,\quad 
     \Ebb \Big( \ind{\xB_t^\top \wB_{t-1}>0}\cdot \xB_t \xB_t^\top \otimes \xB_t \xB_t^\top\Big) = \half \Mcal.
\end{align*}
Then under notations of $\AB_t$, $\Tcal$ and $\Mcal$,  \eqref{eq:eq:gaussian:tron:out-product:lower} can be written as  
\begin{align*}
    \AB_{t} 
    &\succeq \AB_{t-1} - \frac{\gamma_t}{2} \big( \HB \AB_{t-1} + \AB_{t-1} \HB) + \frac{\gamma_t^2}{4} \Mcal \circ \AB_{t-1} + \gamma_t^2 \sigma^2 \HB \\
    &= \bigg(\Ical - \frac{\gamma_t}{2}\cdot{\Tcal}\bigg(\frac{\gamma_t}{2}\bigg)\bigg) \circ  \AB_{t-1} +  \gamma_t^2 \sigma^2 \HB.
\end{align*}
We have completed the proof.
\end{proof}

\subsection{Proof of Theorem \ref{thm:tron:bernoulli}}

\paragraph{Notations.}
In this section, we always assume that $\HB$ is diagonal.
For a PSD matrix $\AB$, 
we use
\( \mathring{\AB}\)
to refer to the diagonal of $\AB$.

\begin{proof}[Proof of Theorem \ref{thm:tron:bernoulli}]
The proof is by combing Lemma~\ref{append:thm:tron:covariance}, Lemma~\ref{append:thm:loss-landscape} and the analysis for one-hot data in \citet{zou2021benefits}.

Note that for symmetric Bernoulli distribution, or under Assumption~\ref{assump:bernoulli}, it holds that (see also the proof of Lemma A.1 in \citet{zou2021benefits}): for any PSD matrix $\AB$, 
\begin{equation}\label{eq:bernolli:fourth-moment}
    \Mcal \circ \AB = \Ebb (\xB^\top \AB \xB)\cdot \xB \xB^\top
= \diag(\HB \AB) = \HB \mathring{\AB}.
\end{equation}

\paragraph{Upper Bound.}
We first show the upper bound.
By Lemma~\ref{append:thm:tron:covariance} and \eqref{eq:bernolli:fourth-moment} we have 
\begin{align*}
    \AB_{t} 
    &\preceq \AB_{t-1} - \frac{\gamma_t}{2} \big( \HB \AB_{t-1} + \AB_{t-1} \HB) + {\gamma^2_t} \Mcal \circ \AB_{t-1} + \gamma_t^2 \sigma^2 \HB \\
    &= \AB_{t-1} - \frac{\gamma_t}{2} \big( \HB \AB_{t-1} + \AB_{t-1} \HB) + {\gamma^2_t}\HB \mathring{\AB}_{t-1} + \gamma_t^2 \sigma^2 \HB.
\end{align*}
Taking diagonal on both sides we get
\begin{align*}
    \mathring\AB_{t} 
    &\preceq \mathring\AB_{t-1} - \gamma_t \HB \mathring\AB_{t-1}  + {\gamma^2_t}\HB \mathring{\AB}_{t-1} + \gamma_t^2 \sigma^2 \HB\\
    &\preceq \Big(\IB - \frac{\gamma_t}{2}\cdot \HB \Big) \cdot \mathring\AB_{t-1}   + \gamma_t^2 \sigma^2 \HB,
\end{align*}
where we use the assumption that $\gamma < 1/2$.
Solving the above recursion and apply Lemma~\ref{lemma:tron:misspecified:prod-sum-bounds}, we obtain
\begin{align*}
    \mathring\AB_N &\preceq \prod_{t=1}^N \Big(\IB - \frac{\gamma_t}{2}\cdot \HB \Big) \cdot \mathring\AB_{0}   +\sigma^2 \sum_{t=1}^N  \gamma_t^2 \prod_{k=t+1}^N \Big(\IB - \frac{\gamma_k}{2}\cdot \HB \Big)\HB \\
    &\preceq \prod_{t=1}^N \Big(\IB - \frac{\gamma_t}{2}\cdot \HB \Big) \cdot \mathring\AB_{0}   +\frac{\sigma^2}{8} \cdot \bigg( \frac{1}{\Neff}\HB_{0:k}^{-1} + \Neff \gamma_0^2 \HB_{k:\infty}\bigg).
\end{align*}
Taking inner product with $\HB$ gives the upper bound on the excess risk.

\paragraph{Lower Bound.}
We next show the lower bound.
By Lemma~\ref{append:thm:tron:covariance} and \eqref{eq:bernolli:fourth-moment} we have 
\begin{align*}
    \AB_{t} 
    &\succeq \AB_{t-1} - \frac{\gamma_t}{2} \big( \HB \AB_{t-1} + \AB_{t-1} \HB) + \frac{\gamma_t^2}{4} \Mcal \circ \AB_{t-1} +  {\gamma^2_t}\sigma^2 \HB \\
    &\succeq \AB_{t-1} - \frac{\gamma_t}{2} \big( \HB \AB_{t-1} + \AB_{t-1} \HB)  + \gamma_t^2 \sigma^2 \HB.
\end{align*}
Taking diagonal on both sides we get
\begin{align*}
    \mathring\AB_{t} 
    &\succeq \big(\IB  - \gamma_t \HB \big) \cdot \mathring\AB_{t-1}  + \gamma_t^2 \sigma^2 \HB.
\end{align*}
Solving the above recursion and apply Lemma~\ref{lemma:tron:misspecified:prod-sum-bounds}, we obtain
\begin{align*}
    \mathring\AB_N &\succeq \prod_{t=1}^N \Big(\IB - {\gamma_t} \HB \Big) \cdot \mathring\AB_{0}   +\sigma^2 \sum_{t=1}^N  \gamma_t^2 \prod_{k=t+1}^N \Big(\IB - {\gamma_k} \HB \Big)\HB \\
    &\succeq \prod_{t=1}^N \Big(\IB - {\gamma_t} \HB \Big) \cdot \mathring\AB_{0}   +\frac{\sigma^2}{400} \cdot \bigg( \frac{1}{\Neff} \HB^{-1}_{0:k^*} + \Neff \gamma_0^2 \HB_{k^*:\infty}\bigg),
\end{align*}
where $k^*:= \max\{k: \lambda_k \ge 1/(\gamma_0 \Neff)\}$.
Taking inner product with $\HB$ gives the lower bound on the excess risk.
\end{proof}

\subsection{Proof of Theorem \ref{thm:tron:gaussian}}
We first restate Corollary 3.4 in \citet{wu2022power} under our notations.

\begin{corollary*}[Corollary 3.4 in \citet{wu2022power}, restated]
Consider a sequence of PSD matrices $(\AB_{t})_{t =0}^N$ that describes the covariance of the SGD iterates for linear regression, i.e.,
\begin{equation*}
\AB_0 := (\wB_0 - \wB_*)^{\otimes 2},\quad 
\AB_{t} := \Ebb (\IB- \gamma_t \xB \xB^\top) \AB_{t-1} (\IB- \gamma_t \xB \xB^\top) + \gamma_t^2\cdot \sigma^2 \cdot \HB ,\ t= 1,\dots,N,
\end{equation*}
where $(\gamma_t)_{t=0}^N$ is a stepsize scheduler as defined in \eqref{eq:geometry-tail-decay-lr}.
Assume that $N > 100$. 
Let $\Neff := N / \log(N)$.
\begin{enumerate}[label=(\Alph*)]
    \item
If Assumption~\ref{assump:gaussian}\ref{item:gaussian:upper} holds,
then for $\gamma_0 < 1/(4\alpha(\tr(\HB)) )$ 
it holds that
\begin{align*}
    \la \HB, \AB_N \ra
    &\lesssim 
    \bigg\| \prod_{t=1}^{N}\Big(\IB-{\gamma_t}\HB\Big)(\wB_0 - \wB_*) \bigg\|^2_\HB  + \Big( \alpha \big\| \wB_0 - \wB_* \big\|^2_{\frac{\IB_{0:k^*} }{\Neff \gamma_0} + \HB_{k^*:\infty}} + \sigma^2 \Big) \cdot \frac{k^* + \Neff^2 \gamma^2_0 \cdot \sum_{i>k^*} \lambda_i^2}{\Neff},
\end{align*}
where $k^*\ge 0$ is an arbitrary index.
\item 
If Assumption~\ref{assump:gaussian}\ref{item:gaussian:lower} holds,
then for $\gamma_0 < 1/(4\alpha(\tr(\HB)) )$ it holds that
\begin{align*}
    \la \HB, \AB_N \ra
    &\gtrsim 
    \bigg\| \prod_{t=1}^{N}\Big(\IB-{\gamma_t}\HB\Big)(\wB_0 - \wB_*) \bigg\|^2_\HB + 
     \big( \beta \|\wB_0 - \wB_* \|^2_{\HB_{k^*:\infty}} + \sigma^2 \big) \cdot  \frac{k^* + \Neff^2 \gamma^2_0 \cdot \sum_{i>k^*} \lambda_i^2}{\Neff},
\end{align*}
where $k^*:= \max\{k: \lambda_k \ge 1/(\gamma_0 \Neff)\}$.
\end{enumerate}
\end{corollary*}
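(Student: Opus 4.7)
The plan is to follow the operator--method template developed in \citet{zou2021benign,wu2022iterate,wu2022power}, specializing to the linear-regression recursion in the statement. First, I would perform a bias--variance decomposition $\AB_t = \BB_t + \CB_t$, where $\BB_0 = (\wB_0 - \wB_*)^{\otimes 2}$ evolves noiselessly and $\CB_0 = \mathbf{0}$ accumulates the $\gamma_t^2 \sigma^2 \HB$ injection, both under the same linear operator $\Ical - \gamma_t \Tcal(\gamma_t)$ from \eqref{eq:operators}. Because $\langle \HB, \cdot \rangle$ is linear, upper (resp.\ lower) bounding $\langle \HB, \AB_N\rangle$ reduces to upper (resp.\ lower) bounding $\langle \HB, \BB_N\rangle$ and $\langle \HB, \CB_N\rangle$ separately.

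For the upper bound (A), Assumption~\ref{assump:gaussian}\ref{item:gaussian:upper} yields the key scalar inequality
\[
\Ebb (\IB - \gamma \xB\xB^\top) \XB (\IB - \gamma \xB\xB^\top) \preceq \XB - \gamma(\HB \XB + \XB \HB) + \alpha \gamma^2 \tr(\HB \XB) \HB
\]
for any PSD $\XB$ and $\gamma < 1/(\alpha \tr(\HB))$. I would partition the spectrum into a head block $\IB_{0:k^*}$ and a tail block $\IB_{k^*:\infty}$ for arbitrary $k^* \ge 0$, and track the traces $\tr(\HB \BB_t)$, $\tr(\IB_{0:k^*}\BB_t)$, $\tr(\HB \CB_t)$ etc.\ separately. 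Inside a single epoch of constant stepsize $\gamma$ and length $\Neff$, eigenvalues with $\gamma \Neff \lambda_i \gtrsim 1$ are \emph{resolved}: bias in those directions contracts geometrically and variance saturates at scale $\sigma^2 k^*/\Neff$; unresolved directions $\gamma \Neff \lambda_i \lesssim 1$ contribute residual bias $\|\wB_0-\wB_*\|^2_{\HB_{k^*:\infty}}$ and variance $\sigma^2 \gamma^2 \Neff \sum_{i > k^*} \lambda_i^2$. Assembling head and tail yields the $\Deff/\Neff$ shape with $\Deff = k^* + \Neff^2\gamma_0^2 \sum_{i > k^*} \lambda_i^2$, while the $\|\wB_0 - \wB_*\|^2_{\IB_{0:k^*}/(\Neff \gamma_0) + \HB_{k^*:\infty}}$ factor comes from the coupling between the bias initialization and the hypercontractive variance term inside the one-step expansion.

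The lower bound (B) is dual: Assumption~\ref{assump:gaussian}\ref{item:gaussian:lower} supplies a matching operator lower inequality $\Ebb \xB\xB^\top \XB \xB\xB^\top \succeq \beta \tr(\HB \XB) \HB + \HB \XB \HB$, so the variance cannot shrink below the hypercontractive prediction. Taking $k^* = \max\{k : \lambda_k \ge 1/(\gamma_0 \Neff)\}$ is precisely the threshold at which the head count $k^*$ and tail mass $\Neff^2 \gamma_0^2 \sum_{i>k^*}\lambda_i^2$ are of the same order, so the same head/tail bookkeeping produces a lower bound matching (A) up to constants.

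The main technical obstacle is chaining the $\log_2 N$ epochs of \eqref{eq:geometry-tail-decay-lr} without paying polylogarithmic factors. The stepsize halves between epochs, so an eigenvalue resolved in epoch $e$ remains resolved in all epochs $e+1,\ldots$ and sees only further contraction, while variance injected at larger $\gamma$ in earlier epochs is geometrically damped by resolved contractions in later epochs. The cooling-down argument of \citet{ge2019step,wu2022iterate,wu2022power} makes precise that only the final epoch's variance contribution survives up to constants; meanwhile the bias telescopes naturally through all epochs into $\prod_{t=1}^N (\IB - \gamma_t \HB)(\wB_0 - \wB_*)$ measured in $\HB$-norm. Combining the chained per-epoch bounds with the two operator inequalities above, and respecting the constraint $\gamma_0 < 1/(4\alpha \tr(\HB))$, recovers the two-part statement.
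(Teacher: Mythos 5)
Your proposal is a correct reconstruction of the standard operator-method argument, but note that the paper itself never proves this statement: its entire ``proof'' is the citation ``See Corollary 3.4 in \citet{wu2022power}'', since the corollary is an imported result about SGD for linear regression that the paper only needs as a black box. What you sketch is essentially the proof living inside that cited reference (and in \citet{zou2021benign,wu2022iterate}): your identification of the exact one-step map with $\Ical-\gamma_t\Tcal(\gamma_t)$ is right, the bias--variance splitting $\AB_t=\BB_t+\CB_t$ under this linear PSD-preserving map is the standard first move, the one-step upper inequality $\Ebb(\IB-\gamma\xB\xB^\top)\XB(\IB-\gamma\xB\xB^\top)\preceq \XB-\gamma(\HB\XB+\XB\HB)+\alpha\gamma^2\tr(\HB\XB)\HB$ and its $\beta$-counterpart are exactly the places where Assumption~\ref{assump:gaussian} enters, and the head/tail split at $k^*$ with the threshold $\lambda_{k^*}\approx 1/(\gamma_0\Neff)$ is how the $\Deff/\Neff$ shape and the coupling term $\alpha\|\wB_0-\wB_*\|^2_{\IB_{0:k^*}/(\Neff\gamma_0)+\HB_{k^*:\infty}}$ arise. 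Two caveats on precision rather than correctness: first, your slogan that ``only the final epoch's variance contribution survives'' is accurate only for the resolved (head) directions, whereas the tail part $\Neff^2\gamma_0^2\sum_{i>k^*}\lambda_i^2$ is governed by the \emph{initial} stepsize because unresolved directions are never damped --- you do make the resolved/unresolved distinction earlier, so this is a wording issue; second, as written the epoch-chaining and the propagation of the bias-induced effective noise (which produces the $\beta\|\wB_0-\wB_*\|^2_{\HB_{k^*:\infty}}$ term in the lower bound) are asserted by appeal to \citet{ge2019step,wu2022iterate,wu2022power} rather than carried out, so your text is an outline at the same level of rigor as the paper's citation, not a self-contained replacement for it.
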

\begin{proof}
    See Corollary 3.4 in \citet{wu2022power}.
\end{proof}

We restate Theorem \ref{thm:tron:gaussian} in a slightly stronger version. 
\begin{theorem}[Risk Bounds for GLM-tron, restated Theorem \ref{thm:tron:gaussian}]\label{append:thm:tron:gaussian}
Suppose that Assumption  \ref{assump:noise:well-specified} holds.
Let $\wB_{N}$ be the output of \eqref{eq:tron} with stepsize scheduler \eqref{eq:geometry-tail-decay-lr}.
Assume that $N > 100$. 
Let $\Neff := N / \log(N)$.
\begin{enumerate}[label=(\Alph*)]
    \item
If in addition Assumption~\ref{assump:gaussian}\ref{item:gaussian:upper} and 
Assumption~\ref{assump:symmetric:moment}\ref{item:symmetric:moment-2}\ref{item:symmetric:moment-4-2} hold,
then for $\gamma_0 < 1/(4\alpha(\tr(\HB)) )$ it holds that
\begin{align*}
    \Ebb \excessrisk (\wB_{N})
    &\lesssim 
    \bigg\| \prod_{t=1}^{N}\Big(\IB-\frac{\gamma_t}{2}\HB\Big)(\wB_0 - \wB_*) \bigg\|^2_\HB  + \Big( \alpha \big\| \wB_0 - \wB_* \big\|^2_{\frac{\IB_{0:k^*} }{\Neff \gamma_0} + \HB_{k^*:\infty}} + \sigma^2 \Big) \cdot \frac{k^* + \Neff^2 \gamma^2_0 \cdot \sum_{i>k^*} \lambda_i^2}{\Neff},
\end{align*}
where $k^*\ge 0$ is an arbitrary index.
\item
If in addition Assumption~\ref{assump:gaussian}\ref{item:gaussian:lower} and Assumption~\ref{assump:symmetric:moment} hold, then
for $\gamma_0 < 1/\lambda_1$,
it holds that
\begin{align*}
    \Ebb \excessrisk (\wB_{N})
    &\gtrsim 
    \bigg\| \prod_{t=1}^{N}\Big(\IB-\frac{\gamma_t}{2}\HB\Big)(\wB_0 - \wB_*) \bigg\|^2_\HB + 
     \big( \beta \|\wB_0 - \wB_* \|^2_{\HB_{k^*:\infty}} + \sigma^2 \big) \cdot  \frac{k^* + \Neff^2 \gamma^2_0 \cdot \sum_{i>k^*} \lambda_i^2}{\Neff},
\end{align*}
where $k^*:= \max\{k: \lambda_k \ge 1/(\gamma_0 \Neff)\}$.
\end{enumerate}
\end{theorem}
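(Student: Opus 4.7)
The plan is to reduce the ReLU excess-risk bound to an SGD-for-linear-regression bound that can be matched against Corollary~3.4 of \citet{wu2022power}, and then track the factor-of-two differences that distinguish the \eqref{eq:tron} recursion from ordinary SGD. The starting point is Lemma~\ref{thm:loss-landscape}, which sandwiches $\excessrisk(\wB)$ between $\tfrac14\|\wB-\wB_*\|_\HB^2$ and $\|\wB-\wB_*\|_\HB^2$; hence up to absolute constants it suffices to bound $\la \HB,\AB_N\ra$ from above (for part (A)) and from below (for part (B)), where $\AB_t$ is the \eqref{eq:tron} iterate covariance defined in \eqref{eq:tron:cov}.

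For the upper bound, I would unroll the one-step inequality of Lemma~\ref{thm:tron:covariance}(A), which reads $\AB_{t+1}\preceq(\Ical-\tfrac{\gamma_t}{2}\Tcal(2\gamma_t))\circ\AB_t+\gamma_t^2\sigma^2\HB$. Writing out $\Tcal(2\gamma_t)$ expands this to
\[
\AB_{t+1}\preceq \AB_t-\tfrac{\gamma_t}{2}(\HB\AB_t+\AB_t\HB)+\gamma_t^2\,\Mcal\circ\AB_t+\gamma_t^2\sigma^2\HB,
\]
which is structurally identical to the SGD-for-linear-regression recursion in the hypothesis of Corollary~3.4 with \emph{effective stepsize} $\gamma_t/2$ in the linear/contractive part, and a quadratic noise-amplification term that is only a constant factor ($4\times$) larger than the $(\gamma_t/2)^2\Mcal\circ\AB_t$ term one would see for ordinary SGD at stepsize $\gamma_t/2$. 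I would then split $\AB_t$ into a bias piece (driven by $\AB_0=(\wB_0-\wB_*)^{\otimes 2}$) and a variance piece (driven by the $\gamma_t^2\sigma^2\HB$ injection), exactly as in the bias/variance decomposition used in \citet{wu2022power}. Hypercontractivity (Assumption~\ref{assump:gaussian}\ref{item:gaussian:upper}) controls $\Mcal\circ\AB$ by $\alpha\tr(\HB\AB)\HB$; under the stepsize constraint $\gamma_0<1/(4\alpha\tr(\HB))$ this absorbs the extra factor of $4$ in the $\Mcal$ term into a universal constant. Running the resulting recursions through the same operator calculus as \citet{wu2022power} (with $\tilde\gamma_t=\gamma_t/2$ in the contraction operator) produces the bias term $\|\prod_{t}(\IB-\tfrac{\gamma_t}{2}\HB)(\wB_0-\wB_*)\|_\HB^2$ and the variance/$\SNR$ term $(\alpha\|\wB_0-\wB_*\|^2_{\IB_{0:k^*}/(\Neff\gamma_0)+\HB_{k^*:\infty}}+\sigma^2)\cdot\Deff/\Neff$, which matches the claim.

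For the lower bound I would use Lemma~\ref{thm:tron:covariance}(B), $\AB_{t+1}\succeq(\Ical-\tfrac{\gamma_t}{2}\Tcal(\gamma_t/2))\circ\AB_t+\tfrac{\gamma_t^2\sigma^2}{4}\HB$, which expands to
\[
\AB_{t+1}\succeq \AB_t-\tfrac{\gamma_t}{2}(\HB\AB_t+\AB_t\HB)+\tfrac{\gamma_t^2}{4}\Mcal\circ\AB_t+\tfrac{\gamma_t^2\sigma^2}{4}\HB.
\]
This is literally the SGD recursion at stepsize $\gamma_t/2$, with noise scaled by a constant. Under the lower-hypercontractivity Assumption~\ref{assump:gaussian}\ref{item:gaussian:lower}, invoking the lower-bound half of Corollary~3.4 at stepsize $\gamma_t/2$ (permitted because $\gamma_0<1/\lambda_1$) gives the matching bias term $\|\prod_t(\IB-\tfrac{\gamma_t}{2}\HB)(\wB_0-\wB_*)\|_\HB^2$ and the matching variance term $(\beta\|\wB_0-\wB_*\|^2_{\HB_{k^*:\infty}}+\sigma^2)\cdot\Deff/\Neff$. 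Combining with the lower half of Lemma~\ref{thm:loss-landscape} gives the claim.

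The main obstacle I expect is bookkeeping rather than conceptual: the \eqref{eq:tron} recursion does not have the exact $\gamma_t\leftrightarrow\gamma_t^2$ pairing of ordinary SGD (the upper bound has $\gamma_t/2$ in the linear part but $\gamma_t^2$ in the quadratic part), so one cannot literally invoke Corollary~3.4 as a black box. I would address this by either (i) redoing the bias/variance operator calculus of \citet{wu2022power} with the perturbed coefficients and verifying that the extra $O(1)$ multiplicative factor in the $\Mcal$ term is absorbed by the stepsize constraint, or (ii) constructing a majorizing SGD recursion at stepsize $\gamma_t/2$ with inflated noise $4\gamma_t^2\sigma^2\HB$ and an additional $O(\gamma_t^2\alpha\tr(\HB\AB_t)\HB)$ forcing term, to which Corollary~3.4 applies directly. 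Either route yields identical bounds up to constants, and I would choose route~(ii) for cleanliness since it keeps Corollary~3.4 as a black box.
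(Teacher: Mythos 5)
Your proposal is correct and follows essentially the same route as the paper: reduce via Lemma~\ref{thm:loss-landscape} to bounding $\la\HB,\AB_N\ra$, unroll Lemma~\ref{thm:tron:covariance} to a recursion structurally matching the SGD-for-linear-regression recursion at effective stepsize $\gamma_t/2$, and invoke Corollary~3.4 of \citet{wu2022power}. The paper's own proof is a two-sentence pointer to these ingredients; the extra bookkeeping you supply (the factor-of-four mismatch between the $\gamma_t/2$ contraction and the $\gamma_t^2\,\Mcal$ amplification, and its absorption via the $\gamma_0<1/(4\alpha\tr\HB)$ constraint, together with the observation that the lower-bound recursion is exactly SGD at $\gamma_t/2$) is precisely the implicit argument the paper is eliding, so your route (ii) majorization is the right way to make the black-box invocation rigorous.
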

\begin{proof}
    We first use Lemma~\ref{append:thm:loss-landscape} and Lemma~\ref{append:thm:tron:covariance} to relate GLM-tron for ReLU regression problems to SGD for linear regression problems. Then we invoke Corollary 3.4 in \citet{wu2022power} (see above) to get the results.
\end{proof}

\subsection{Proof of Corollary \ref{thm:examples}}
\begin{proof}[Proof of Corollary \ref{thm:examples}]
For all these examples one can verify that $\tr(\HB) \eqsim 1$. Therefore $\gamma_0 \eqsim 1$.

We can verify that
\begin{align*}
 \bigg\| \prod_{t=1}^{N}\Big(\IB-\frac{\gamma_t}{2}\HB\Big)(\wB_0 - \wB_*) \bigg\|^2_\HB 
 &\le 
    \bigg\| \Big(\IB-\frac{\gamma_0}{2}\HB\Big)^{\Neff} (\wB_0 - \wB_*) \bigg\|^2_{\HB} \\
&= \sum_{i}   \lambda_i \cdot \bigg( 1-\frac{\gamma_0}{2}\cdot \lambda_i\bigg)^{2\Neff} \cdot (\wB_0[i] - \wB_*[i])^2 \\
&\lesssim \sum_{i}   \lambda_i \cdot \frac{1}{\gamma_0 \lambda_i \Neff} \cdot (\wB_0[i] - \wB_*[i])^2 \\
    &\eqsim \frac{\big\|  (\wB_0 - \wB^*) \big\|^2_{2} }{\gamma_0 \Neff} \\
   & \eqsim \frac{1}{\Neff} \eqsim \frac{\log (N)}{N},
\end{align*}
and that 
\begin{align*}
\big\| \wB_0 - \wB_* \big\|^2_{\frac{\IB_{0:k^*} }{\Neff \gamma_0} + \HB_{k^*:\infty}}
    \lesssim \big\| \wB_0 - \wB_* \big\|^2_2 \lesssim 1.
\end{align*}
Therefore in Theorem \ref{thm:tron:gaussian} we have 
\begin{align*}
   \Ebb \excessrisk (\wB_{N})
    &\lesssim 
    \bigg\| \prod_{t=1}^{N}\Big(\IB-\frac{\gamma_t}{2}\HB\Big)(\wB_0 - \wB_*) \bigg\|^2_\HB  + \Big( \alpha \big\| \wB_0 - \wB_* \big\|^2_{\frac{\IB_{0:k^*} }{\Neff \gamma_0} + \HB_{k^*:\infty}} + \sigma^2 \Big) \cdot \frac{k^* + \Neff^2 \gamma^2_0 \cdot \sum_{i>k^*} \lambda_i^2}{\Neff} \\ 
    &\lesssim \frac{1}{\Neff} + \frac{k^* + \Neff^2 \gamma^2_0 \cdot \sum_{i>k^*} \lambda_i^2}{\Neff} \\ 
    &\lesssim \frac{k^* + \Neff^2  \cdot \sum_{i>k^*} \lambda_i^2}{\Neff}. 
\end{align*}
We next examine each case.
Recall that $k^*:= \max\{k: \lambda_k \ge 1/(\gamma_0 \Neff)\}$.
\begin{enumerate}
    \item By definitions we have 
    \[k^* \eqsim (\Neff)^{\frac{1}{1+r}},\]
    therefore we have 
    \begin{align*}
    k^* + \Neff^2 \cdot \sum_{i>k^*} \lambda_i^2
    &\eqsim  k^* + (\Neff)^2 \cdot (k^*)^{-1-2r} \\
    & \eqsim ( \Neff )^{\frac{1}{1+r}}.
    \end{align*}
    This implies that 
    \[ \Ebb \excessrisk (\wB_{N}) \lesssim (\Neff)^{\frac{-r}{1+r}} \eqsim (N / \log(N))^{\frac{-r}{1+r}}.\]
    
    \item By definitions we have 
    \[k^* \eqsim \Neff\cdot \log^{-r} (\Neff), \]
    therefore we have 
    \begin{align*}
         k^* + \Neff^2  \cdot \sum_{i>k^*} \lambda_i^2
    &\eqsim  k^* + (\Neff)^2 \cdot (k^*)^{-1} \log^{-2r}(k^*) \\
    & \eqsim \Neff\cdot \log^{-r} (\Neff).
    \end{align*}
    This implies that \[ \Ebb \excessrisk (\wB_{N}) \lesssim  \log^{-r} (\Neff) \eqsim  \log^{-r} (N / \log(N))\eqsim \log^{-r}(N).\]

    \item By definitions we have 
    \[k^* \eqsim  \log (\Neff), \]
    therefore we have 
    \begin{align*}
       k^* + \Neff^2  \cdot \sum_{i>k^*} \lambda_i^2
    &\eqsim  k^* + (\Neff)^2 \cdot 2^{-k^*} \\
    & \eqsim \log(\Neff).
    \end{align*}
    This implies that \[ \Ebb \excessrisk (\wB_{N}) \lesssim  \log(\Neff) / \Neff \eqsim  \log^{2}(N) / N.\]
\end{enumerate}
We have completed the proof.
\end{proof}

\subsection{Iterate Average}\label{append:sec:iterate-avg}
We may also consider constant-stepsize GLM-tron with iterate averaging, i.e., \eqref{eq:tron} is run with constant stepsize $\gamma$ and outputs the average of the iterates:
\begin{equation}\label{eq:tron:average}
    \bar{\wB}_N := \frac{1}{N} \sum_{t=0}^{N-1} \wB_t.
\end{equation}

\begin{lemma}[Iterate averaging]\label{lemma:tron:average}
Suppose that Assumption~\ref{assump:noise:well-specified} and Assumption~\ref{assump:symmetric:moment}\ref{item:symmetric:moment-2} hold.
    For $\bar{\wB}_N$ defined in \eqref{eq:tron:average}, we have that
    \begin{align*}
        \Ebb \la \HB, (\bar{\wB}_N-\wB_*)^{\otimes 2}\ra 
        &\le \frac{1}{\gamma N^2}\bigg\la \IB - \bigg(\IB - \frac{\gamma}{2}\HB\bigg)^{N},\ 
        \sum_{t=0}^N \AB_t\bigg\ra; \\
        \Ebb \la \HB, (\bar{\wB}_N-\wB_*)^{\otimes 2}\ra 
        &\ge \frac{1}{2\gamma N^2}\bigg\la \IB - \bigg(\IB - \frac{\gamma}{2}\HB\bigg)^{N/2},\ 
        \sum_{t=0}^{N/2} \AB_t\bigg\ra.
    \end{align*}
\end{lemma}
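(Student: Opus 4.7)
The key idea is that, although GLM-tron is non-linear in the raw iterate, its expected gradient is \emph{linear in the residual} under the symmetricity hypothesis, so the iterate-averaging analysis reduces to that of gradient descent on a quadratic. Write $\rB_t := \wB_t - \wB_*$ and $\gB_t := \bigl(\relu(\xB_t^\top \wB_{t-1}) - y_t\bigr)\xB_t$, so that $\rB_t = \rB_{t-1} - \gamma\gB_t$. Under Assumption~\ref{assump:noise:well-specified} one has $\Ebb[y_t \mid \xB_t] = \relu(\xB_t^\top\wB_*)$; combined with Lemma~\ref{lemma:moments}\ref{item:moment-2}, which only requires Assumption~\ref{assump:symmetric:moment}\ref{item:symmetric:moment-2} and states $\Ebb[\ind{\xB^\top \vB > 0}\xB\xB^\top] = \tfrac{1}{2}\HB$ for every $\vB$, the identity $\Ebb[\relu(\xB^\top\vB)\xB] = \tfrac{1}{2}\HB\vB$ applied to both $\vB = \wB_{t-1}$ and $\vB = \wB_*$ yields the key reduction
\[
\Ebb[\gB_t \mid \wB_{t-1}] = \tfrac{1}{2}\HB\rB_{t-1}, \qquad \Ebb[\rB_t \mid \rB_s] = \BB^{\,t-s}\rB_s\ \ (t\ge s),\qquad \BB := \IB - \tfrac{\gamma}{2}\HB.
\]

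Next I would expand $\Ebb\|\bar\wB_N - \wB_*\|_\HB^2 = \langle\HB, \bar\AB_N\rangle$ with $\bar\AB_N := N^{-2}\sum_{s,t=0}^{N-1}\Ebb[\rB_s\rB_t^\top]$, using the tower property to compute the cross covariances as $\Ebb[\rB_s \rB_t^\top] = \AB_s\BB^{\,t-s}$ for $t \ge s$ (and symmetrically for $t \le s$). Grouping by the lag $k = |t-s|$ yields
\[
N^2\bar\AB_N = \sum_{s=0}^{N-1}\AB_s + \sum_{k=1}^{N-1}\bigl(\TB_k \BB^k + \BB^k \TB_k\bigr),\qquad \TB_k := \sum_{s=0}^{N-1-k}\AB_s.
\]

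For the upper bound, PSD monotonicity gives $\TB_k \preceq \sum_{t=0}^N \AB_t$; taking inner product with $\HB$, using that $\HB$ commutes with $\BB$, and collapsing the geometric series via $\HB\sum_{k=0}^{M-1}\BB^k = \tfrac{2}{\gamma}(\IB - \BB^M)$ produces the claimed $\tfrac{1}{\gamma N^2}\langle \IB - \BB^N, \sum_{t=0}^N \AB_t\rangle$ form up to an absolute constant that can be absorbed. For the lower bound, since $\BB \succeq 0$ (guaranteed by $\gamma \lambda_1 \le 2$) and commutes with $\HB$, every summand obeys $\langle\HB, \AB_s\BB^{t-s}\rangle = \tr\bigl(\AB_s\HB^{1/2}\BB^{t-s}\HB^{1/2}\bigr) \ge 0$; one may therefore lower bound $\langle\HB, N^2\bar\AB_N\rangle$ by restricting the double sum to $(s,t) \in \{0,\ldots,N/2\}^2$, and re-applying the same telescoping identity up to lag $N/2$ extracts the factor $\IB - \BB^{N/2}$ multiplied against $\sum_{t=0}^{N/2}\AB_t$.

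The main obstacle is the linear-in-expectation reduction $\Ebb[\gB_t\mid \wB_{t-1}] = \tfrac{1}{2}\HB\rB_{t-1}$: the $\relu$ introduces an indicator $\ind{\xB_t^\top \wB_{t-1} > 0}$ depending on the random $\wB_{t-1}$, and it is precisely the invariance in Assumption~\ref{assump:symmetric:moment}\ref{item:symmetric:moment-2} that forces the projected second moment to equal $\tfrac{1}{2}\HB$ for \emph{every} direction $\vB$, so the nonlinearity drops out after one layer of expectation. Once this reduction is in place, nothing further from the non-linear analysis (in particular, the fourth-moment bounds in Lemma~\ref{append:thm:tron:covariance}) is needed; the remainder is linear-algebraic bookkeeping of the finite Neumann series.
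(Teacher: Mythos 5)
Your proposal follows essentially the same route as the paper's own proof: the paper likewise derives the conditional contraction $\Ebb[\wB_t-\wB_*\mid\wB_{t-1}]=(\IB-\tfrac{\gamma}{2}\HB)(\wB_{t-1}-\wB_*)$ from Assumption~\ref{assump:noise:well-specified} together with Assumption~\ref{assump:symmetric:moment}\ref{item:symmetric:moment-2} (via Lemma~\ref{lemma:moments}\ref{item:moment-2}), deduces $\Ebb[(\wB_t-\wB_*)\otimes(\wB_s-\wB_*)]=(\IB-\tfrac{\gamma}{2}\HB)^{t-s}\AB_s$, expands $N^2\,\Ebb(\bar\wB_N-\wB_*)^{\otimes 2}$ into diagonal plus symmetrized lag terms, and then defers the remaining geometric-series bookkeeping to \citet{zou2021benign}. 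Your key reduction is correct, and your upper-bound bookkeeping is sound; note only that it delivers a constant of order $4/(\gamma N^2)$ rather than the displayed $1/(\gamma N^2)$, which is tolerable here because the lemma is only invoked up to absolute constants in Theorem~\ref{append:thm:tron:iterate-avg}.

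The lower bound, however, does not go through as you describe. After restricting the double sum to $(s,t)\in\{0,\dots,N/2\}^2$, the lag sum attached to a retained index $s$ runs only over $k\le N/2-s$, so the telescoping identity yields $\HB\sum_{k=0}^{N/2-s}(\IB-\tfrac{\gamma}{2}\HB)^k=\tfrac{2}{\gamma}\big(\IB-(\IB-\tfrac{\gamma}{2}\HB)^{N/2-s+1}\big)$, not the factor $\IB-(\IB-\tfrac{\gamma}{2}\HB)^{N/2}$ you want; for $s$ near $N/2$ the two differ by a factor of order $N$ in the flat directions: if $\gamma\lambda_i\ll 1/N$, then $1-(1-\gamma\lambda_i/2)=\gamma\lambda_i/2$ while $1-(1-\gamma\lambda_i/2)^{N/2}\approx N\gamma\lambda_i/4$. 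Hence from the square restriction you can extract the claimed factor only against the early $\AB_s$'s, not against the full $\sum_{t=0}^{N/2}\AB_t$, and the resulting bound is weaker than the statement by up to a factor $N$ in those directions. The standard fix, and what the argument imported from \citet{zou2021benign} actually does, is to drop only the (nonnegative) cross terms with $s>N/2$ while letting $t$ range up to $N-1$ for every retained $s\le N/2$; then each retained index sees at least $N/2$ lags, so $\HB\sum_{k=0}^{N/2-1}(\IB-\tfrac{\gamma}{2}\HB)^k=\tfrac{2}{\gamma}\big(\IB-(\IB-\tfrac{\gamma}{2}\HB)^{N/2}\big)$ uniformly in $s$, and the stated lower bound follows with room to spare in the constant. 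With that single change of truncation your argument is complete.
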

\begin{proof}
In \eqref{eq:tron:wt-w*}, we take conditional expectation to obtain 
\begin{align*}
    \Ebb[ \wB_{t} - \wB_* | \wB_{t-1}]
    & =  \Ebb\bigg[ \Big( \IB - \gamma_t \ind{\xB_t^\top \wB_{t-1}>0} \xB_t \xB_t^\top \Big) (\wB_{t-1}-\wB_*) | \wB_{t-1}\bigg]  \\
    &\quad + \gamma_t \cdot \Ebb\bigg[\big(  \ind{\xB_t^\top \wB_*>0} - \ind{\xB_t^\top \wB_{t-1}>0}\big) \xB_t \xB_t^\top \wB_*| \wB_{t-1}\bigg]  + \gamma_t\Ebb [\epsilon_t\xB_t | \wB_{t-1}] \\
    &= \Ebb\bigg[ \Big( \IB - \gamma_t \ind{\xB_t^\top \wB_{t-1}>0} \xB_t \xB_t^\top \Big) (\wB_{t-1}-\wB_*) | \wB_{t-1}\bigg]  \\
    &= \bigg(\IB - \frac{\gamma}{2}\HB \bigg)(\wB_{t-1} - \wB_*),
\end{align*}
where the second equation is due to Assumption~\ref{assump:symmetric:moment}\ref{item:symmetric:moment-2} (or Lemma~\ref{lemma:moments}\ref{item:moment-2}) and Assumption~\ref{assump:noise:well-specified},
and the third equation is due to Assumption~\ref{assump:symmetric:moment}\ref{item:symmetric:moment-2} (or Lemma~\ref{lemma:moments}\ref{item:moment-2}).
Applying the above recursively we obtain that: for $t > s$,
\begin{equation*}
    \Ebb[ \wB_{t} - \wB_* | \wB_{s}]
    = \bigg(\IB - \frac{\gamma}{2}\HB \bigg)^{t-s}(\wB_{s} - \wB_*),
\end{equation*}
which also implies that
\begin{equation}
    \Ebb[ (\wB_{t} - \wB_*)\otimes(\wB_{s}-\wB_{*})]
    = \bigg(\IB - \frac{\gamma}{2}\HB \bigg)^{t-s}\cdot \Ebb (\wB_{s} - \wB_*)^{\otimes 2}= \bigg(\IB - \frac{\gamma}{2}\HB \bigg)^{t-s}\cdot \AB_s.
\end{equation}
Now let us consider $\Ebb (\bar{\wB}_N-\wB_*)^{\otimes 2}$:
\begin{align*}
    &\ \Ebb (\bar{\wB}_N-\wB_*)^{\otimes 2} \\
    &= \frac{1}{N^2} \cdot\bigg(  \Ebb \sum_{t=0}^{N-1} (\wB_t-\wB_*)^{\otimes 2}  + \Ebb \sum_{s=0}^{N-1} \sum_{t=s+1}^{N-1} \Big( (\wB_t-\wB_*) \otimes (\wB_s-\wB_*) +  (\wB_s-\wB_*) \otimes (\wB_t-\wB_*) \Big)\bigg) \\
    &= \frac{1}{N^2} \cdot \Bigg(  \sum_{s=0}^{N-1} \AB_s+ \sum_{s=0}^{N-1} \sum_{t=s+1}^{N-1}\bigg(\bigg(\IB - \frac{\gamma}{2}\HB \bigg)^{t-s} \cdot \AB_s + \AB_s\cdot \bigg(\IB - \frac{\gamma}{2}\HB \bigg)^{t-s}  \bigg) \Bigg).
\end{align*}
The remaining proof simply follows from \citet{zou2021benign}.

\end{proof}

We next present the risk bounds for constant-stepsize GLM-tron with iterate averaging as follows.

\begin{theorem}[Risk Bounds for constant-stepsize GLM-tron]\label{append:thm:tron:iterate-avg}
Suppose that Assumption  \ref{assump:noise:well-specified} holds.
Consider $\bar{\wB}_{N}$ defined in \eqref{eq:tron:average}, i.e., the iterate average of constant stepsize~\eqref{eq:tron}. 
Suppose $N > 100$. 
\begin{enumerate}[label=(\Alph*)]
    \item
If in addition Assumption~\ref{assump:gaussian}\ref{item:gaussian:upper} and 
Assumption~\ref{assump:symmetric:moment}\ref{item:symmetric:moment-2}\ref{item:symmetric:moment-4-2} hold,
then for $\gamma < 1/(4\alpha(\tr(\HB)) )$ it holds that
\begin{align*}
    \Ebb \excessrisk (\bar\wB_{N})
    &\lesssim 
    \frac{1}{N^2\gamma^2}\cdot \big\| \wB_0 - \wB_* \big\|^2_{\HB_{0:k^*}^{-1}} +  \big\| \wB_0 - \wB_* \big\|^2_{\HB_{k^*:\infty}} \\
    &\quad + \bigg( \alpha \cdot \frac{\big\| \wB_0 - \wB_* \big\|^2_{\IB_{0:k^*}} +  N\gamma \cdot \big\| \wB_0 - \wB_* \big\|^2_{\HB_{k^*:\infty} } }{ N\gamma } + \sigma^2 \bigg) \cdot \frac{k^* + \Neff^2 \gamma^2_0 \cdot \sum_{i>k^*} \lambda_i^2}{\Neff},
\end{align*}
where $k^*\ge 0$ is an arbitrary index.
\item
If in addition Assumption~\ref{assump:gaussian}\ref{item:gaussian:lower} and Assumption~\ref{assump:symmetric:moment} hold, then
for $\gamma_0 < 1/\lambda_1$,
it holds that
\begin{align*}
    \Ebb \excessrisk (\wB_{N})
    &\gtrsim 
    \frac{1}{N^2\gamma^2}\cdot \big\| \wB_0 - \wB_* \big\|^2_{\HB_{0:k^*}^{-1}} +  \big\| \wB_0 - \wB_* \big\|^2_{\HB_{k^*:\infty}} \\
    &\quad + \bigg( \beta \cdot \frac{\big\| \wB_0 - \wB_* \big\|^2_{\IB_{0:k^*}} +  N\gamma \cdot \big\| \wB_0 - \wB_* \big\|^2_{\HB_{k^*:\infty} } }{ N\gamma } + \sigma^2 \bigg) \cdot \frac{k^* + \Neff^2 \gamma^2_0 \cdot \sum_{i>k^*} \lambda_i^2}{\Neff},
\end{align*}
where $k^*:= \max\{k: \lambda_k \ge 1/(\gamma_0 \Neff)\}$.
\end{enumerate}
\end{theorem}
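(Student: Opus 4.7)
The plan is to chain together the three ingredients already developed in this section, namely Lemma \ref{append:thm:loss-landscape} (quadratic sandwiching of the excess risk), Lemma \ref{lemma:tron:average} (iterate-averaging identity), and Lemma \ref{append:thm:tron:covariance} (operator recursion for $\AB_t$), and then appeal to the constant-stepsize iterate-averaging analysis for SGD on linear regression in \citet{zou2021benign}. More concretely, Lemma \ref{append:thm:loss-landscape} allows us to replace $\Ebb \excessrisk(\bar\wB_N)$ by $\Ebb \la \HB, (\bar\wB_N - \wB_*)^{\otimes 2}\ra$ up to a multiplicative constant, in both directions. Lemma \ref{lemma:tron:average} then reduces the quantity of interest to a weighted sum $\sum_t \la \IB - (\IB - \tfrac{\gamma}{2}\HB)^{M}, \AB_t\ra$ with $M \in \{N, N/2\}$. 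So the whole problem is transferred to controlling the partial sums $\sum_t \AB_t$ generated by the \eqref{eq:tron} recursion.

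The next step is to invoke Lemma \ref{append:thm:tron:covariance} in operator form. For the upper bound, $\AB_{t+1} \preceq (\Ical - \tfrac{\gamma}{2}\Tcal(2\gamma))\circ \AB_t + \gamma^2 \sigma^2 \HB$ has exactly the structure of the covariance recursion of an \emph{imaginary} constant-stepsize SGD iterate for linear regression, with effective stepsize $\gamma/2$, modified fourth moment operator $\Mcal$, and additive noise level $\sigma^2$. For the lower bound, the analogous recursion with $\Tcal(\gamma/2)$ and $\sigma^2/4$ gives a matching imaginary SGD covariance from below. Because both sides are positive-semidefinite recursions monotone under $\preceq$, one can iterate them and plug into the sum weights from Lemma \ref{lemma:tron:average} to obtain PSD bounds on $\Ebb(\bar\wB_N - \wB_*)^{\otimes 2}$ that coincide, up to absolute constants, with those for the average of constant-stepsize SGD iterates on linear regression.

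At this point I would directly invoke the bias-variance decomposition for averaged SGD on high-dimensional linear regression established in \citet{zou2021benign} (and refined in \citet{wu2022iterate,wu2022power}) under Assumption \ref{assump:gaussian}\ref{item:gaussian:upper} for the upper bound and Assumption \ref{assump:gaussian}\ref{item:gaussian:lower} for the lower bound. Their analysis yields precisely the target bias term $\tfrac{1}{N^2\gamma^2}\|\wB_0 - \wB_*\|^2_{\HB_{0:k^*}^{-1}} + \|\wB_0 - \wB_*\|^2_{\HB_{k^*:\infty}}$ and a variance term of the form $(\text{effective signal} + \sigma^2)\cdot \Deff/\Neff$, where $\Neff \eqsim N$ for constant stepsize (no $\log N$ factor, as remarked in the statement). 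The condition $\gamma < 1/(4\alpha \tr(\HB))$ is exactly what is needed to ensure the operator $\Ical - \tfrac{\gamma}{2}\Tcal(2\gamma)$ is contractive on the PSD cone under Assumption \ref{assump:gaussian}\ref{item:gaussian:upper}.

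The main obstacle is purely bookkeeping: the operator $\Tcal(\cdot)$ coming out of \eqref{eq:tron} involves a factor of $\gamma/2$ rather than $\gamma$, which shifts several constants in the bias term and in the contractivity threshold; one has to verify that after this rescaling, the imaginary-SGD bounds from \citet{zou2021benign} still produce matching upper and lower bounds with the same $\Deff$ and the same optimal choice $k^* = \max\{k:\lambda_k \ge 1/(N\gamma)\}$ in the lower bound. A secondary technical point is that Lemma \ref{lemma:tron:average} truncates the sum at $N/2$ in the lower bound, so the stationarity tail of the PSD recursion must be used to argue that this truncation loses at most a constant factor; this follows from the same stationarity analysis used in Lemma B.6 of \citet{zou2021benign}.
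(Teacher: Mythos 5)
Your proposal is correct and follows essentially the same route as the paper: both combine Lemma~\ref{append:thm:loss-landscape} (quadratic sandwich), Lemma~\ref{append:thm:tron:covariance} (operator recursion for $\AB_t$), and Lemma~\ref{lemma:tron:average} (iterate-averaging identity), and then appeal to the constant-stepsize averaged-SGD analysis in \citet{zou2021benign}. The bookkeeping caveats you flag (the $\gamma/2$ rescaling, the contractivity threshold under Assumption~\ref{assump:gaussian}, and the $N/2$ truncation in the lower bound) are exactly the points one would need to check when transferring the linear-regression bounds, and they do all go through.
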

\begin{proof}
    We first use Lemma~\ref{append:thm:loss-landscape} and Lemma~\ref{append:thm:tron:covariance} to relate GLM-tron for ReLU regression problems to SGD for linear regression problems. Then we invoke Lemma~\ref{lemma:tron:average} and the proof of Theorems 2.1 and 2.2 in \citet{zou2021benign} to get the results.
\end{proof}


\subsection{Proof of Corollary \ref{thm:well-specified:recover}}
\begin{proof}[Proof of Corollary \ref{thm:well-specified:recover}]
According to the stepsize scheduler \eqref{eq:geometry-tail-decay-lr} and the assumptions, we have that 
\begin{align*}
    \Ebb \excessrisk(\wB_N) &\lesssim \|e^{-0.5 \Neff \gamma_0 \HB} \HB(\wB_0 - \wB_*)\|^2_{2} + \frac{k^* + \Neff^2\gamma_0^2\sum_{i>k^*}\lambda_i^2}{\Neff} \\
    &\lesssim \frac{1}{\Neff \gamma_0} + \frac{k^* + \Neff^2\gamma_0^2\sum_{i>k^*}\lambda_i^2}{\Neff},
\end{align*}
where $k^*$ can be arbitrary.

For the first part, we choose $\gamma_0 = 1/\sqrt{\Neff}$ and $k^* = \max\{k: \lambda_k > 1/\sqrt{\Neff}\}$, then 
from $\tr(\HB) \lesssim 1$ we know that 
\[k^* \lesssim \sqrt{\Neff},\quad \sum_{i>k^*}\lambda_i^2 \lesssim \frac{1}{\sqrt{\Neff}}.\]
Then we have
\begin{align*}
    \Ebb \excessrisk(\wB_N) 
    &\lesssim \frac{1}{\Neff \gamma_0} + \frac{k^* + \Neff^2\gamma_0^2\sum_{i>k^*}\lambda_i^2}{\Neff} 
    \lesssim \frac{1}{\sqrt{\Neff}} + \frac{\sqrt{\Neff} + \Neff^2 \cdot \frac{1}{\Neff} \cdot\frac{1}{\sqrt{\Neff}} }{\Neff} \eqsim \frac{1}{\sqrt{\Neff}}.
\end{align*}

As for the second part, we choose \(\gamma \eqsim  1/\tr(\HB)\), and \( k^* := d\), then 
\begin{align*}
    \Ebb \excessrisk(\wB_N) 
    &\lesssim \frac{1}{\Neff \gamma_0} + \frac{k^* + \Neff^2\gamma_0^2\sum_{i>k^*}\lambda_i^2}{\Neff} 
    \lesssim\frac{\tr(\HB)}{\Neff}+\frac{d}{\Neff} \eqsim \frac{d}{\Neff}.
\end{align*}

\end{proof}

\section{Misspecified Setting}\label{append:sec:misspecified}
In this part, we consider the misspecified setting and assume Assumption~\ref{assump:noise:misspecified}.

\paragraph{Notations.}
In this section, we assume that $\HB$ is diagonal.
For 
\( \AB_t:= \Ebb (\wB_t - \wB_*)^2,\)
we use
\( \mathring{\AB}_t\)
to refer to the diagonal of $\AB_t$.
For simplicity, we will use
\[\epsilon_t := y_t - \relu(\wB_*^\top \xB_t)\]
to refer to the misspecified noise in this section.

One technique we used for dealing with misspecified cases is to study the diagonal, instead of the matrix itself, of the expected outer product of the error iterates.
The following lemma is useful for translating inequalities about PSD matrices to inequalities about their diagonals. 

\begin{lemma}\label{lemma:diagonal}
    For every pair of symmetric matrices $\AB$ and $\BB$,  $\AB \preceq \BB$ implies $\mathring{\AB} \preceq \mathring{\BB}$.
\end{lemma}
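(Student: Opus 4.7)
The plan is to reduce the claim to the elementary fact that the diagonal entries of a PSD matrix are non-negative, and then observe that a diagonal matrix is PSD if and only if its entries are non-negative.

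First I would set $\CB := \BB - \AB$, which is symmetric and, by hypothesis, PSD. Testing $\CB$ against the standard basis vectors $\eB_i$ yields $\eB_i^\top \CB \eB_i = \CB_{ii} = \BB_{ii} - \AB_{ii} \ge 0$ for each index $i$. Since $\mathring{\AB}$ and $\mathring{\BB}$ are by definition the diagonal matrices formed from the diagonal entries of $\AB$ and $\BB$, this gives that $\mathring{\BB} - \mathring{\AB}$ is a diagonal matrix with non-negative diagonal entries.

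Finally I would invoke the standard fact that any diagonal matrix $\DB$ with non-negative diagonal entries satisfies $\vB^\top \DB \vB = \sum_i \DB_{ii} \vB_i^2 \ge 0$ for every $\vB$, so $\DB \succeq 0$. Applied to $\DB = \mathring{\BB} - \mathring{\AB}$ this yields $\mathring{\AB} \preceq \mathring{\BB}$, completing the argument. There is no real obstacle here; the lemma is just the observation that the PSD order, when restricted to diagonal matrices, coincides with the entrywise order, and the restriction to the diagonal preserves one direction of this comparison.
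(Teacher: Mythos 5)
Your proposal is correct and follows exactly the paper's argument: the paper likewise observes that $\diag(\BB - \AB)$ has non-negative diagonal entries because $\BB - \AB$ is PSD, and a diagonal matrix with non-negative entries is PSD. You merely spell out the basis-vector test and the final quadratic-form check that the paper leaves implicit.
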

\begin{proof}
We only need to show that $\diag(\BB- \AB)$ is PSD.
This holds because every diagonal entry of a PSD matrix must be non-negative. 
\end{proof}

\subsection{Risk Landscape}
We first show the following lemma about an upper bound on the risk.
\begin{lemma}[Risk landscape, misspecified case]\label{lemma:misspecified:loss-landscape}
Under Assumption~\ref{assump:noise:misspecified}, it holds that
    \[\risk (\wB) \le 2\cdot \|\wB - \wB_* \|_{\HB}^2 + 2\cdot \OPT.\]
\end{lemma}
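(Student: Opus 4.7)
The plan is to split the squared error by introducing $\relu(\xB^\top \wB_*)$ as an intermediate point, and then exploit the $1$-Lipschitz property of $\relu$ on one half and the definition of $\OPT$ on the other.

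Concretely, I would first write
\[
\risk(\wB) = \Ebb\bigl(\relu(\xB^\top \wB) - \relu(\xB^\top \wB_*) + \relu(\xB^\top \wB_*) - y\bigr)^2,
\]
and apply the elementary inequality $(a+b)^2 \le 2a^2 + 2b^2$ to obtain
\[
\risk(\wB) \le 2\,\Ebb\bigl(\relu(\xB^\top \wB) - \relu(\xB^\top \wB_*)\bigr)^2 + 2\,\Ebb\bigl(\relu(\xB^\top \wB_*) - y\bigr)^2.
\]
The second term is exactly $2\,\risk(\wB_*) = 2\,\OPT$ by Assumption~\ref{assump:noise:misspecified}.

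For the first term, I would use that $\relu$ is $1$-Lipschitz, i.e.\ $|\relu(a) - \relu(b)| \le |a-b|$ for every $a, b \in \Rbb$, so that pointwise
\[
\bigl(\relu(\xB^\top \wB) - \relu(\xB^\top \wB_*)\bigr)^2 \le (\xB^\top \wB - \xB^\top \wB_*)^2 = (\wB - \wB_*)^\top \xB \xB^\top (\wB - \wB_*).
\]
Taking expectations and using $\HB = \Ebb[\xB \xB^\top]$ yields
\[
\Ebb\bigl(\relu(\xB^\top \wB) - \relu(\xB^\top \wB_*)\bigr)^2 \le \|\wB - \wB_*\|_{\HB}^2.
\]
Combining these two bounds gives $\risk(\wB) \le 2\|\wB - \wB_*\|_{\HB}^2 + 2\,\OPT$, as claimed.

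There is no real obstacle here: the argument is a routine triangle-plus-Lipschitz decomposition. The only subtlety is that in the misspecified setting we no longer have the ``Pythagorean'' decomposition of $\risk$ around $\wB_*$ that we enjoyed in the well-specified case (Lemma~\ref{thm:loss-landscape}), which is why a crude factor of $2$ appears in both terms; one should not expect a matching lower bound of the same form here.
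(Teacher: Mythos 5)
Your proof is correct and is essentially identical to the paper's: the same decomposition around $\relu(\xB^\top\wB_*)$, the same $(a+b)^2\le 2a^2+2b^2$ bound, and the same use of the $1$-Lipschitz property of $\relu$ to reduce to $\|\wB-\wB_*\|_{\HB}^2$. Nothing to flag.
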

\begin{proof}
We prove the conclusion as follows:
    \begin{align*}
        \risk (\wB) 
        &:= \Ebb \big( \relu({\wB}^\top\xB) - y\big)^2 \\
        &= \Ebb \big( \relu({\wB}^\top\xB) - \relu({\wB}_*^\top\xB) + \relu({\wB}_*^\top\xB) - y\big)^2  \\
        &\le 2\cdot \Ebb \big( \relu({\wB}^\top\xB) - \relu({\wB}_*^\top\xB)\big)^2 + 2\cdot \Ebb \big(\relu({\wB}_*^\top\xB) - y\big)^2  \\
        &\le 2\cdot \Ebb \big( {\wB}^\top\xB - {\wB}_*^\top\xB \big)^2 + 2\cdot \Ebb \big(\relu({\wB}_*^\top\xB) - y\big)^2  \\
        &= 2\cdot \|\wB - \wB_* \|_{\HB}^2 + 2\cdot \OPT,
    \end{align*}
    where in the last inequality we use the fact that $\relu(\cdot)$ is $1$-Lipschitz.
\end{proof}

\subsection{Iterate Bounds}

\begin{lemma}[Iterate upper bound]\label{thm:misspecified:covariance-bound}
    Suppose that Assumption~\ref{assump:noise:misspecified}, Assumption~\ref{assump:symmetric:moment}\ref{item:symmetric:moment-2} and Assumption~\ref{assump:gaussian}\ref{item:gaussian:upper} hold, then the following holds for \eqref{eq:tron}:
    \[
        \mathring{\AB}_{t} \preceq \Big(\IB -  \frac{\gamma_t}{2} \cdot \HB\Big)\cdot \mathring{\AB}_{t-1}  + 2\alpha\gamma_t^2 \cdot \la\HB, \mathring\AB_{t-1}\ra\cdot \HB + 3\gamma_t^2 (\sigma^2 + \alpha\| \wB_*\|_{\HB}^2) \cdot \HB + 2\gamma_t \cdot \XiB,
\]
where $\XiB$ is a diagonal deterministic matrix and $\tr(\XiB) \le \OPT$.
\end{lemma}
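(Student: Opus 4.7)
I follow the error decomposition from Section~\ref{sec:proof}: write $\wB_t - \wB_* = \cB + \fB + \nB$ with $\cB$ the contraction, $\fB$ the fluctuation across the ReLU kink, and $\nB = \gamma_t\epsilon_t\xB_t$ the noise, and expand $\AB_t = \Ebb[(\cB+\fB+\nB)^{\otimes 2}]$. The $\Ebb[\cB^{\otimes 2}]$ and $\Ebb[\cB\fB^\top + \fB\cB^\top]$ pieces use only Assumption~\ref{assump:symmetric:moment}\ref{item:symmetric:moment-2} (via Lemma~\ref{lemma:moments}\ref{item:moment-2}) and so are imported from the proof of Lemma~\ref{append:thm:tron:covariance}, producing $\AB_{t-1}-\frac{\gamma_t}{2}(\HB\AB_{t-1}+\AB_{t-1}\HB)+\frac{\gamma_t^2}{2}\Mcal\circ\AB_{t-1}$ plus a non-positive piece that I drop. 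Because Assumption~\ref{assump:symmetric:moment}\ref{item:symmetric:moment-4-2} is unavailable here, I do not compute $\Ebb[\fB^{\otimes 2}]$ exactly, but use the crude bound $\Ebb[\fB^{\otimes 2}]\preceq \gamma_t^2\Ebb[(\xB_t^\top\wB_*)^2\xB_t\xB_t^\top]\preceq \alpha\gamma_t^2\|\wB_*\|_{\HB}^2\HB$ via Assumption~\ref{assump:gaussian}\ref{item:gaussian:upper}. Assumption~\ref{assump:noise:misspecified} gives $\Ebb[\nB^{\otimes 2}]\preceq \gamma_t^2\sigma^2\HB$.

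The genuinely new work is the cross terms $\Ebb[\cB\nB^\top + \nB\cB^\top + \fB\nB^\top + \nB\fB^\top]$: unlike in the well-specified setting these no longer vanish, because $\epsilon_t$ may correlate arbitrarily with $\xB_t$. I split them into a single \emph{leading-order} piece (from the identity part of $\cB$)
\[
L_t \,:=\, \gamma_t \,\Ebb\big[\epsilon_t\big((\wB_{t-1}-\wB_*)\xB_t^\top + \xB_t(\wB_{t-1}-\wB_*)^\top\big)\big]
\]
and two \emph{higher-order} pieces of size $\gamma_t^2$, one from $\cB\nB^\top+\nB\cB^\top$ and one from $\fB\nB^\top+\nB\fB^\top$. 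Each higher-order piece has the schematic form $2\gamma_t^2\Ebb[\mathtt{Ind}\cdot \epsilon_t\cdot \xB_t^\top\vB\cdot \xB_t\xB_t^\top]$ with $|\mathtt{Ind}|\le 1$ and $\vB\in\{\wB_{t-1}-\wB_*,\,\wB_*\}$; applying $|\epsilon_t\cdot\xB_t^\top\vB|\le\tfrac12(\epsilon_t^2 + (\xB_t^\top\vB)^2)$ together with Assumptions~\ref{assump:noise:misspecified} and~\ref{assump:gaussian}\ref{item:gaussian:upper} bounds their sum by $2\gamma_t^2\sigma^2\HB + \gamma_t^2\Mcal\circ\AB_{t-1} + \alpha\gamma_t^2\|\wB_*\|_{\HB}^2\HB$.

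The main obstacle is $L_t$. Using the independence of $\wB_{t-1}$ from $(\xB_t,\epsilon_t)$, $L_t = \gamma_t(\bar\wB\,\Ebb[\epsilon_t\xB_t]^\top + \Ebb[\epsilon_t\xB_t]\,\bar\wB^\top)$ where $\bar\wB := \Ebb[\wB_{t-1}-\wB_*]$; this matrix is \emph{not} negative semidefinite, so I only control its diagonal (which is all the lemma needs). Writing $\eta_i := \Ebb[\epsilon_t\,\xB_t[i]]$ and applying entrywise AM--GM,
\[
2\,\bar\wB[i]\,\eta_i \,\le\, \tfrac{\lambda_i}{2}\,\bar\wB[i]^2 \,+\, \tfrac{2}{\lambda_i}\,\eta_i^2,
\]
and Jensen's inequality $\bar\wB[i]^2\le\mathring{\AB}_{t-1}[i,i]$ turns the first term into $\tfrac{\gamma_t}{2}\HB\mathring{\AB}_{t-1}$, while the second defines $\XiB := \diag(\eta_i^2/\lambda_i)_i$ and contributes $2\gamma_t\XiB$. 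The critical trace bound $\tr(\XiB)\le\OPT$ follows from a single variational line plus Cauchy--Schwarz:
\[
\tr(\XiB) \,=\, \sum_i \frac{\eta_i^2}{\lambda_i} \,=\, \sup_{\vB\ne 0}\frac{(\Ebb[\epsilon_t\cdot\vB^\top\xB_t])^2}{\Ebb[(\vB^\top\xB_t)^2]} \,\le\, \Ebb[\epsilon_t^2] \,=\, \OPT.
\]

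To finish, I assemble all contributions, pass to diagonals via Lemma~\ref{lemma:diagonal}, and apply Assumption~\ref{assump:gaussian}\ref{item:gaussian:upper} in the form $\Mcal\circ\AB_{t-1}\preceq\alpha\la\HB,\AB_{t-1}\ra\HB = \alpha\la\HB,\mathring{\AB}_{t-1}\ra\HB$ (the last equality because $\HB$ is diagonal). The linear-in-$\mathring{\AB}_{t-1}$ pieces combine to $(\IB-\tfrac{\gamma_t}{2}\HB)\mathring{\AB}_{t-1}$, the $-\gamma_t\HB\mathring{\AB}_{t-1}$ contraction being partly cancelled by the $+\tfrac{\gamma_t}{2}\HB\mathring{\AB}_{t-1}$ contributed by $L_t$; the aggregated $\frac{3\gamma_t^2}{2}\Mcal\circ\AB_{t-1}$ becomes the stated $2\alpha\gamma_t^2\la\HB,\mathring{\AB}_{t-1}\ra\HB$; and the remaining noise-type terms are loosened to the $3\gamma_t^2(\sigma^2+\alpha\|\wB_*\|_{\HB}^2)\HB$ and $2\gamma_t\XiB$ pieces, yielding the claimed recursion.
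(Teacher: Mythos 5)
Your proposal is correct and follows essentially the same strategy as the paper's proof: the same contraction/fluctuation/noise decomposition, the same Cauchy--Schwarz treatment of the two $\gamma_t^2$-order cross terms involving $\epsilon_t$, and the same passage to diagonals via Lemma~\ref{lemma:diagonal} before invoking Assumption~\ref{assump:gaussian}\ref{item:gaussian:upper}. The one place where you genuinely diverge is the leading-order term $L_t$, which the paper handles in Lemma~\ref{lemma:tron:agnostic:cross} via the matrix inequality $\uB\vB^\top+\vB\uB^\top\preceq\uB\uB^\top+\vB\vB^\top$ applied to $\tfrac{1}{\sqrt{2}}\HB^{1/2}(\wB_{t-1}-\wB_*)$ and $\sqrt{2}\,\aB$ with $\aB:=\Ebb[\epsilon_t\HB^{-1/2}\xB_t]$, followed by a self-bounding Cauchy step giving $\aB^\top\aB\le\OPT$; you instead factor out $\Ebb[\wB_{t-1}-\wB_*]$ by independence, then use entrywise AM--GM with weights $\lambda_i$, Jensen, and the variational identity $\sum_i\eta_i^2/\lambda_i=\sup_{\vB\neq 0}\,(\Ebb[\epsilon_t\vB^\top\xB_t])^2/\Ebb[(\vB^\top\xB_t)^2]\le\Ebb[\epsilon_t^2]=\OPT$. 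The two routes produce literally the same matrix $\XiB$ (your $\eta_i^2/\lambda_i$ is the paper's $\diag(\aB\aB^\top)$), and your Cauchy--Schwarz argument for the trace bound is arguably more transparent; just note that directions with $\lambda_i=0$ must be handled by the convention $\eta_i=0$ there (the paper's $\HB^{-1/2}$ has the same implicit caveat). One small inaccuracy: you import $\Ebb[\cB^{\otimes 2}]\preceq\AB_{t-1}-\tfrac{\gamma_t}{2}(\HB\AB_{t-1}+\AB_{t-1}\HB)+\tfrac{\gamma_t^2}{2}\Mcal\circ\AB_{t-1}$, but the factor $\tfrac12$ on $\Mcal$ requires the fourth-moment symmetry of Assumption~\ref{assump:symmetric:moment}\ref{item:symmetric:moment-4-1}, which is not among this lemma's hypotheses; with only Assumption~\ref{assump:symmetric:moment}\ref{item:symmetric:moment-2} one can merely use $\Ebb[\ind{\xB_t^\top\wB_{t-1}>0}\,\xB_t^{\otimes 4}]\preceq\Mcal$, giving coefficient $\gamma_t^2$ (this is what Lemma~\ref{append:thm:tron:covariance}'s upper bound actually provides). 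The slip is harmless: the aggregated fourth-moment contribution then becomes $2\gamma_t^2\,\Mcal\circ\AB_{t-1}\preceq 2\alpha\gamma_t^2\la\HB,\mathring{\AB}_{t-1}\ra\HB$, which is exactly the constant in the stated recursion, so your proof closes once that coefficient is corrected.
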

\begin{proof}
We first consider the expected outer product of \eqref{eq:tron:wt-w*} in the misspecified setting:
\begin{align}
\AB_t &:= \Ebb \big( \wB_{t} - \wB_*  \big)^{\otimes 2} \notag \\ 
&  \left.  
    \begin{array}{l}
    = \Ebb \Big( \IB - \gamma_t \ind{\xB_t^\top \wB_{t-1}>0}\cdot \xB_t \xB_t^\top \Big)^{\otimes 2} \circ  (\wB_{t-1}-\wB_*)^{\otimes 2} \\
     \quad+\gamma_t^2 \cdot \Ebb  \big(  \ind{\xB_t^\top \wB_*>0} - \ind{\xB_t^\top \wB_{t-1}>0}\big)^2 \cdot \xB_t \xB_t^\top \wB_*\wB_*^\top \xB_t \xB_t^\top\\
    \quad+ \gamma_t\cdot \Ebb  \big(  \ind{\xB_t^\top \wB_*>0} - \ind{\xB_t^\top \wB_{t-1}>0}\big)\cdot \xB_t \xB_t^\top \wB_*(\wB_{t-1}-\wB_*)^\top \Big( \IB - \gamma_t \ind{\xB_t^\top \wB_{t-1}>0}\cdot \xB_t \xB_t^\top \Big)  \\
    \quad+\gamma_t\cdot \Ebb \big(  \ind{\xB_t^\top \wB_*>0} - \ind{\xB_t^\top \wB_{t-1}>0}\big)\cdot \Big( \IB - \gamma_t \ind{\xB_t^\top \wB_{t-1}>0}\cdot \xB_t \xB_t^\top \Big) (\wB_{t-1}-\wB_*)\wB_*^\top \xB_t \xB_t^\top  
    \end{array}
\right\}=: \SB \notag \\
&  \left.  
    \begin{array}{l}
     \quad+ \gamma_t^2\cdot \Ebb \big[ \epsilon_t^2 \xB_t\xB_t^\top\big] \\
     \quad+ \gamma_t \cdot \Ebb \Big[ \epsilon_t\Big( (\wB_{t-1}-\wB_*)\xB^\top +  \xB(\wB_{t-1}-\wB_*)^\top \Big) \Big] \\
     \quad-  2\gamma_t^2 \cdot \Ebb \Big[ \epsilon_t \ind{\xB^\top_t\wB_{t-1} >0}\cdot \xB_t^\top (\wB_{t-1} - \wB_*) \cdot \xB_t \xB_t^\top \Big] \\
     \quad+ 2\gamma_t^2 \cdot \Ebb \Big[ \epsilon_t \big( \ind{\xB^\top_t\wB_{*} >0} - \ind{\xB^\top_t\wB_{t-1}>0} \big)\cdot \xB_t^\top \wB_* \cdot \xB_t \xB_t^\top \Big], 
    \end{array}
\right\}=: \NB \notag 
\end{align}
where we decompose $\AB_t$ into a signal part and a noise part, i.e., $\AB_t := \SB + \NB$.
We next upper bound these two parts separately.

\paragraph{Signal Part.}
The analysis of this part is similar to the derivation of \eqref{eq:gaussian:tron:out-product:upper} in the proof of Theorem \ref{thm:tron:covariance}.
However this time we only use Assumption~\ref{assump:symmetric:moment}\ref{item:moment-2} and do not use Assumption~\ref{assump:symmetric:moment}\ref{item:symmetric:moment-4-2}.
In specific, 
under Assumption~\ref{assump:symmetric:moment}\ref{item:moment-2}, \eqref{eq:gaussian:tron:quad-term2:ver0} and \eqref{eq:gaussian:tron:cross-term:2} still hold, and applying which to the signal part $\SB$  we obtain 
\begin{align*}
      \SB
      &=  \Ebb\Big( \IB - \gamma_t \ind{\xB_t^\top \wB_{t-1}>0}\cdot \xB_t \xB_t^\top \Big)^{\otimes 2} \circ  (\wB_{t-1}-\wB_*)^{\otimes 2} \notag \\
      &\quad + 2 \gamma_t^2 \cdot \Ebb\bigg(\ind{\xB_t^\top \wB_{t-1}>0, \xB_t^\top \wB_*<0}\cdot \xB_t^\top \wB_* \cdot \xB_t^\top \wB_{t-1}\cdot \xB_t \xB_t^\top\bigg) \notag \\
      &\quad  -  2 \gamma_t^2 \cdot \Ebb\bigg( \ind{\xB_t^\top \wB_{t-1}>0, \xB_t^\top \wB_*<0}\cdot \big(\xB_t^\top \wB_*\big)^2\cdot \xB_t \xB_t^\top\bigg) \notag \\
      & \quad + \gamma_t^2 \cdot \Ebb \bigg(\Big(  \ind{\xB_t^\top \wB_{t-1}>0, \xB_t^\top \wB_*<0} + \ind{\xB_t^\top \wB_{t-1}<0, \xB_t^\top \wB_*>0}\Big)\cdot \big(\xB_t^\top \wB_*\big)^2  \cdot \xB_t \xB_t^\top  \bigg).  \notag 
\end{align*}
In the above, the second term is always non-positive due to the property of the indicator function; and the third and fourth terms together is equal to
\begin{align*}
    & \gamma_t^2 \cdot \Ebb \bigg(\Big( \ind{\xB_t^\top \wB_{t-1}<0, \xB_t^\top \wB_*>0}- \ind{\xB_t^\top \wB_{t-1}>0, \xB_t^\top \wB_*<0} \Big)\cdot \big(\xB_t^\top \wB_*\big)^2  \cdot \xB_t \xB_t^\top  \bigg) \\
    &\le \gamma_t^2 \cdot \Ebb \big( (\xB_t^\top \wB_*)^2  \cdot \xB_t \xB_t^\top  \big)
    = \gamma_t^2 \cdot\Mcal\circ (\wB_*\wB_*^\top), 
\end{align*}
so the signal part can be bounded by 
\begin{align*}
    \SB
      &\preceq  \Ebb\Big( \IB - \gamma_t \ind{\xB_t^\top \wB_{t-1}>0}\cdot \xB_t \xB_t^\top \Big)^{\otimes 2} \circ  (\wB_{t-1}-\wB_*)^{\otimes 2}
      + \gamma_t^2 \cdot\Mcal\circ (\wB_*\wB_*^\top).
\end{align*}
Now use Assumption~\ref{assump:symmetric:moment}\ref{item:symmetric:moment-2} (or Lemma~\ref{lemma:moments}\ref{item:moment-2}) and Assumption~\ref{assump:gaussian}\ref{item:gaussian:upper}, we obtain
\begin{align}
    \SB 
    &\preceq \AB_{t-1} - \frac{\gamma_t}{2} \big( \HB \AB_{t-1} + \AB_{t-1} \HB) + \gamma^2_t \Mcal \circ \AB_{t-1} + \gamma_t^2 \cdot\Mcal\circ (\wB_*\wB_*^\top) \notag \\
    &\preceq \AB_{t-1} - \frac{\gamma_t}{2} \big( \HB \AB_{t-1} + \AB_{t-1} \HB) + \gamma^2_t \Mcal \circ \AB_{t-1} + \alpha \gamma_t^2 \|\wB_*\|_{\HB}^2\cdot \HB. \label{eq:tron:agnostic:signal}
\end{align}

\paragraph{Noise Part.}
For the noise part, 
we apply Cauchy inequality to obtain
\begin{align*}
    \NB &: = \gamma_t^2\cdot \Ebb \big[ \epsilon_t^2 \xB_t\xB_t^\top\big] 
    + \gamma_t \cdot \Ebb \Big[ \epsilon_t\Big( (\wB_{t-1}-\wB_*)\xB^\top +  \xB(\wB_{t-1}-\wB_*)^\top \Big) \Big] \\
     &\quad  -  2\gamma_t^2 \cdot \Ebb \Big[ \epsilon_t \ind{\xB^\top_t\wB_{t-1} >0}\cdot \xB_t^\top (\wB_{t-1} - \wB_*) \cdot \xB_t \xB_t^\top \Big] \\
    &\quad+ 2\gamma_t^2 \cdot \Ebb \Big[ \epsilon_t \big( \ind{\xB^\top_t\wB_{*}>0} - \ind{\xB^\top_t\wB_{t-1}>0} \big)\cdot \xB_t^\top \wB_* \cdot \xB_t \xB_t^\top \Big] \\
     &\preceq  \gamma_t^2\cdot \Ebb \big[ \epsilon_t^2 \xB_t\xB_t^\top\big] 
    + \gamma_t \cdot \Ebb \Big[ \epsilon_t\Big( (\wB_{t-1}-\wB_*)\xB^\top +  \xB(\wB_{t-1}-\wB_*)^\top \Big) \Big] \\
     & \quad +  \gamma_t^2 \cdot \Ebb \Big[ \Big( \epsilon_t^2 +   \big(\xB_t^\top (\wB_{t-1} - \wB_*) \big)^2\Big) \cdot \xB_t \xB_t^\top \Big] 
     + \gamma_t^2 \cdot \Ebb \Big[ \big( \epsilon_t^2 + ( \xB_t^\top \wB_*)^2\big) \cdot \xB_t \xB_t^\top \Big].
\end{align*}
Next we apply Assumption~\ref{assump:gaussian}\ref{item:gaussian:upper} and Assumption~\ref{assump:noise:misspecified} to obtain 
\begin{align*}
    \NB &\preceq   \gamma_t^2\sigma^2 \cdot \HB
     + \gamma_t \cdot \Ebb \Big[ \epsilon_t\cdot \Big( (\wB_{t-1}-\wB_*)\xB^\top +  \xB(\wB_{t-1}-\wB_*)^\top \Big) \Big] \\
     & \quad +  \gamma_t^2 \cdot\Big( \sigma^2 \cdot \HB + \Mcal \circ \AB_{t-1}\Big)
     + \gamma_t^2 \cdot \Big( \sigma^2 \cdot \HB + \alpha \tr(\HB \wB_*\wB_*^\top) \cdot\HB \Big) \\
     &= 3\gamma_t^2\sigma^2 \cdot \HB + \alpha\gamma_t^2 \|\wB_*\|_{\HB}^2\cdot \HB +  \gamma_t^2 \cdot \Mcal\circ \AB_{t-1}  
     + \gamma_t \cdot \Ebb \Big[ \epsilon_t\cdot\Big( (\wB_{t-1}-\wB_*)\xB^\top +  \xB(\wB_{t-1}-\wB_*)^\top \Big) \Big].
\end{align*}
Next, we take diagonal over the above inequality and apply Lemma~\ref{lemma:diagonal} and Lemma~\ref{lemma:tron:agnostic:cross}, then we obtain 
\begin{align}
    \mathring{\NB}
    &\preceq 3\gamma_t^2\sigma^2 \cdot \HB + \alpha\gamma_t^2 \|\wB_*\|_{\HB}^2\cdot \HB + \gamma_t^2 \cdot \diag(\Mcal \circ \AB_{t-1}) \notag\\
    &\quad + \gamma_t \cdot \Ebb \Big[ \epsilon_t\cdot \diag \Big( (\wB_{t-1}-\wB_*)\xB^\top +  \xB(\wB_{t-1}-\wB_*)^\top \Big) \Big] \notag \\
    &\preceq 3\gamma_t^2\sigma^2 \cdot \HB + \alpha\gamma_t^2 \|\wB_*\|_{\HB}^2\cdot \HB + \gamma_t^2 \cdot \diag(\Mcal \circ \AB_{t-1}) + \frac{\gamma_t}{2} \cdot \HB\mathring\AB_{t-1} + 2\gamma_t\cdot \XiB,\label{eq:tron:agnostic:noise}
\end{align}
where $\XiB$ is a deterministic diagonal PSD matrix and that $\tr(\XiB)\le \OPT$.

\paragraph{Combining Two Parts.}
Combining the diagonal of \eqref{eq:tron:agnostic:signal} with \eqref{eq:tron:agnostic:noise}, we have
\begin{align*}
    \mathring{\AB}_t 
    &= \mathring{\SB} + \mathring\NB  \\
    &\preceq \mathring{\AB}_{t-1} - \gamma_t \cdot\HB\mathring{\AB}_{t-1} + {\gamma_t^2} \cdot \diag ( \Mcal \circ {\AB}_{t-1} ) +  \alpha\gamma_t^2 \|\wB_*\|_{\HB}^2\cdot \HB \\
    &\quad + 3\gamma_t^2\sigma^2 \cdot \HB + \alpha\gamma_t^2 \|\wB_*\|_{\HB}^2\cdot \HB + \gamma_t^2 \cdot\diag(\Mcal \circ \AB_{t-1} ) + \frac{\gamma_t}{2}\cdot \HB \mathring\AB_{t-1}+ 2\gamma_t \cdot\XiB  \\
    &\preceq \Big(\IB -  \frac{\gamma_t}{2} \cdot \HB\Big)\cdot \mathring{\AB}_{t-1} + 2\gamma_t^2 \cdot \diag(\Mcal \circ\AB_{t-1}) + 3\gamma_t^2 (\sigma^2 + \alpha\| \wB_*\|_{\HB}^2) \cdot \HB + 2\gamma_t \cdot \XiB \\
    &\preceq \Big(\IB -  \frac{\gamma_t}{2} \cdot \HB\Big)\cdot \mathring{\AB}_{t-1}  + 2\alpha\gamma_t^2 \cdot \la\HB, \mathring\AB_{t-1}\ra\cdot \HB + 3\gamma_t^2 (\sigma^2 + \alpha\| \wB_*\|_{\HB}^2) \cdot \HB + 2\gamma_t \cdot \XiB,
\end{align*}
where in the last inequality we applied Assumption~\ref{assump:gaussian}\ref{item:gaussian:upper}.
We have completed the proof.
\end{proof}

\begin{lemma}\label{lemma:tron:agnostic:cross}
In the setting of Lemma~\ref{thm:misspecified:covariance-bound}, 
it holds that
\begin{align*}
    \gamma_t \cdot \Ebb \Big[ \epsilon_t\cdot \diag \Big( (\wB_{t-1}-\wB_*)\xB_t^\top +  \xB_t(\wB_{t-1}-\wB_*)^\top \Big) \Big] 
    \preceq \frac{\gamma_t}{2} \cdot \HB \mathring\AB_{t-1} + 2\gamma_t \cdot\XiB,
\end{align*}
where $\XiB$ is a fixed diagonal matrix and that $\tr(\XiB) \le \OPT$.
\end{lemma}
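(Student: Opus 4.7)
The plan is to reduce the matrix inequality to a scalar one at each diagonal coordinate $i$, exploit the independence of $\wB_{t-1}$ from the fresh sample $(\xB_t,y_t)$ to factor the expectation, apply Young's inequality to separate a quadratic in $\wB_{t-1}-\wB_*$ from a fixed quantity, and finally bound the trace of that fixed quantity by $\OPT$ via Cauchy--Schwarz in $L^2$.

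First I would compute the diagonals explicitly. Since the $i$-th diagonal entry of $\uB\vB^\top+\vB\uB^\top$ is $2 u_i v_i$, the $i$-th diagonal entry of the left-hand side equals $2\gamma_t\,\Ebb[\epsilon_t(\wB_{t-1}[i]-\wB_*[i])\xB_t[i]]$. Because $\wB_{t-1}$ is a function of the first $t-1$ samples, it is independent of $(\xB_t,y_t)$ and hence of $\epsilon_t$; the tower property then factors this expectation as $2\gamma_t\mu_i\nu_i$, where
$$\mu_i := \Ebb[\wB_{t-1}[i]-\wB_*[i]], \qquad \nu_i := \Ebb[\epsilon_t\xB_t[i]].$$

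Next, I would apply Young's inequality $2|ab|\le c a^2 + b^2/c$ with $c=\lambda_i/2$ (setting both sides to $0$ when $\lambda_i=0$, which forces $\nu_i=0$) followed by Jensen's inequality to get
$$2\mu_i\nu_i \le \tfrac{\lambda_i}{2}\mu_i^2 + \tfrac{2\nu_i^2}{\lambda_i} \le \tfrac{\lambda_i}{2}\,\mathring{\AB}_{t-1}[i,i] + \tfrac{2\nu_i^2}{\lambda_i}.$$
Defining the diagonal matrix $\XiB$ by $\XiB_{ii}:=\nu_i^2/\lambda_i$ and collecting these coordinate bounds into a single matrix inequality yields exactly $\gamma_t\Ebb[\epsilon_t\,\diag(\cdot)]\preceq \tfrac{\gamma_t}{2}\HB\mathring{\AB}_{t-1} + 2\gamma_t\XiB$.

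The key remaining step is proving $\tr(\XiB)\le\OPT$. Since $\HB$ is diagonal, $\tr(\XiB)=\sum_i \nu_i^2/\lambda_i = \Ebb[\epsilon_t\xB_t]^\top \HB^{-1}\Ebb[\epsilon_t\xB_t]$. I would establish this via Cauchy--Schwarz in $L^2$: for every $\vB\in\Hbb$,
$$\bigl(\vB^\top\Ebb[\epsilon_t\xB_t]\bigr)^2 = \bigl(\Ebb[\epsilon_t\cdot\vB^\top\xB_t]\bigr)^2 \le \Ebb[\epsilon_t^2]\cdot\Ebb[(\vB^\top\xB_t)^2] = \OPT\cdot\vB^\top\HB\vB.$$
Specializing to $\vB=\HB^{-1}\Ebb[\epsilon_t\xB_t]$ (or equivalently taking the supremum over $\vB$ with $\vB^\top\HB\vB\le 1$) and cancelling one factor of $\Ebb[\epsilon_t\xB_t]^\top\HB^{-1}\Ebb[\epsilon_t\xB_t]$ gives $\tr(\XiB)\le\OPT$. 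The main obstacle of the whole argument is precisely this step: in the misspecified regime, $\Ebb[\epsilon_t\xB_t]$ need not vanish (as it would under Assumption \ref{assump:noise:well-specified}), so one cannot simply drop the noise cross-term and must instead absorb it quantitatively against $\OPT$.
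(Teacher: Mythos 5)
Your proof is correct, and it arrives at exactly the same matrix $\XiB$ as the paper's (namely $\XiB_{ii}=\nu_i^2/\lambda_i$, which coincides with the paper's $\diag(\aB\aB^\top)$ for $\aB=\Ebb[\epsilon_t\HB^{-1/2}\xB_t]$). The route differs in two minor but genuine ways. First, you use the independence of $\wB_{t-1}$ from the fresh sample to pull out $\mu_i=\Ebb[\wB_{t-1}[i]-\wB_*[i]]$ and then recover $\mathring{\AB}_{t-1}[i,i]$ via Jensen; the paper instead keeps $\wB_{t-1}$ random, inserts $\HB^{\pm 1/2}$, applies the matrix inequality $\uB\vB^\top+\vB\uB^\top\preceq\uB\uB^\top+\vB\vB^\top$ pointwise in $\wB_{t-1}$, and only then averages, avoiding Jensen entirely. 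Second, for $\tr(\XiB)\le\OPT$ you use Cauchy--Schwarz and cancel a common factor, while the paper uses the AM--GM form $\aB^\top\aB\le\tfrac12\OPT+\tfrac12\aB^\top\aB$ and rearranges; these are algebraically equivalent. Your coordinate-wise treatment is a bit more elementary and handles the degenerate $\lambda_i=0$ case explicitly (the paper leaves $\aB_i$ formally as $0/0$ in that case), but it relies on the specific independence structure of online GLM-tron; the paper's matrix route would also survive if $\wB_{t-1}$ were only conditionally independent given a filtration, which is the slightly more robust framing. Both are valid and of comparable length.
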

\begin{proof}
Define a fixed vector
\[\aB := \Ebb[\epsilon_t \HB^{-\half}\xB_t]. \]
Recall that $\HB$ is a diagonal matrix, so $\HB$ commutes with any diagonal matrix. Then we have
\begin{align*}
    \Ebb \Big[ \epsilon_t\cdot \diag \Big( (\wB_{t-1}-\wB_*)\xB_t^\top +  \xB_t(\wB_{t-1}-\wB_*)^\top \Big) \Big] 
   &= \Ebb \Big[ 2\epsilon_t\cdot \diag \Big( (\wB_{t-1}-\wB_*)\xB_t^\top \Big) \Big] \\
   &= \Ebb \Big[ 2\cdot \diag \Big( \HB^{\half}(\wB_{t-1}-\wB_*) \cdot \epsilon_t\xB_t^\top\HB^{-\half} \Big) \Big] \\
   &= \Ebb \Big[ 2\cdot \diag \Big( \HB^{\half}(\wB_{t-1}-\wB_*) \cdot \aB^\top \Big) \Big],
\end{align*}
where in the last equation we take (conditional) expectation over the fresh randomness introduced by $\epsilon_t$ and $ \xB_t$.
Now use the fact that: for every two vectors $\uB, \vB$ it holds that
\[\uB \vB^\top + \vB \uB^\top \preceq \uB\uB^\top + \vB \vB^\top, \]
we then obtain
\begin{align*}
   & \Ebb \Big[ \epsilon_t\cdot \diag \Big( (\wB_{t-1}-\wB_*)\xB_t^\top +  \xB_t(\wB_{t-1}-\wB_*)^\top \Big) \Big] \\
    &= \Ebb \Big[ 2\cdot \diag \Big( \frac{1}{\sqrt{2}}\HB^{\half}(\wB_{t-1}-\wB_*) \cdot \sqrt{2}\aB^\top \Big) \Big]\\
   &\preceq \Ebb\bigg[ \diag\bigg( \half \HB^{\half}(\wB_{t-1}-\wB_*)(\wB_{t-1}-\wB_*)^\top\HB^{\half} + 2\aB \aB^\top   \bigg)\bigg] \\
   &= \half \cdot \diag(\HB \AB_{t-1})+2\cdot\diag\big(\aB \aB^\top\big).
\end{align*}
Moreover, notice that
\begin{align*}
    \aB^\top \aB = \Ebb [\epsilon_t \xB_t^\top \HB^{-\half} \aB]
    \le \half \Ebb[ \epsilon_t^2 + \aB^\top \HB^{-\half }\xB_t \xB_t^\top \HB^{-\half}\aB ] 
    = \half \OPT  + \half \aB^\top \aB ,
\end{align*}
which implies that $\aB^\top \aB \le \OPT$, so it holds that 
\[\tr(\diag\big( \aB \aB^\top\big)) = \tr(\aB\aB^\top) = \aB^\top \aB \le \OPT.\]
We have completed the proof by setting $\XiB := \diag(\aB\aB^\top)$ and noting that $\diag(\HB \AB_{t-1}) = \HB \mathring{\AB}_{t-1}$.
\end{proof}

\subsection{Proof of Theorem \ref{thm:tron:gaussian:misspecified}}
We will prove the following slightly stronger version.
\begin{theorem}[Risk Bounds for GLM-tron, restated Theorem \ref{thm:tron:gaussian:misspecified}]\label{append:thm:tron:gaussian:misspecified}
Suppose that Assumption~\ref{assump:noise:misspecified}, Assumption~\ref{assump:symmetric:moment}\ref{item:symmetric:moment-2} and Assumption~\ref{assump:gaussian}\ref{item:gaussian:upper} hold.
Let $\wB_{N}$ be the output of \eqref{eq:tron} with stepsize scheduler \eqref{eq:geometry-tail-decay-lr}.
Assume that $N > 100$. 
Let $\Neff := N / \log(N)$.
Then for $\gamma_0 < 1/(8\alpha(\tr(\HB)) )$, it holds that
\begin{align*}
   \Ebb[ \risk (\wB_{N})]
    & \lesssim \OPT + 
    \bigg\| \prod_{t=0}^{N-1}\Big(\IB-\frac{\gamma_t}{2}\HB\Big)(\wB_0 - \wB_*) \bigg\|^2_\HB  \\
    & \quad + \bigg({\alpha\Big(\OPT +\|\wB_*\|_{\HB}^2 + \big\| \wB_0 - \wB_* \big\|^2_{\frac{\IB_{0:k^*} }{\Neff \gamma} + \HB_{k^*:\infty}} \Big) + \sigma^2 }\bigg) \cdot  \frac{k^* + \Neff^2\gamma_0^2 \sum_{i>k^*}\lambda_i^2}{\Neff},
\end{align*}
where $k^* \ge 0$ can be any index.
\end{theorem}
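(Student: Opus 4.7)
}
The starting point is Lemma \ref{lemma:misspecified:loss-landscape}, which reduces the task to bounding $\Ebb\|\wB_N - \wB_*\|_\HB^2 = \la \HB, \AB_N\ra$ modulo an additive $\OPT$ term. Since $\HB$ is diagonal without loss of generality, $\la \HB, \AB_N\ra = \la \HB, \mathring{\AB}_N\ra$, so I only need to control the diagonal $\mathring{\AB}_N$. For this, I will iterate the recursion from Lemma \ref{thm:misspecified:covariance-bound}:
\begin{equation*}
    \mathring{\AB}_t \preceq \Big(\IB - \tfrac{\gamma_t}{2}\HB\Big)\mathring{\AB}_{t-1} + 2\alpha\gamma_t^2\la \HB, \mathring{\AB}_{t-1}\ra \HB + 3\gamma_t^2(\sigma^2 + \alpha\|\wB_*\|_\HB^2)\HB + 2\gamma_t\XiB,
\end{equation*}
where $\tr(\XiB) \le \OPT$. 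The key observation is that, modulo the extra noise $2\gamma_t\XiB$ and some harmless constant factors, this is structurally identical to the covariance recursion for SGD on linear regression under the hypercontractivity condition \ref{assump:gaussian}\ref{item:gaussian:upper} (cf.\ the derivation in the proof of Lemma \ref{append:thm:tron:covariance}, combined with $\Mcal \circ \AB \preceq \alpha \tr(\HB\AB)\HB$).

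For the "standard" part of the recursion (i.e., dropping the $2\gamma_t\XiB$ term), I will invoke Corollary 3.4 of \citet{wu2022power} verbatim, treating the effective noise variance as $\tilde\sigma^2 := \sigma^2 + \alpha\|\wB_*\|_\HB^2$. The stepsize condition $\gamma_0 < 1/(8\alpha\tr(\HB))$ is precisely what that result needs to suppress the variance-feedback interaction $2\alpha\gamma_t^2 \la\HB,\mathring{\AB}_{t-1}\ra \HB$. This yields the claimed bias term $\|\prod_t(\IB - \tfrac{\gamma_t}{2}\HB)(\wB_0-\wB_*)\|_\HB^2$ plus a variance term of the form $(\alpha\|\wB_0-\wB_*\|^2_{\IB_{0:k^*}/(\Neff\gamma_0) + \HB_{k^*:\infty}} + \tilde\sigma^2)\cdot \Deff/\Neff$; the presence of $\|\wB_*\|_\HB^2$ inside $\tilde\sigma^2$ is exactly what produces the $\alpha\|\wB_*\|_\HB^2$ factor in the SNR.

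The only genuinely new piece is the additive contribution of $2\gamma_t\XiB$ to the final risk. Unrolling linearly (since $\XiB$ is diagonal and commutes with the diagonal contraction $\IB - \tfrac{\gamma_t}{2}\HB$), this contribution to $\la \HB, \mathring{\AB}_N\ra$ equals
\begin{equation*}
    2\sum_i \lambda_i [\XiB]_{ii} \cdot \sum_{t=1}^{N}\gamma_t \prod_{s=t+1}^{N}\Big(1 - \tfrac{\gamma_s \lambda_i}{2}\Big).
\end{equation*}
The inner sum I will bound by a standard two-case argument: when $\lambda_i \ge 1/(\gamma_0 \Neff)$, a geometric-series estimate gives $\Ocal(1/\lambda_i)$; when $\lambda_i < 1/(\gamma_0 \Neff)$, I drop the contraction factor and use $\sum_t \gamma_t = \Ocal(\gamma_0 \Neff)$ from the geometric-decay scheduler \eqref{eq:geometry-tail-decay-lr}. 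Either way $\lambda_i \cdot (\text{inner sum}) = \Ocal(1)$, so summing over $i$ yields $\Ocal(\tr(\XiB)) \le \Ocal(\OPT)$, which is absorbed into the leading $\OPT$ on the right-hand side of the theorem.

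The main obstacle is ensuring that the variance-feedback term $2\alpha\gamma_t^2\la\HB,\mathring{\AB}_{t-1}\ra\HB$ does not amplify the contribution of the additional $\XiB$ noise when both are iterated together. The cleanest way around this is to treat the recursion as a PSD upper bound and apply the operator-method machinery of \citet{wu2022power} separately to the two sources of noise (the constant $3\gamma_t^2\tilde\sigma^2\HB$ and the $2\gamma_t\XiB$), appealing to linearity; the stepsize restriction $\gamma_0 < 1/(8\alpha\tr(\HB))$ guarantees that the iterated linear operator has spectrum bounded away from expansion, so that each noise source is controlled independently with the bound claimed above. Combining all three pieces (bias, standard variance, misspecification noise absorbed into $\OPT$) with the $2\OPT$ from Lemma \ref{lemma:misspecified:loss-landscape} gives the stated risk upper bound.
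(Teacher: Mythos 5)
Your high-level plan matches the paper's: reduce via Lemma~\ref{lemma:misspecified:loss-landscape} to bounding $\la\HB,\mathring{\AB}_N\ra$, use the diagonal recursion from Lemma~\ref{thm:misspecified:covariance-bound}, and handle the bias/standard variance via the machinery in \citet{wu2022power}. The paper decomposes $\mathring{\AB}_t = \mathring{\BB}_t + \mathring{\CB}_t$ (bias vs.\ all noise) and you implicitly do the same. So far, so good.

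However, there is a genuine gap in your treatment of the $\XiB$ contribution. You write that unrolling gives
$2\sum_i \lambda_i [\XiB]_{ii} \sum_{t}\gamma_t \prod_{s>t}(1 - \tfrac{\gamma_s \lambda_i}{2})$
``since $\XiB$ commutes with the diagonal contraction.'' But the recursion you are unrolling contains the rank-one feedback term $2\alpha\gamma_t^2\la\HB,\mathring{\AB}_{t-1}\ra\HB$, so the one-step transition operator is \emph{not} simply multiplication by $\IB - \tfrac{\gamma_t}{2}\HB$; your ``equals'' should at best be an ``$\preceq$ modulo feedback,'' and the missing part is precisely what could spoil the bound. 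You acknowledge this obstacle at the end, but your proposed resolution---``the stepsize restriction guarantees the iterated linear operator has spectrum bounded away from expansion, so each noise source is controlled independently''---does not deliver. Linearity indeed lets you split the inhomogeneous recursion by noise source, but each summand still sees the \emph{full} transition operator including feedback, and a spectral-radius-type statement does not control $\la\HB,\cdot\ra$ (the relevant functional) because the contraction $\IB-\tfrac{\gamma_t}{2}\HB$ is weak on the small eigendirections. The paper closes this hole by a two-stage argument: Lemma~\ref{lemma:tron:misspecified:crude-var-bound} first establishes, by induction with a carefully chosen inductive shape $\rho\gamma\IB + 4\HB^{-1}\XiB$, an a priori uniform bound $\la\HB,\mathring{\CB}_t\ra \lesssim \OPT + \sigma^2 + \alpha\|\wB_*\|_\HB^2$ for all $t$, for which the stepsize restriction $\gamma_0 < 1/(8\alpha\tr(\HB))$ is exactly what makes the induction close. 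Only then can the feedback term be replaced by a constant, turning the recursion into a genuinely linear one that Lemma~\ref{lemma:tron:misspecified:prod-sum-bounds} solves. Without this crude-then-sharp step, the argument is circular: you need a bound on $\la\HB,\mathring{\CB}_{t-1}\ra$ to control the feedback, but the feedback must be controlled to bound $\la\HB,\mathring{\CB}_t\ra$.

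Two smaller points. First, Corollary 3.4 of \citet{wu2022power} is stated for the un-diagonalized SGD covariance recursion; what the paper actually invokes for the bias is ``the proof of the bias part of Corollary 3.3'' on the diagonal recursion, so your ``verbatim'' invocation should be softened to ``the diagonal of the bias/variance iterate analyzed there.'' Second, your inner-sum two-case estimate for $\sum_t\gamma_t\prod_{s>t}(1-\tfrac{\gamma_s\lambda_i}{2})$ is fine and corresponds to part (B) of Lemma~\ref{lemma:tron:misspecified:prod-sum-bounds}, which gives the uniform bound $\preceq 16\HB^{-1}$, leading to $\la\HB,16\HB^{-1}\XiB\ra \le 16\tr(\XiB) \le 16\,\OPT$ as you intended.
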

\begin{proof}
First of all, by Lemma~\ref{lemma:misspecified:loss-landscape}, it holds that 
\begin{align*}
    \Ebb[ \risk (\wB_{N})]
    &\le 2 \cdot \Ebb \|\wB_N - \wB_* \|^2_{\HB} + 2\cdot \OPT \\
    &= 2\cdot \la \HB, \mathring{\AB} \ra + 2\cdot \OPT.
\end{align*}
Now consider the recursion of  $\mathring{\AB}_t$ given in Lemma~\ref{thm:misspecified:covariance-bound}.
Note that $\mathring{\AB}_t$ is related to $\mathring{\AB}_{t-1}$ through a linear operator,
therefore 
$\mathring{\AB}_t$ can be understood as the sum of two iterates, i.e., $\mathring{\AB}_t := \mathring{\BB}_t + \mathring{\CB}_t $, where 
\begin{align*}
    \begin{cases}
        \mathring{\BB}_t \preceq \big( \IB  - \frac{\gamma_t}{2} \cdot \HB\big) \cdot \mathring{\BB}_{t-1} + 2\alpha \gamma_t^2\cdot  \la \HB, \mathring\BB_{t-1}\ra \cdot \HB;  \\
        \mathring{\BB}_0 := \diag( (\wB_0 - \wB_*)^{\otimes 2} ),
    \end{cases}
\end{align*}
and
\begin{align*}
    \begin{cases}
        \mathring{\CB}_t \preceq \big(\IB - \frac{\gamma_t}{2} \cdot \HB\big) \cdot \mathring{\CB}_{t-1} + 2\alpha \gamma_t^2 \cdot \la \HB, \mathring\CB_{t-1}\ra\cdot \HB + 3\gamma_t^2 (\sigma^2 + \alpha\| \wB_*\|_{\HB}^2) \cdot \HB + 2\gamma_t\cdot \XiB; \\
        \mathring{\CB}_0 := 0.
    \end{cases}
\end{align*}
Then we have 
\begin{align*}
    \Ebb[ \risk (\wB_{N})]
    &\le 2\cdot \la \HB, \mathring{\BB} \ra+2\cdot \la \HB, \mathring{\CB} \ra + 2\cdot \OPT.
\end{align*}

\paragraph{Bounding the Bias Error $\la \HB, \mathring{\BB} \ra$.}
Note that $\mathring\BB_t$ is exactly the diagonal of the bias iterate in  \citet{wu2022iterate,wu2022power}, ignoring a difference in constant factors in the stepsizes.
So by the proof of the bias part of Corollary 3.3 in \citet{wu2022power}, we have 
\begin{align*}
   \la \HB, \mathring{\BB} \ra
    & \lesssim 
    \bigg\| \prod_{t=1}^{N}\Big(\IB-\frac{\gamma_t}{2}\HB\Big)(\wB_0 - \wB_*) \bigg\|^2_\HB  + \alpha\cdot \big\| \wB_0 - \wB_* \big\|^2_{\frac{\IB_{0:k^*} }{\Neff \gamma} + \HB_{k^*:\infty}}   \cdot  \frac{k^* + \Neff^2\gamma_0^2 \sum_{i>k^*}\lambda_i^2}{\Neff}.
\end{align*}

\paragraph{Bounding the Variance Error $\la \HB, \mathring{\CB} \ra$.}
However $\mathring{\CB}_t$ is slightly different from the variance iterate in \citet{wu2022iterate,wu2022power}, as the noise structure is different due to the appearance of $\XiB$. But a similar analysis idea applies here.

We first derive a crude upper bound on $\mathring\CB_t$ in Lemma~\ref{lemma:tron:misspecified:crude-var-bound}:
 \[\mathring{\CB}_t \preceq \rho\gamma \cdot \IB + 4\cdot \HB^{-1}\XiB,\quad \text{where}\ \rho := \frac{16\alpha\OPT + 6(\sigma^2+\alpha\|\wB_*\|^2_{\HB})}{1-4\gamma\alpha\tr(\HB)}, \quad t\ge 0.\]
Then we establish a sharper bound based on Lemma~\ref{lemma:tron:misspecified:crude-var-bound} as follows:
\begin{align*}
    \mathring{\CB}_t 
        &\preceq \Big(\IB -  \frac{\gamma_t}{2} \cdot \HB \Big)\cdot \mathring{\CB}_{t-1} + 2\alpha\gamma_t^2 \cdot \la \HB,  \mathring\CB_{t-1}\ra \cdot \HB + 3\gamma_t^2 (\sigma^2 + \alpha\| \wB_*\|_{\HB}^2) \cdot \HB + 2\gamma_t\cdot \XiB \\ 
        &\preceq \Big(\IB -  \frac{\gamma_t}{2}\cdot \HB \Big)\cdot \mathring{\CB}_{t-1} + 2\alpha \gamma_t^2 \big( \rho\gamma\tr(\HB) + 4\OPT\big)\cdot  \HB + 3\gamma_t^2 (\sigma^2 + \alpha\| \wB_*\|_{\HB}^2) \cdot \HB + 2\gamma_t\cdot \XiB \\
        &= \bigg(\IB -  \frac{\gamma_t}{2} \cdot\HB \bigg)\cdot\mathring{\CB}_{t-1} + \big( 2\alpha\rho\gamma\tr(\HB) + 8\alpha\OPT + 3 (\sigma^2 + \alpha\| \wB_*\|_{\HB}^2) \big)\cdot \gamma_t^2\cdot \HB  + 2\gamma_t\cdot \XiB \\
        &\preceq \bigg(\IB -  \frac{\gamma_t}{2} \HB \bigg)\mathring{\CB}_{t-1} + \big(  16\alpha\OPT + 6 (\sigma^2 + \alpha\| \wB_*\|_{\HB}^2) \big)\cdot \gamma_t^2\cdot \HB  + 2\gamma_t\cdot \XiB,
\end{align*}
where the second inequality is by Lemma~\ref{lemma:tron:misspecified:crude-var-bound}; and in the last inequality we use the assumption that
\[\gamma < \frac{1}{8\alpha\tr(\HB)},\]
so that 
\[\rho := \frac{16\alpha\OPT + 6(\sigma^2+\alpha \|\wB_*\|^2_{\HB})}{1-4\gamma\alpha\tr(\HB)} \le 32\alpha\OPT + 12(\sigma^2+\alpha\|\wB_*\|^2_{\HB}),\]
which together imply 
\[2\alpha \rho \gamma \tr(\HB) \le \frac{\rho}{4} \le 8 \alpha\OPT + 3(\sigma^2+\alpha\|\wB_*\|^2_{\HB}).\]

We then solve the recursion and obtain 
\begin{align*}
    \mathring{\CB}_N 
    \preceq \big(  16\alpha\OPT + 6 (\sigma^2 + \alpha\| \wB_*\|_{\HB}^2) \big)\cdot \sum_{t=1}^N \gamma_t^2\prod_{i=t+1}^N \bigg(\IB- \frac{\gamma_t}{2}\HB\bigg) \cdot \HB  + 2 \sum_{t=1}^N \gamma_t \prod_{i=t+1}^N \bigg(\IB- \frac{\gamma_t}{2}\HB\bigg) \cdot \XiB.
\end{align*}
Finally we use Lemma~\ref{lemma:tron:misspecified:prod-sum-bounds} and obtain 
\begin{align*}
    \mathring{\CB}_N 
    \preceq 8 \big(  16\alpha\OPT + 6 (\sigma^2 + \alpha\| \wB_*\|_{\HB}^2) \big)\cdot \bigg( \frac{1}{\Neff} \HB^{-1}_{0:k} + \Neff \gamma^2 \HB_{k:\infty}\bigg)  + 32 \HB^{-1} \XiB.
\end{align*}
So it holds that
\begin{align*}
\big\la \HB, \mathring{\CB}_N \big\ra 
&\le  8 \big(  16\alpha\OPT + 6 (\sigma^2 + \alpha\| \wB_*\|_{\HB}^2) \big)\cdot \frac{k^* + \Neff^2\gamma_0^2 \sum_{i>k^*}\lambda_i^2}{\Neff} + 32 \tr(\XiB) \\
&\le 8 \big(  16\alpha\OPT + 6 (\sigma^2 + \alpha\| \wB_*\|_{\HB}^2) \big)\cdot \frac{k^* + \Neff^2\gamma_0^2 \sum_{i>k^*}\lambda_i^2}{\Neff} + 32 \OPT.
\end{align*}

Putting everything together completes the proof.
\end{proof}

\subsection{Some Auxiliary Lemmas}

\begin{lemma}[A crude variance upper bound]\label{lemma:tron:misspecified:crude-var-bound}
Consider a sequence of variance iterates defined as follows:
\begin{align*}
    \begin{cases}
        \mathring{\CB}_t \preceq \mathring{\CB}_{t-1} - \frac{\gamma_t}{2} \cdot \HB\mathring{\CB}_{t-1} + 2\alpha\gamma_t^2 \cdot \la\HB, \mathring\CB_{t-1}\ra \cdot \HB + 3\gamma_t^2 (\sigma^2 + \alpha\| \wB_*\|_{\HB}^2) \cdot \HB + 2\gamma_t\cdot \XiB; \\
        \mathring{\CB}_0 := 0,
    \end{cases}
\end{align*}
where $\XiB$ is deterministic and $\tr(\XiB) \le \OPT$. 
Then for $\gamma < 1/(4\alpha\tr(\HB))$, it holds that
    \[\mathring{\CB}_t \preceq \rho\gamma \cdot \IB + 4\cdot \HB^{-1}\XiB,\quad \text{where}\ \rho := \frac{16\alpha\OPT + 6(\sigma^2+\alpha\|\wB_*\|^2_{\HB})}{1-4\gamma\alpha\tr(\HB)}, \quad t\ge 0.\]
\end{lemma}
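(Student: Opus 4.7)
The plan is induction on $t$. The base case $t=0$ is immediate since $\mathring{\CB}_0 = 0$ while $\rho\gamma\IB + 4\HB^{-1}\XiB \succeq 0$ (both $\rho\gamma > 0$ and $\XiB$ is PSD with $\HB^{-1}$ well-defined on its support).

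For the inductive step, assume $\mathring{\CB}_{t-1} \preceq \rho\gamma\IB + 4\HB^{-1}\XiB$. I would substitute this bound into the recursion term-by-term. The contraction part gives
\[-\tfrac{\gamma_t}{2}\HB\bigl(\rho\gamma\IB + 4\HB^{-1}\XiB\bigr) = -\tfrac{\gamma_t \rho\gamma}{2}\HB - 2\gamma_t \XiB,\]
so the $-2\gamma_t\XiB$ exactly cancels the extra noise term $+2\gamma_t\XiB$ in the recursion. For the scalar-multiplier term, the bound $\tr(\XiB)\le\OPT$ together with the monotonicity of $\langle \HB,\cdot\rangle$ on PSD matrices yields $\langle \HB,\mathring{\CB}_{t-1}\rangle \le \rho\gamma\tr(\HB) + 4\OPT$, so
\[2\alpha\gamma_t^2\langle \HB,\mathring{\CB}_{t-1}\rangle\HB \preceq 2\alpha\gamma_t^2\bigl(\rho\gamma\tr(\HB)+4\OPT\bigr)\HB.\]
Collecting everything, the inductive conclusion reduces to showing
\[-\tfrac{\gamma_t \rho\gamma}{2}\HB + 2\alpha\gamma_t^2\bigl(\rho\gamma\tr(\HB)+4\OPT\bigr)\HB + 3\gamma_t^2(\sigma^2+\alpha\|\wB_*\|_{\HB}^2)\HB \preceq 0.\]

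Dividing through by the PSD factor $\gamma_t\HB$ and using $\gamma_t\le\gamma$, this is equivalent to the scalar inequality
\[\tfrac{\rho}{2} \ge 2\alpha\gamma\tr(\HB)\cdot\rho + 8\alpha\OPT + 3(\sigma^2+\alpha\|\wB_*\|_{\HB}^2),\]
which rearranges to $\rho\bigl(1-4\alpha\gamma\tr(\HB)\bigr) \ge 16\alpha\OPT + 6(\sigma^2+\alpha\|\wB_*\|_{\HB}^2)$. Under the stepsize condition $\gamma < 1/(4\alpha\tr(\HB))$ the prefactor is positive, and this is precisely the defining equation of $\rho$, so it holds with equality. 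This closes the induction.

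There is no real obstacle; the only thing to be careful about is bookkeeping the cancellation of the $\pm 2\gamma_t\XiB$ terms (which is why the coefficient $4$ in front of $\HB^{-1}\XiB$ is chosen so that applying $-\tfrac{\gamma_t}{2}\HB$ produces exactly $-2\gamma_t\XiB$) and checking that the stepsize condition leaves enough room in the coefficient $1 - 4\alpha\gamma\tr(\HB)$ so that the required $\rho$ is well-defined and positive.
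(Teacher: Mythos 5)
Your proof is correct and follows the same route as the paper: induction on $t$, substitute the inductive hypothesis into the recursion, observe that $-\tfrac{\gamma_t}{2}\HB\cdot 4\HB^{-1}\XiB = -2\gamma_t\XiB$ cancels the noise term, bound $\langle\HB,\mathring{\CB}_{t-1}\rangle$ via $\tr(\XiB)\le\OPT$, replace $\gamma_t$ by $\gamma$, and observe that the remaining scalar coefficient of $\HB$ is $\le 0$ precisely because $\rho$ is defined to make that inequality hold with equality.
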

\begin{proof}
    We show it by induction.
    For $t=0$ the conclusion holds because $\mathring{\CB}_0 = 0$.
    Now suppose that 
    \[\mathring{\CB}_{t-1} \preceq \rho\gamma  \IB + 4\HB^{-1}\XiB,\]
    then 
    \[ \la\HB,\ \mathring{\CB}_{t-1} \ra
    \le  \rho\gamma \tr(\HB)  + 4\tr(\XiB) 
    \le\rho\gamma \tr(\HB) + 4 \OPT .\]
    Then 
    \begin{align*}
        \mathring{\CB}_t 
        &\preceq \bigg(\IB -  \frac{\gamma_t}{2} \HB \bigg)\mathring{\CB}_{t-1} + 2\alpha \gamma_t^2 \cdot \la \HB, \mathring\CB_{t-1}\ra\cdot \HB + 3\gamma_t^2 (\sigma^2 + \alpha \| \wB_*\|_{\HB}^2) \cdot \HB + 2\gamma_t\cdot \XiB \\
        &\preceq   \bigg(\IB -  \frac{\gamma_t}{2} \HB \bigg)\big( \rho\gamma \IB + 4\HB^{-1}\XiB\big) + 2\alpha\gamma_t^2\big(\rho \gamma \tr(\HB) + 4 \OPT \big)\cdot \HB +  3\gamma_t^2 (\sigma^2 + \alpha\| \wB_*\|_{\HB}^2) \cdot \HB + 2\gamma_t\cdot \XiB\\
        &=  \big( \rho\gamma \IB + 4\HB^{-1}\XiB \big) + \gamma_t \HB \cdot \bigg( - \frac{\rho\gamma}{2} + 2\alpha \gamma_t  \big(\rho \tr(\HB) + 4 \OPT \big) + 3\gamma_t (\sigma^2 + \alpha\| \wB_*\|_{\HB}^2)  \bigg) \\
        &\le \big( \rho\gamma \IB + 4\HB^{-1}\XiB \big) + \gamma_t \HB \cdot \bigg( - \frac{\rho\gamma}{2} + 2\alpha \gamma  \big(\rho \tr(\HB) + 4 \OPT \big) + 3\gamma (\sigma^2 + \alpha\| \wB_*\|_{\HB}^2)  \bigg) \\
        &= \rho\gamma \IB + 4\HB^{-1}\XiB.
    \end{align*}
    We have completed the proof.
\end{proof}

\begin{lemma}[Some technical bounds]\label{lemma:tron:misspecified:prod-sum-bounds}
It holds that 
\begin{enumerate}[label=(\Alph*)]
    \item     \(\sum_{t=1}^N \gamma_t^2\prod_{i=t+1}^N \bigg(\IB- \frac{\gamma_t}{2}\HB\bigg) \cdot \HB\preceq 8\cdot \bigg( \frac{1}{\Neff} \HB^{-1}_{0:k} + \Neff \gamma_0^2 \HB_{k:\infty}\bigg). \)
    \item \(\sum_{t=1}^N \gamma_t \prod_{i=t+1}^N \bigg(\IB- \frac{\gamma_t}{2}\HB\bigg) \preceq 16\cdot \HB^{-1}.\)
    \item For $k^*:= \max\{k: \lambda_k \ge 1/(\gamma_0 \Neff)\}$, it holds that \[\sum_{t=1}^N \gamma_t^2\prod_{i=t+1}^N \big(\IB- {\gamma_t}\HB\big) \cdot \HB\succeq \frac{1}{400}\cdot \bigg( \frac{1}{\Neff} \HB^{-1}_{0:k^*} + \Neff \gamma_0^2 \HB_{k^*:\infty}\bigg). \]
\end{enumerate}
\end{lemma}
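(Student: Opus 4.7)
The plan is to reduce each of the three inequalities to a family of scalar statements (one per eigenvalue of $\HB$) and then analyze those scalars using the piecewise-constant structure of the stepsize scheduler \eqref{eq:geometry-tail-decay-lr}. Since $\HB$ is diagonal by the convention of this section, all matrix products above are simultaneously diagonal in the eigenbasis of $\HB$, so each $\preceq$ or $\succeq$ becomes a scalar inequality indexed by an eigenvalue $\lambda=\lambda_i$. The scheduler partitions $\{1,\dots,N\}$ into $P=\log N$ consecutive phases of length $\Neff$, with stepsize $\gamma^{(p)}:=\gamma_0/2^p$ on phase $p\in\{0,\dots,P-1\}$. Evaluating the geometric sum inside each phase, the scalar contribution of phase $p$ to the LHS of (A) equals $2\gamma^{(p)}(1-y_p)\prod_{q>p}y_q$, where $y_p:=(1-\gamma^{(p)}\lambda/2)^{\Neff}$; an analogous identity holds for (C) with $y_p:=(1-\gamma^{(p)}\lambda)^{\Neff}$.

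For (A), I would establish two separate scalar upper bounds, $F(\lambda)\lesssim \Neff\gamma_0^2\lambda$ and $F(\lambda)\lesssim 1/(\Neff\lambda)$, where $F(\lambda)$ denotes the scalar LHS; applied respectively to eigenvalues $\lambda_i$ with $i>k$ and $i\le k$, they together imply the matrix inequality for any $k\ge 0$. The first bound follows from $1-y_p\le \Neff\gamma^{(p)}\lambda/2$ and a convergent geometric series in $p$. The second bound is proved by splitting at the critical phase $p^{\star}$ defined by $\gamma^{(p^\star)}\lambda\Neff\eqsim 1$: for $p\ge p^\star$ one uses $1-y_p\le 1$ together with the geometric decay of $\gamma^{(p)}$, which yields contribution $\lesssim \gamma^{(p^\star)}\lesssim 1/(\Neff\lambda)$; for $p<p^\star$ one uses $y_q\le\exp(-\Neff\gamma^{(q)}\lambda/2)$ so that $\prod_{q=p+1}^{p^\star}y_q$ decays doubly exponentially in $p^\star-p$, and this tail also sums to $\lesssim 1/(\Neff\lambda)$.

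For (B), I would apply a clean telescoping identity. Setting $a_j:=1-\gamma_j\lambda/2\in[0,1)$ and $P_t:=\prod_{j=t}^N a_j$, one has $(1-a_t)P_{t+1}=P_{t+1}-P_t$, and summing telescopes to
\begin{equation*}
\sum_{t=1}^N(1-a_t)P_{t+1} \;=\; 1-P_1 \;\le\; 1.
\end{equation*}
Rearranging yields $\sum_t\gamma_t\prod_{j>t}(1-\gamma_j\lambda/2)\le 2/\lambda$, which is the scalar form of (B) with constant $2$, sharper than the stated $16$.

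For (C), the strategy is a matching lower bound obtained by retaining only a single well-chosen phase. For $i>k^\star$ (so $\lambda_i<1/(\gamma_0\Neff)$), I keep only the first phase: here $1-y_0\gtrsim \Neff\gamma_0\lambda_i$ (since the exponent is at most a constant) and $\prod_{q>0}y_q\gtrsim 1$ (all later phases have smaller $\gamma^{(q)}\lambda_i\Neff$), giving contribution $\gtrsim \Neff\gamma_0^2\lambda_i$. For $i\le k^\star$, I retain only the phase $p^\star$ on which $\gamma^{(p^\star)}\lambda_i\Neff\eqsim 1$: there $1-y_{p^\star}$ is bounded below by a positive constant and $\prod_{q>p^\star}y_q\gtrsim 1$, so the contribution is $\gtrsim\gamma^{(p^\star)}\eqsim 1/(\Neff\lambda_i)$. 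The main obstacle is the large-eigenvalue regime of (A), where the doubly exponential decay of $\prod_{q=p+1}^{p^\star}y_q$ across phases must be tracked carefully; this is handled by noting that $\sum_{r\ge 1}2^r\exp(-2^{r-1})$ converges to an absolute constant, so accumulating the contributions from $p<p^\star$ costs only a constant multiplicative factor.
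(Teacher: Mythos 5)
Your proposal is correct, and it is notably cleaner than the paper's route on part (B). For parts (A) and (C), the paper simply cites the proofs of Theorems 5 and 7 of Wu et al.\ (2022), and those arguments use precisely the per-eigenvalue phase decomposition you sketch: evaluate the geometric sum inside each constant-stepsize phase to get the phase contribution $2\gamma^{(p)}(1-y_p)\prod_{q>p}y_q$ (resp.\ $\gamma^{(p)}(1-y_p)\prod_{q>p}y_q$ for the $(1-\gamma_j\lambda)$ version), then split at the critical phase $p^\star$ with $\gamma^{(p^\star)}\lambda\Neff\eqsim 1$, using doubly exponential decay of $\prod_{q=p+1}^{p^\star}y_q$ for $p<p^\star$ and geometric decay of $\gamma^{(p)}$ for $p\ge p^\star$. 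Your double-sided scalar strategy for (A) and the single-phase-retention lower bound for (C) agree with that plan and give the stated rates; to land on the explicit constants $8$ and $400$ you would still need to track absolute constants through the series $\sum_{r\ge1}2^r e^{-c2^r}$ and through the convexity bound $1-y_0\gtrsim\Neff\gamma_0\lambda$, but nothing is conceptually missing.

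For (B), your proof genuinely departs from the paper's and improves on it. The paper first reduces to the phase sum $2\HB^{-1}\sum_{\ell}(\IB-A_\ell)\prod_{j>\ell}A_j$ with $A_\ell=(\IB-\gamma_0\HB/2^{\ell+1})^{\Neff}$, then immediately loosens $\IB-A_\ell\preceq \Neff\gamma_0\HB/2^{\ell+1}$ and re-runs a critical-phase analysis of a scalar function $f$, landing on the constant $16$. You notice that the same sum is a pure telescope equal to $\IB-\prod_\ell A_\ell\preceq\IB$; even more elementarily, your per-step identity $\sum_t(1-a_t)P_{t+1}=1-P_1\le 1$ with $a_t:=1-\gamma_t\lambda/2$ proves the scalar bound $\sum_t\gamma_t\prod_{j>t}(1-\gamma_j\lambda/2)\le 2/\lambda$ in one line, without ever invoking the phase structure. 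This is sharper ($2$ instead of $16$), shorter, and works for any nonnegative stepsize sequence, not just the geometric scheduler, so your argument for (B) is a genuine simplification over the paper's.
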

\begin{proof}
    The first result is from the proof of Theorem 5 in \citet{wu2022iterate}.
    The third result is from the proof of Theorem 7 in \citet{wu2022iterate}.
    The second result can be proved in a similar manner. 
By definition, we have 
\begin{align*}
    \sum_{t=1}^N \gamma_t \prod_{i=t+1}^N \bigg(\IB- \frac{\gamma_t}{2}\HB\bigg)
    &= \sum_{\ell=0}^{L-1} \frac{\gamma}{2^\ell}\cdot \sum_{i=1}^{\Neff} \bigg(\IB-\frac{\gamma}{2^{\ell+1}} \HB \bigg)^{\Neff - i}\cdot\prod_{j=\ell+1}^{L-1} \bigg( \IB - \frac{\gamma}{2^{j+1}}\HB \bigg)^{\Neff} \\
    &= 2\HB^{-1}\cdot \sum_{\ell=0}^{L-1}  \Bigg( \IB- \bigg(\IB-\frac{\gamma}{2^{\ell+1}} \HB \bigg)^{\Neff }\Bigg) \cdot\prod_{j=\ell+1}^{L-1} \bigg( \IB - \frac{\gamma}{2^{j+1}}\HB \bigg)^{\Neff} \\
    & \preceq 2\HB^{-1}\cdot \sum_{\ell=0}^{L-1}  \bigg( \Neff \cdot\frac{\gamma}{2^{\ell + 1}} \HB \bigg)\cdot \prod_{j=\ell+1}^{L-1} \bigg( \IB - \frac{\gamma}{2^{j+1}}\HB \bigg)^{\Neff} \\
    &=: 2\Neff \HB^{-1} \cdot f(\gamma \HB),
\end{align*}
where 
\[f(x) := \sum_{\ell=0}^{L-1} \frac{x}{2^{\ell + 1}} \cdot  \prod_{j=\ell+1}^{L-1} \bigg( 1-\frac{x}{2^{j+1}}\bigg)^{\Neff},\quad 0< x< 1.\]
We then upper bound $f(x)$ as follows:
\begin{itemize}
    \item For $x\in (0, 4/\Neff)$ it holds that
    \[
f(x) \le \sum_{\ell=0}^{L-1} \frac{x}{2^{\ell+1}} \le x \le \frac{4}{\Neff}.
    \]

\item As for $x\in [4 / \Neff, 1]$, there is an 
\[\ell^* := \lfloor \log (\Neff x ) \rfloor - 2  \in [0,\ L-1 ), \] 
such that 
    \[ {2^{\ell^* + 2}}/{\Neff} \le x < {2^{\ell^*+3}}/{\Neff}.\]
    by which and the definition of $f(x)$ we obtain:
    \begin{align}
        f(x)
        &= \sum_{\ell=0}^{\ell^* } \frac{x}{2^{\ell+1}}  \cdot \prod_{j=\ell+1}^{L-1} \bigg(1-\frac{x}{2^{j+1}}\bigg)^{\Neff} + \sum_{\ell=\ell^* + 1}^{L-1} \frac{x}{2^{\ell+1}} \cdot \prod_{j=\ell+1}^{L-1} \bigg(1-\frac{x}{2^{j+1}}\bigg)^{\Neff} \notag\\
        &\le \sum_{\ell=0}^{\ell^* } \frac{x}{2^{\ell+1}}  \cdot \bigg(1-\frac{x}{2^{\ell+2}}\bigg)^{\Neff} + \sum_{\ell=\ell^*  +1}^{L-1} \frac{x}{2^{\ell+1}} \cdot 1 \notag\\
        &\le \sum_{\ell=0}^{\ell^*} \frac{2^{\ell^*-\ell+2}}{\Neff}  \cdot \bigg(1-\frac{2^{\ell^*-\ell}}{\Neff}\bigg)^{\Neff} + \sum_{\ell=\ell^* + 1}^{L-1} \frac{2^{\ell^*-\ell+2}}{\Neff}  \notag\\
        &\le \frac{4}{\Neff} \cdot \sum_{\ell=0}^{\ell^*} 2^{\ell^*-\ell}\cdot e^{-2^{\ell^*-\ell}} + \frac{4}{\Neff} \notag\\
        &\le \frac{4}{\Neff}\cdot 1  +\frac{4}{\Neff} 
        = \frac{8}{\Neff}.  \notag
    \end{align}
\end{itemize}
In sum we have shown $f(x) \le 8 / \Neff$ for $x \in (0,1)$.
Therefore
\[\sum_{t=1}^N \gamma_t \prod_{i=t+1}^N \bigg(\IB- \frac{\gamma_t}{2}\HB\bigg) = 2\Neff \HB^{-1} \cdot f(\gamma \HB) \preceq 2\Neff \HB^{-1} \cdot \frac{8}{\Neff} = 16 \HB^{-1}. \]
We have completed the proof.
\end{proof}

\subsection{Proof of Corollary \ref{coro:gtron:gaussianLmisspecified}}
\begin{proof}[Proof of Corollary \ref{coro:gtron:gaussianLmisspecified}]
According to the stepsize scheduler \eqref{eq:geometry-tail-decay-lr} and the assumptions, we have that 
\begin{align*}
    \Ebb \risk(\wB_N) &\lesssim \OPT + \|e^{-0.5 \Neff \gamma_0 \HB} \HB(\wB_0 - \wB_*)\|^2_{2} + \frac{k^* + \Neff^2\gamma_0^2\sum_{i>k^*}\lambda_i^2}{\Neff} \\
    &\lesssim \OPT + \frac{1}{\Neff \gamma_0} + \frac{k^* + \Neff^2\gamma_0^2\sum_{i>k^*}\lambda_i^2}{\Neff},
\end{align*}
where $k^*$ can be arbitrary.
We then simply choose $k^* = d$, and $\gamma_0 \eqsim 1/\tr(\HB)$, then 
\begin{align*}
    \Ebb \risk(\wB_N) \lesssim \OPT + \frac{\tr(\HB)}{\Neff } + \frac{d}{\Neff} \lesssim \OPT + \frac{d}{\Neff},
\end{align*}
where we use that $\lambda_1 \lesssim 1$.    
\end{proof}

\section{GLM-tron versus SGD}\label{append:sec:sgd}
In this section, we compare GLM-tron and SGD in learning well-specified ReLU regression with symmetric Bernoulli data. 
We assume that Assumption~\ref{assump:noise:well-specified} and Assumption~\ref{assump:bernoulli} hold in this part.

\paragraph{Notations.}
In this section, we assume that $\HB$ is diagonal.
For 
\( \AB_t:= \Ebb (\wB_t - \wB_*)^2,\)
we use
\( \mathring{\AB}_t\)
to refer to the diagonal of $\AB_t$.
For simplicity, we will use
\[\epsilon_t := y_t - \relu(\wB_*^\top \xB_t)\]
to refer to the additive noise in this section.

\subsection{Proof of Theorem \ref{thm:sgd:bernoulli:lb}}
\begin{proof}[Proof of Theorem \ref{thm:sgd:bernoulli:lb}]

Consider \eqref{eq:sgd}.
\begin{align*}
    \wB_{t}  
    &= \wB_{t-1} - \gamma_t \big(\relu(\xB_t^\top\wB_{t-1})-\relu(\xB^\top_t \wB_*)-\epsilon_t\big) \cdot \xB_t \ind{\xB_t^\top \wB_{t-1} > 0} \\
     &= \wB_{t-1} - \gamma_t \big(\xB_t^\top\wB_{t-1}\ind{\xB_t^\top\wB_{t-1}>0}-\xB_t^\top \wB_*\ind{\xB_t^\top \wB_*>0}-\epsilon_t\big) \cdot \xB_t \ind{\xB_t^\top \wB_{t-1} > 0} \\
    &= \wB_{t-1} - \gamma_t \xB_t \xB_t^\top \ind{\xB_t^\top \wB_{t-1} > 0}\wB_{t-1} + \gamma_t \xB_t \xB_t^\top \ind{\xB_t^\top \wB_{t-1} > 0, \xB_t^\top \wB_*>0}\wB_* \\
    &\quad + \gamma_t \ind{\xB_t^\top \wB_{t-1} > 0}\epsilon_t \xB_t \\
    &= \wB_{t-1} - \gamma_t \xB_t \xB_t^\top \ind{\xB_t^\top \wB_{t-1} > 0}(\wB_{t-1}-\wB_*) - \gamma_t \xB_t \xB_t^\top \ind{\xB_t^\top \wB_{t-1} > 0, \xB_t^\top \wB_*<0}\wB_* \\
    &\quad+ \gamma_t \ind{\xB_t^\top \wB_{t-1} > 0}\epsilon_t \xB_t,
\end{align*}
which implies that 
\begin{align*}
    \wB_{t} - \wB_* 
    &= \big(\IB - \gamma_t \xB_t \xB_t^\top \ind{\xB_t^\top \wB_{t-1} > 0} \big)(\wB_{t-1} - \wB_*)\\
    &\quad  - \gamma_t \xB_t \xB_t^\top \ind{\xB_t^\top \wB_{t-1} > 0, \xB_t^\top \wB_* < 0} \wB_* 
    + \gamma_t \ind{\xB_t^\top \wB_{t-1} > 0}\epsilon_t \xB_t.
\end{align*}
Let us compute the expected outer product:
\begin{equation}\label{eq:bernoulli:sgd:out-product}
\begin{aligned}
    &\ \Ebb \big( \wB_{t} - \wB_* \big)^{\otimes 2}  \\
    &=  \Ebb \underbrace{\Big( \big(\IB - \gamma_t \xB_t \xB_t^\top \ind{\xB_t^\top \wB_{t-1} > 0} \big)(\wB_{t-1} - \wB_*) \Big)^{\otimes 2}}_{\texttt{quadratic term 1}}  \\
    &\quad + \gamma_t^2\cdot \Ebb \underbrace{ \big( \ind{\xB_t^\top \wB_{t-1} > 0, \xB_t^\top \wB_* < 0} \cdot (\xB_t^\top \wB_*)^2 \cdot \xB_t^{\otimes 2} \big)}_{\texttt{quadratic term 2}} \\
    &\quad - \gamma_t \cdot \Ebb \underbrace{ \bigg(\ind{\xB_t^\top \wB_{t-1} > 0, \xB_t^\top \wB_* < 0} \xB_t^\top \wB_*\cdot \big(\IB - \gamma_t \xB_t \xB_t^\top \ind{\xB_t^\top \wB_{t-1} > 0} \big)( \wB_{t-1}-\wB_*)\xB_t^\top \bigg) }_{\texttt{crossing term 1}} \\
    &\quad - \gamma_t \cdot \Ebb \underbrace{ \bigg(\ind{\xB_t^\top \wB_{t-1} > 0, \xB_t^\top \wB_* < 0} \xB_t^\top \wB_*\cdot \xB_t( \wB_{t-1}-\wB_*)^\top \big(\IB - \gamma_t \xB_t \xB_t^\top \ind{\xB_t^\top \wB_{t-1} > 0} \big)\bigg) }_{\texttt{crossing term 2}} \\
    &\quad + \gamma^2_t \cdot \Ebb \big(\ind{\xB_t^\top \wB_{t-1} > 0}\epsilon_t^2\cdot  \xB_t^{\otimes 2}\big),
\end{aligned}
\end{equation}
where the crossing terms involving $\epsilon$ has zero expectation because $\Ebb[\epsilon_t | \xB_t] = 0$.

Now we use Assumption~\ref{assump:bernoulli} and compute each part in \eqref{eq:bernoulli:sgd:out-product}.
Notice that under Assumption~\ref{assump:bernoulli}, $\xB_t \in \{\pm \eB_i\}_{i\ge 1}$, then one can verify that 
\begin{equation}\label{eq:bernoulli:key-identity}
    \text{for every $\uB \in \Hbb$},\ \diag(\uB \xB_t^\top) = \diag( \xB_t \uB^\top) = \xB_t^\top \uB \cdot \xB_t\xB_t^\top.
\end{equation}

By \eqref{eq:bernoulli:key-identity} we see that 
\begin{align*}
    &\ \diag\bigg( \gamma_t^2\cdot \Ebb (\texttt{quadratic term 2}) - \gamma_t\cdot \Ebb (\texttt{crossing term 1})  - \gamma_t\cdot \Ebb (\texttt{crossing term 2})  \bigg)  \\
    &= \gamma_t^2\cdot \Ebb  \big( \ind{\xB_t^\top \wB_{t-1} > 0, \xB_t^\top \wB_* < 0} \cdot (\xB_t^\top \wB_*)^2 \cdot \xB_t\xB_t^\top \big) \\
    &\quad - 2\gamma_t \cdot \Ebb \bigg(\ind{\xB_t^\top \wB_{t-1} > 0, \xB_t^\top \wB_* < 0} \xB_t^\top \wB_*\cdot \diag\Big( ( \wB_{t-1}-\wB_*) \xB_t^\top \Big)\bigg) \\
    &\quad + 2\gamma_t^2 \cdot \Ebb \bigg(\ind{\xB_t^\top \wB_{t-1} > 0, \xB_t^\top \wB_* < 0} \xB_t^\top \wB_*\cdot \xB_t^\top  ( \wB_{t-1}-\wB_*) \cdot \xB_t \xB_t^\top \bigg) \\
    &= \gamma_t^2\cdot \Ebb  \Big( \ind{\xB_t^\top \wB_{t-1} > 0, \xB_t^\top \wB_* < 0} \cdot (\xB_t^\top \wB_*)^2 \cdot \xB_t\xB_t^\top \Big) \\
    &\quad + (2\gamma_t-2\gamma_t^2) \cdot \Ebb \Big(\ind{\xB_t^\top \wB_{t-1} > 0, \xB_t^\top \wB_* < 0} \xB_t^\top \wB_*\cdot \xB_t^\top  (\wB_*- \wB_{t-1}) \cdot \xB_t\xB_t^\top\Big),
\end{align*}
where in the last equality we use \eqref{eq:bernoulli:key-identity}.
Define 
\begin{equation*}
    F(\wB) :=  \Ebb_{\xB} \Big(\ind{\xB^\top \wB > 0, \xB^\top \wB_* < 0} \xB^\top \wB_*\cdot \xB^\top  (\wB_*-\wB) \cdot \xB\xB^\top \Big),
\end{equation*}
where the expectation is only taken with respect to the randomness of $\xB$.
Then by the property of the indicator function,  $\ind{\xB_t^\top \wB_{t-1} > 0, \xB_t^\top \wB_* < 0} $, we observe that 
\[
0\preceq \Ebb_{\xB_t}  \Big( \ind{\xB_t^\top \wB_{t-1} > 0, \xB_t^\top \wB_* < 0} \cdot (\xB_t^\top \wB_*)^2 \cdot \xB_t\xB_t^\top \Big)\preceq F(\wB_{t-1}). 
\]
So when $0<\gamma_t<1$ it holds that 
\begin{equation}\label{eq:bernoulli:sgd:quad2+cross}
\begin{aligned}
    &\quad \diag\bigg( \gamma_t^2\cdot \Ebb (\texttt{quadratic term}) -\gamma_t\cdot \Ebb (\texttt{crossing term 1})
    -\gamma_t\cdot \Ebb (\texttt{crossing term 2}) \bigg)\\
    &= \gamma_t^2\cdot \Ebb  \Big( \ind{\xB_t^\top \wB_{t-1} > 0, \xB_t^\top \wB_* < 0} \cdot (\xB_t^\top \wB_*)^2 \cdot \xB_t\xB_t^\top \Big)  + (2\gamma_t-2\gamma_t^2) \cdot \Ebb [ F(\wB_{t-1}) ] \\
    &\begin{cases}
        \le (2\gamma_t - \gamma_t^2) \cdot \Ebb [F(\wB_{t-1})] \le 2\gamma_t \cdot \Ebb [F(\wB_{t-1})] ; \\
        \ge (2\gamma_t - 2\gamma_t^2) \cdot \Ebb [F(\wB_{t-1})] = 2\gamma_t(1-\gamma_t)\cdot \Ebb [F(\wB_{t-1})].
    \end{cases}
\end{aligned}
\end{equation}

Similarly, we calculate the diagonal of the expectation of $(\texttt{quadratic term 1})$ in \eqref{eq:bernoulli:sgd:out-product}:
\begin{align}
    &\ \diag\big( \Ebb (\texttt{quadratic term 1}) \big) \notag \\
    &= \diag\big( \Ebb (\wB_{t-1}-\wB_*)^{\otimes 2} \big)
    -2\gamma_t\cdot \diag\bigg( \Ebb \Big( \ind{\xB_t^\top \wB_{t-1}>0} \cdot \xB_t^\top (\wB_{t-1}-\wB_*) \cdot \xB_t (\wB_{t-1}-\wB_*)^\top \Big) \bigg) \notag \\
    &\quad + \gamma_t^2 \cdot  \Ebb \Big(\ind{\xB_t^\top \wB_{t-1}>0} \cdot \big(\xB_t^\top (\wB_{t-1}-\wB_*) \big)^2 \cdot \xB_t\xB_t^\top \Big)  \notag \\
    &= \mathring{\AB}_{t-1}
     + (\gamma_t^2-2\gamma_t) \cdot \Ebb \Big(\ind{\xB_t^\top \wB_{t-1}>0} \cdot \big(\xB_t^\top (\wB_{t-1}-\wB_*) \big)^2 \cdot \xB_t\xB_t^\top \Big) \notag \\
     &\begin{cases}
     \preceq \mathring\AB_{t-1}
     -\gamma_t \cdot \Ebb \Big(\ind{\xB_t^\top \wB_{t-1}>0} \cdot \big(\xB_t^\top (\wB_{t-1}-\wB_*) \big)^2 \cdot \xB_t\xB_t^\top \Big); \\
        \succeq \mathring\AB_{t-1}
     -2\gamma_t \cdot \Ebb \Big(\ind{\xB_t^\top \wB_{t-1}>0} \cdot \big(\xB_t^\top (\wB_{t-1}-\wB_*) \big)^2 \cdot \xB_t\xB_t^\top \Big), 
     \end{cases}\notag
\end{align}
where in the second equality we use \eqref{eq:bernoulli:key-identity} and in the inequality we use $0<\gamma_t<1$.
We now use Assumption~\ref{assump:bernoulli} to obtain that 
\begin{align*}
    \Ebb \Big(\ind{\xB_t^\top \wB_{t-1}>0} \cdot \big(\xB_t^\top (\wB_{t-1}-\wB_*) \big)^2 \cdot \xB_t\xB_t^\top \Big)
    = \sum_i \frac{\lambda_i}{2} \cdot \Ebb \big(\wB_{t-1}[i]-\wB_*[i]\big)^2\cdot \eB_i 
    = \half \cdot \HB \mathring\AB_{t-1}.
\end{align*}
So we have 
\begin{equation}\label{eq:bernoulli:sgd:quad1}
    \diag\big( \Ebb (\texttt{quadratic term 1}) \big)
    \begin{cases}
    \preceq (\IB -\frac{\gamma_t}{2}\cdot \HB )\cdot \mathring\AB_{t-1} ; \\
        \succeq (\IB-\gamma_t\cdot \HB)\cdot \mathring\AB_{t-1}.
    \end{cases}
\end{equation}

Bring \eqref{eq:bernoulli:sgd:quad2+cross}, \eqref{eq:bernoulli:sgd:quad1} and that 
\[\Ebb \big(\ind{\xB_t^\top \wB_{t-1} > 0}\epsilon_t^2\cdot  \xB_t\xB_t^\top \big) = \frac{\sigma^2}{2}\cdot \HB\]
into \eqref{eq:bernoulli:sgd:out-product} we obtain
\begin{equation*}
\begin{aligned}
\mathring\AB_t
&= \diag\big( 
     \Ebb ( \wB_{t} - \wB_* )^{\odot 2}   \big) \\
&=      \diag\big( \Ebb (\texttt{quadratic term 1}) \big)
+ \Ebb \big(\ind{\xB_t^\top \wB_{t-1} > 0}\epsilon_t^2\cdot  \xB_t\xB_t^\top \big)  \\ 
&\quad  + \diag\bigg( \gamma_t^2\cdot \Ebb (\texttt{quadratic term 2}) - \gamma_t\cdot \Ebb (\texttt{crossing term 1})  - \gamma_t\cdot \Ebb (\texttt{crossing term 2})  \bigg) \\
& 
     \begin{cases}
     \preceq  (\IB-\frac{\gamma_t}{2}\cdot \HB)\cdot \mathring\AB_{t-1} + \frac{\gamma_t^2\sigma^2}{2}\HB + 2\gamma_t \cdot \Ebb[F(\wB_{t-1})], \\
    \succeq  (\IB-\gamma_t\cdot \HB)\cdot \mathring\AB_{t-1} + \frac{\gamma_t^2\sigma^2}{2}\HB+ 2\gamma_t(1-\gamma_t) \cdot \Ebb[F(\wB_{t-1})].
     \end{cases}
\end{aligned}
\end{equation*}
By solving the above recursion we obtain 
\begin{align*}
    \mathring\AB_N
        &\preceq \prod_{t=1}^N (\IB-\frac{\gamma_t}{2}\cdot \HB)\cdot\mathring\AB_0 + \frac{\sigma^2}{2}\sum_{t=1}^{N} \gamma^2_t\prod_{k=t+1}^{N}(1-\frac{\gamma_k}{2} \HB)\HB + 2\sum_{t=1}^{N} \gamma_t\prod_{k=t+1}^{N}(1-\frac{\gamma_k}{2} \HB)\Ebb[F(\wB_{t})], \\
        \mathring\AB_N
        &\succeq \prod_{t=1}^N (\IB-{\gamma_t}\cdot \HB)\cdot\mathring\AB_0 + \frac{\sigma^2}{2}\sum_{t=1}^{N} \gamma^2_t\prod_{k=t+1}^{N}(1-{\gamma_k} \HB)\HB+ 2\sum_{t=1}^{N} \gamma_t (1-\gamma_t) \prod_{k=t+1}^{N}(1-{\gamma_k} \HB)\Ebb[F(\wB_{t})].
\end{align*}
The remain efforts are taking inner product with $\HB$ and using Lemma~\ref{lemma:tron:misspecified:prod-sum-bounds} to show that
\begin{align*}
    \quad \la\HB, \mathring\AB_N\ra 
        &\lesssim \Big\la \HB, \prod_{t=1}^N \Big(\IB-\frac{\gamma_t}{2}\cdot \HB\Big)\cdot\mathring\AB_0\Big \ra  + {\sigma^2}\Big\la \HB, \sum_{t=1}^{N} \gamma^2_t\prod_{k=t+1}^{N}\Big(1-\frac{\gamma_k}{2} \HB\Big)\HB\Big\ra  \\
        &\quad+ \Big\la \HB, \sum_{t=1}^{N} \gamma_t\prod_{k=t+1}^{N}\Big( 1-\frac{\gamma_k}{2} \HB\Big)\Ebb[F(\wB_{t})] \Big\ra\\
        &\lesssim \big\| \wB_0 - \wB_* \big \|_{\prod_{t=1}^N (\IB-\frac{\gamma_t}{2} \HB)\HB}^2  + \sigma^2\cdot \frac{k^*+\Neff^2\gamma_0^2\sum_{i>k^*}\lambda_i^2}{\Neff} \\
        &\quad + \sum_{t=1}^{N} \Big\la \gamma_t\prod_{k=t+1}^{N}\Big( 1-\frac{\gamma_k}{2} \HB\Big)\HB, \Ebb[F(\wB_{t})] \Big\ra,
\end{align*}
and
\begin{align*}
   \la\HB, \mathring\AB_N\ra 
        &\gtrsim \Big\la\HB,  \prod_{t=1}^N (\IB-{\gamma_t}\HB)\cdot\mathring\AB_0\Big\ra  + \sigma^2\sum_{t=1}^{N} \gamma^2_t\prod_{k=t+1}^{N}(1-{\gamma_k}\HB)\HB \\
        &\quad + \sum_{t=1}^{N} \gamma_t(1-\gamma_t)\prod_{k=t+1}^{N}(1-{\gamma_k} \HB)\Ebb[F(\wB_{t})], \\
        &\gtrsim  \big\| \wB_0 - \wB_* \big \|_{\prod_{t=1}^N (\IB-\gamma_t \HB)\HB}^2  + \sigma^2\cdot \frac{k^*+\Neff^2\gamma_0^2\sum_{i>k^*}\lambda_i^2}{\Neff} \\
        &\quad + \sum_{t=1}^{N} \Big\la \gamma_t(1-\gamma_t)\prod_{k=t+1}^{N}\big( 1-\gamma_k\HB\big)\HB, \Ebb[F(\wB_{t})] \Big\ra.
\end{align*}
We have completed the proof.
\end{proof}

\subsection{Proof of Theorem \ref{thm:tron-vs-sgd:every-case}}
\begin{proof}[Proof of Theorem \ref{thm:tron-vs-sgd:every-case}]
    We now compare the risk upper bound for \eqref{eq:tron} shown in Theorem \ref{thm:tron:bernoulli} and the risk lower bound for \eqref{eq:sgd} shown in Theorem \ref{thm:sgd:bernoulli:lb}.
Denote $\gamma_0^{\sgd}<1$ as the initial stepsize for \eqref{eq:sgd}, and 
\begin{align*}
   & \bias^{\sgd}(\gamma_0) := \big\| (\wB_0 - \wB_*) \big\|^2_{\prod_{t=1}^{N}(\IB-\gamma_t\HB)\HB},\\
   & \variance^{\sgd}(\gamma_0) := 
     \sigma^2  \cdot  \frac{\#\{i: \lambda_i \ge \frac{1}{\Neff \gamma_0 }\}+\Neff^2\gamma_0^2\sum_{\lambda_i < \frac{1}{\Neff \gamma_0}} \lambda_i^2}{\Neff},
\end{align*}
then Theorem \ref{thm:sgd:bernoulli:lb} implies that for every $\gamma_0^{\sgd}<1$,
\begin{align*}
    \Ebb \excessrisk(\wB_N^{\sgd}) \gtrsim \bias(\gamma_0^{\sgd}) + \variance(\gamma_0^{\sgd}) + \Psi,
\end{align*}
where $\Psi \ge 0$.
Similarly, 
denote $\gamma_0^{\tron}<1/2$ as the initial stepsize for \eqref{eq:tron}, and 
\begin{align*}
    \bias^{\tron}(\gamma_0) := \big\| (\wB_0 - \wB_*) \big\|^2_{\prod_{t=1}^{N}(\IB-\frac{\gamma_t}{2}\HB)\HB}, \qquad 
    \variance^{\tron}(\gamma_0, k) := 
     \sigma^2  \cdot  \frac{k + \Neff^2\gamma_0^2\sum_{i>k} \lambda_i^2}{\Neff},
\end{align*}
then Theorem \ref{thm:tron:bernoulli} implies that for every $\gamma_0^{\tron}<1/2$,
\begin{align*}
    \Ebb \excessrisk(\wB_N^{\tron}) \lesssim \bias(\gamma_0^{\tron}) + \variance(\gamma_0^{\tron}, k),
\end{align*}
where $k \ge 0$ can be an arbitrary index. 

    We prove the theorem by discussing two cases on whether or not the \eqref{eq:sgd} initial stepsize is large or not.

    \paragraph{SGD with Small Initial Stepsize.}
    If the initial stepsize for \eqref{eq:sgd} is $\gamma_0^{\sgd} < 1/8$, then one can choose an initial stepsize $\gamma_0^{\tron} = 2\gamma_0^{\sgd} < 1/4$ for \eqref{eq:tron}. Then we have 
    \( \bias^{\tron}(\gamma_0^{\tron}) = \bias^{\sgd}(\gamma_0^{\sgd}).\)
    Moreover by choosing $k := \#\{i: \lambda_i \ge {1}/{(\Neff \gamma_0^{\sgd}})\}$, we have 
    \( \variance^{\tron}(\gamma_0^{\tron}, k) = \variance^{\sgd}(\gamma_0^{\sgd}).\)
    These together imply that 
    \[\Ebb \excessrisk(\wB_N^{\tron}) \lesssim \Ebb \excessrisk(\wB_N^{\tron}) \]
    holds if $\gamma_0^{\sgd} < 1/8$.

    \paragraph{SGD with Large Initial Stepsize.}
    Now we discuss the case when $\gamma_0^{\sgd} > 1/8$. In this case we choose $\gamma_0^{\tron} = 1/8 < 1/4$. 
    \begin{itemize}
        \item 
    If $\Neff < \frac{1}{8\lambda_1}$, i.e., $\gamma_0^{\sgd}\lambda_i \le \gamma_0^{\sgd}\lambda_1 \le \frac{1}{8\Neff}$, which implies that 
    \((\IB-{\gamma_0^{\sgd}}\HB)^{2\Neff} \succeq (1-\frac{1}{8\Neff}\IB )^{2\Neff} \succeq 0.01\cdot \IB \),
    then
    \begin{align*}
        &\bias^{\sgd}(\gamma_0^{\sgd}) = \big\| (\wB_0 - \wB_*) \big\|^2_{\prod_{t=1}^{N}(\IB-{\gamma_t}\HB)\HB}
        \ge \big\| (\wB_0 - \wB_*) \big\|^2_{(\IB-{\gamma_0^{\sgd}}\HB)^{2\Neff}\HB}\\
        &  \ge 0.01\cdot \big\| (\wB_0 - \wB_*) \big\|^2_{\HB}.
    \end{align*}
    So we have 
    \(\bias^{\tron}(\gamma_0^{\tron}) \lesssim \bias^{\sgd}(\gamma_0^{\sgd}).\)
    Similarly we choose $k := \#\{i: \lambda_i \ge {1}/{(\Neff \gamma_0^{\sgd}})\}$, we have 
    \( \variance^{\tron}(\gamma_0^{\tron}, k) \lesssim \variance^{\sgd}(\gamma_0^{\sgd}),\)
    because $\gamma_0^{\tron} = 1/8 \le \gamma_0^{\sgd}$.

\item If $\Neff > \frac{1}{8\lambda_1}$, then it holds that $\lambda_1 > 1/({8\Neff}) = 1/(\Neff \gamma_0^{\tron})$ then we must have 
\begin{align*}
    \variance^{\tron}(\gamma_0^{\tron}, k) =\sigma^2  \cdot  \frac{k + \Neff^2(\gamma_0^{\tron})^2\sum_{i>k} \lambda_i^2}{\Neff} 
    \ge \frac{\sigma^2}{\Neff},
\end{align*}
for every $k\ge 0$.
But we also have $\big(\IB-\frac{\gamma_0^{\tron}}{2}\HB\big)^{\Neff} \le \frac{2}{\gamma_0^{\tron}\Neff}\HB^{-1} = \frac{16}{\Neff} \HB^{-1}$, which implies that
    \begin{align*}
        \bias^{\tron}(\gamma_0^{\tron}) &= \big\| (\wB_0 - \wB_*) \big\|^2_{\prod_{t=1}^{N}(\IB-\frac{\gamma_t}{2}\HB)\HB}\le \big\| (\wB_0 - \wB_*) \big\|^2_{\big(\IB-\frac{\gamma_0^{\tron}}{2}\HB\big)^{\Neff}\HB} \\
        &\le \frac{16}{\Neff}\cdot \|\wB_0 - \wB_*\|_2^2  \lesssim \variance^{\tron}(\gamma_0^{\tron}, k), 
    \end{align*}
    for every $k\ge 0$.
    Therefore we have 
    \begin{align*}
        \Ebb \excessrisk(\wB_N^{\tron}) \lesssim \bias(\gamma_0^{\tron}) + \variance(\gamma_0^{\tron}, k) 
        \lesssim \variance(\gamma_0^{\tron}, k) 
        \lesssim \variance(\gamma_0^{\sgd}) 
        \lesssim \Ebb \excessrisk(\wB_N^{\sgd}), 
    \end{align*}
    where the third inequality is by choosing $k := \#\{i: \lambda_i \ge {1}/{(\Neff \gamma_0^{\sgd}})\}$ and the fact that $\gamma_0^{\tron} = 1/8 \le \gamma_0^{\sgd}$.
    
    \end{itemize}

    Putting everything together, we have completed the proof.
\end{proof}

\subsection{Proof of Theorem \ref{thm:tron-vs-sgd:bad-case}}
\begin{proof}[Proof of Theorem \ref{thm:tron-vs-sgd:bad-case}]

We only need to show that for SGD \eqref{eq:sgd} it holds that 
\[\Ebb_{\wB_*}\Ebb_{\algo} \risk(\wB_N) \ge \half \cdot \risk(\wB_0).\]

According to the SGD iterate \eqref{eq:sgd} and the noiseless assumption ($\sigma^2=0$), we can write the gradient as 
\begin{align*}
    \gB_t &:= \big(\relu(\xB_t^\top \wB_{t-1})  - y_t\big)\cdot \ind{\xB_{t}^\top \wB_{t-1}>0}\cdot \xB_t \\
    &= \big( \xB_t^\top \wB_{t-1} \ind{\xB_{t}^\top \wB_{t-1} >0} - \xB_t^\top \wB_* \ind{\xB_{t}^\top \wB_{*}>0}\big) \cdot \ind{\xB_{t}^\top \wB_{t-1} >0}\cdot \xB_t.
\end{align*}
Let us focus on the $i$-th component from now on.
Without loss of generality, we assume for now that $\wB_*[i] \ge 0$.
We use Assumption~\ref{assump:bernoulli} to obtain
\begin{align*}
    \gB_t [i] =
    \begin{cases}
    0, & \xB_t \notin \{\pm \eB_i \} \ \text{or}\ \xB^\top_t \wB_{t-1} \le 0; \\
    \wB_{t-1}[i] - \wB_*[i], & \wB_{t-1}[i] \ge 0 \ \text{and}\ \xB_t = \eB_i; \\
    \wB[i], & \wB_{t-1}[i] < 0  \ \text{and}\ \xB_t = - \eB_i.
    \end{cases}
\end{align*}
So the $i$-th component of the SGD iterate is updated by
\begin{align}
    \wB_{t}[i] 
    &= \begin{cases}
    \wB_{t-1}[i], &  \xB_t \notin \{\pm \eB_i \} \ \text{or}\ \xB^\top_t \wB_{t-1} \le 0; \\
    \wB_{t-1}[i] - \gamma (\wB_t[i] - \wB_*[i]),& \wB_{t-1}[i] \ge 0 \ \text{and}\ \xB_t = \eB_i; \\
    \wB_{t-1}[i] - \gamma \wB_{t-1}[i], & \wB_{t-1}[i] < 0  \ \text{and}\ \xB_t = - \eB_i.
    \end{cases} \notag 
\end{align}
Recall that $\gamma < 1/\tr(\HB) = 1$.
We next show that: if $\wB_0[i] \le 0$, then $\wB_t[i] \le 0$ for all $t$.
This is done by induction: when $\wB_{t-1}[i]\le 0$, two possible updates happen: $\wB_t[i] = \wB_{t-1}[i] \le 0$ or $\wB_t[i] = (1-\gamma) \wB_{t-1}[i] \le 0$.
In both cases, it holds that $\wB_t[i]\le 0$. We have completed the induction.
Moreover, recall that we assume $\wB_*[i]\ge 0$,  so if $\wB_t[i]\le 0$, it holds that
\[
(\wB_t[i] - \wB_*[i])^2 \ge (\wB_*[i])^2,\quad t\ge 0.
\]
Similarly we can prove that when $\wB_*[i]\le 0$,
if $\wB_t[i]\ge 0$, it holds that
\[
(\wB_t[i] - \wB_*[i])^2 \ge (\wB_*[i])^2,\quad t\ge 0.
\]

Now recall that $\wB_*[i]$ is initialized with a uniformly random sign, so with half probability  $\wB_*[i]$ and $\wB_[i]$ will have different signs. 
Therefore we have 
\[
\Ebb_{\wB_*}\Ebb_{\algo}\big[ (\wB_t[i] - \wB_*[i])^2 \big]\ge
\Ebb_{\wB_*}\Ebb_{\algo} \big[ (\wB_t[i] - \wB_*[i])^2\cdot \ind{\wB_0[i] \cdot \wB_*[i] \le 0} \big] \ge 
 \half \cdot (\wB_*[i])^2, \quad t\ge 0.
\]
Therefore we have shown that for every $t\ge 0$, 
\[
\Ebb_{\wB_*}\Ebb_{\algo} \risk(\wB_t)  = \Ebb_{\wB_*}\sum_i \lambda_i (\wB_t[i] - \wB_*[i])^2 
\ge \half \sum_i \lambda_i (\wB_*[i])^2
= \half \cdot \| \wB_*\|_{\HB}^2 \ge \half \cdot \risk(0),
\]
where the last inequality is due to Lemma~\ref{thm:loss-landscape}.

\end{proof}

\section{Additional Experiments}
Figures \ref{fig:bernoulli3}, \ref{fig:gaussian1} and \ref{fig:gaussian2} show the additional experimental results, where we compare the excess risk achieved by \eqref{eq:tron} and \eqref{eq:sgd} on Bernoulli and Gaussian data. Figure \ref{fig:bernoulli3} provides the experimental results on Bernoulli data in the noiseless setting. We can clearly see that SGD finally reaches a point with constant risk, while GLM-tron achieves nearly zero excess risk. This backs up our Theorem \ref{thm:tron-vs-sgd:bad-case}. Figures \ref{fig:gaussian1} and \ref{fig:gaussian2} visualize the learning performance of GLM-tron and SGD on Gaussian data. We can also see that GLM-tron achieves smaller excess risk than SGD, which also supports our claim that GLM-tron is preferable to SGD for high-dimensional ReLU regression.

\begin{figure}[t!]
    \centering
    \subfigure{\includegraphics[width=0.35\linewidth]{./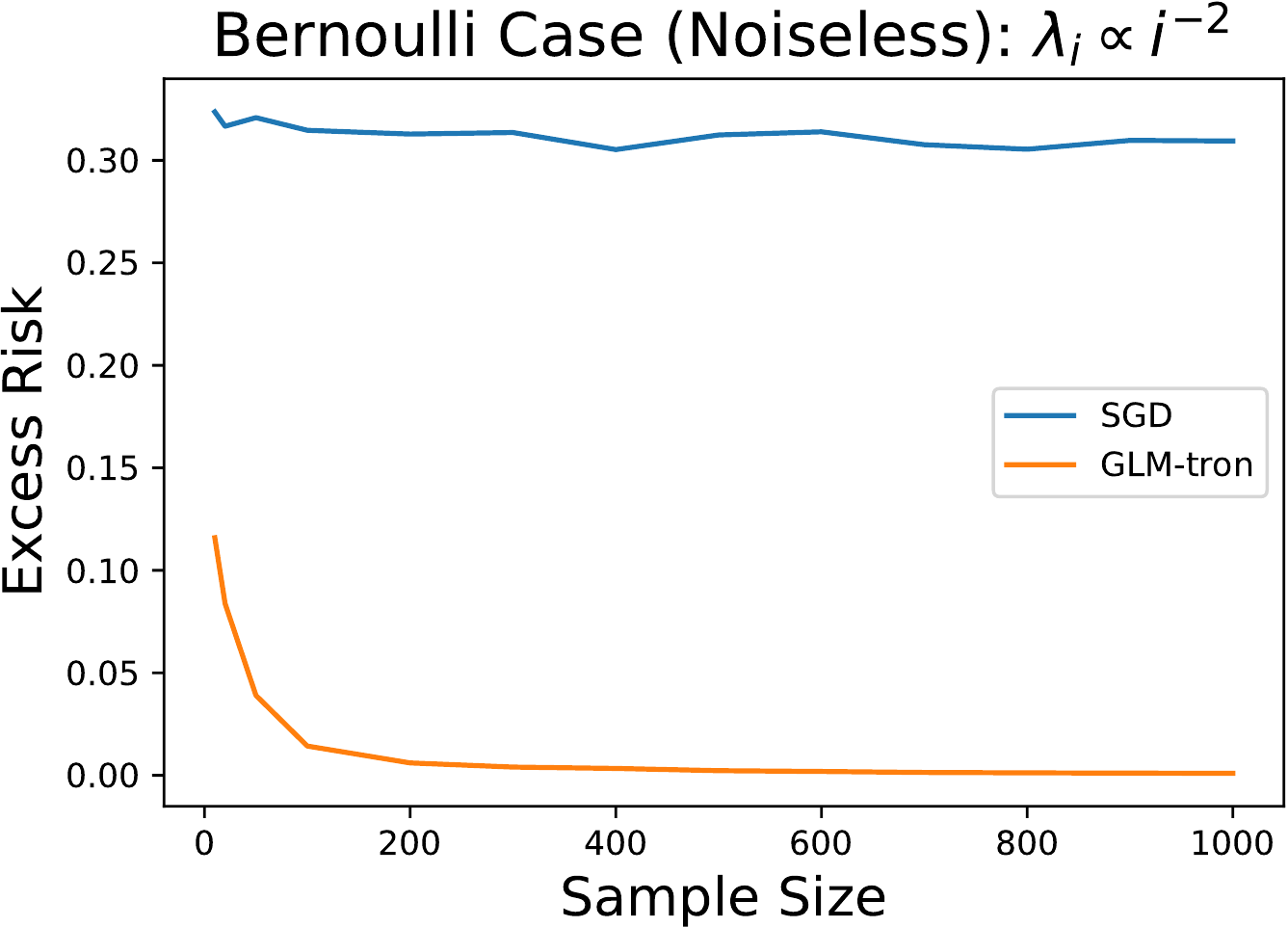}}
      \subfigure{\includegraphics[width=0.35\linewidth]{./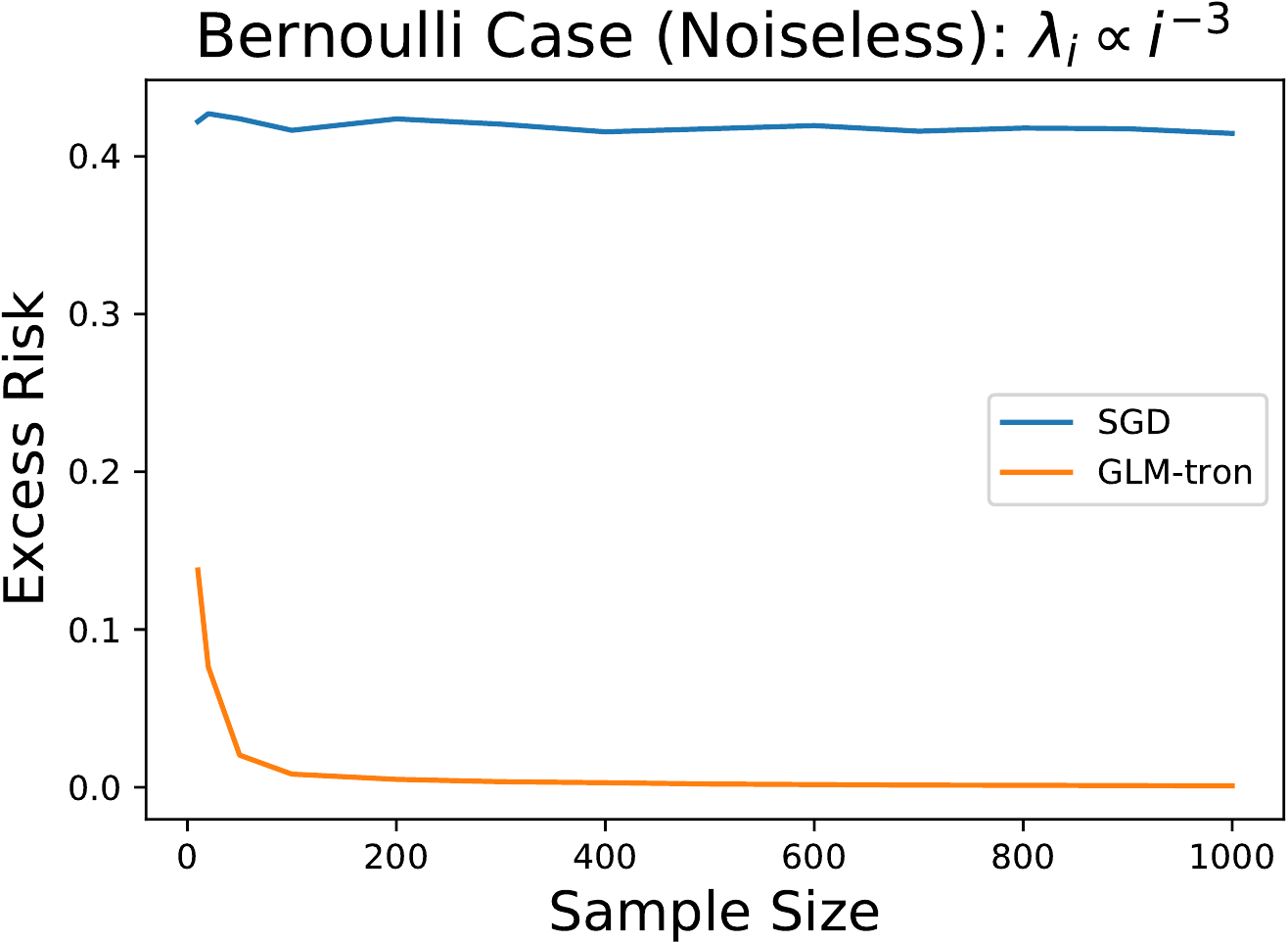}}
    \caption{\small Excess risk comparison between SGD and GLM-tron on Bernoulli Distribution.  The problem dimension $d = 1024$, and the regression model is well-specified without noise. We consider two different symmetric Bernoulli distributions and set true model parameters $\wB[i]^{*} = i^{-1}$. For each algorithm and each sample size, we do a grid search and report the best excess risk achieved by $\gamma_0 \in \{0.5, 0.25, 0.1, 0.075, 0.05, 0.025, 0.01\}$. The plots are averaged over $20$ independent runs.
}
    \label{fig:bernoulli3}
\end{figure}

\begin{figure}[t!]
    \centering
    \subfigure{\includegraphics[width=0.35\linewidth]{./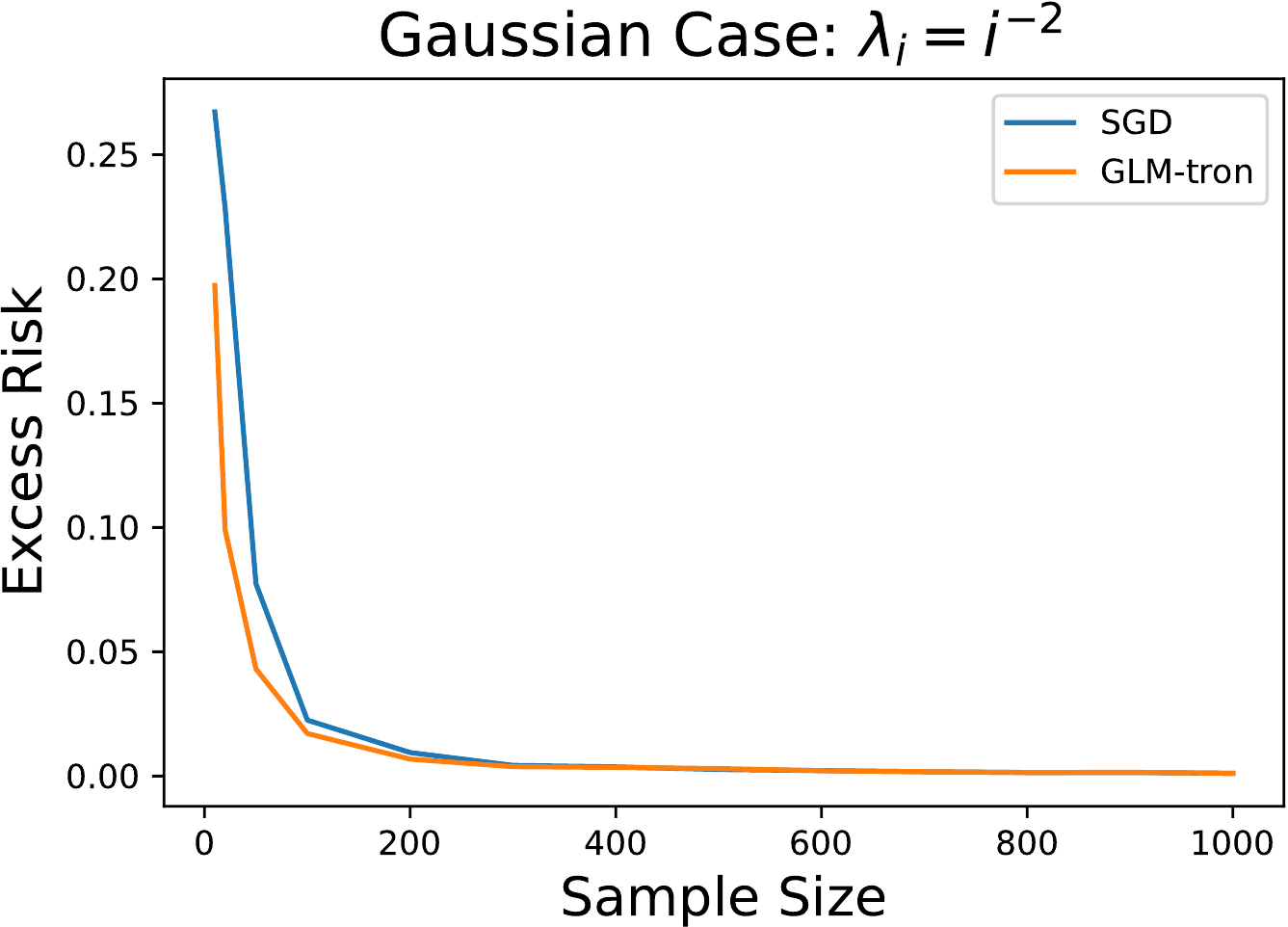}}
      \subfigure{\includegraphics[width=0.35\linewidth]{./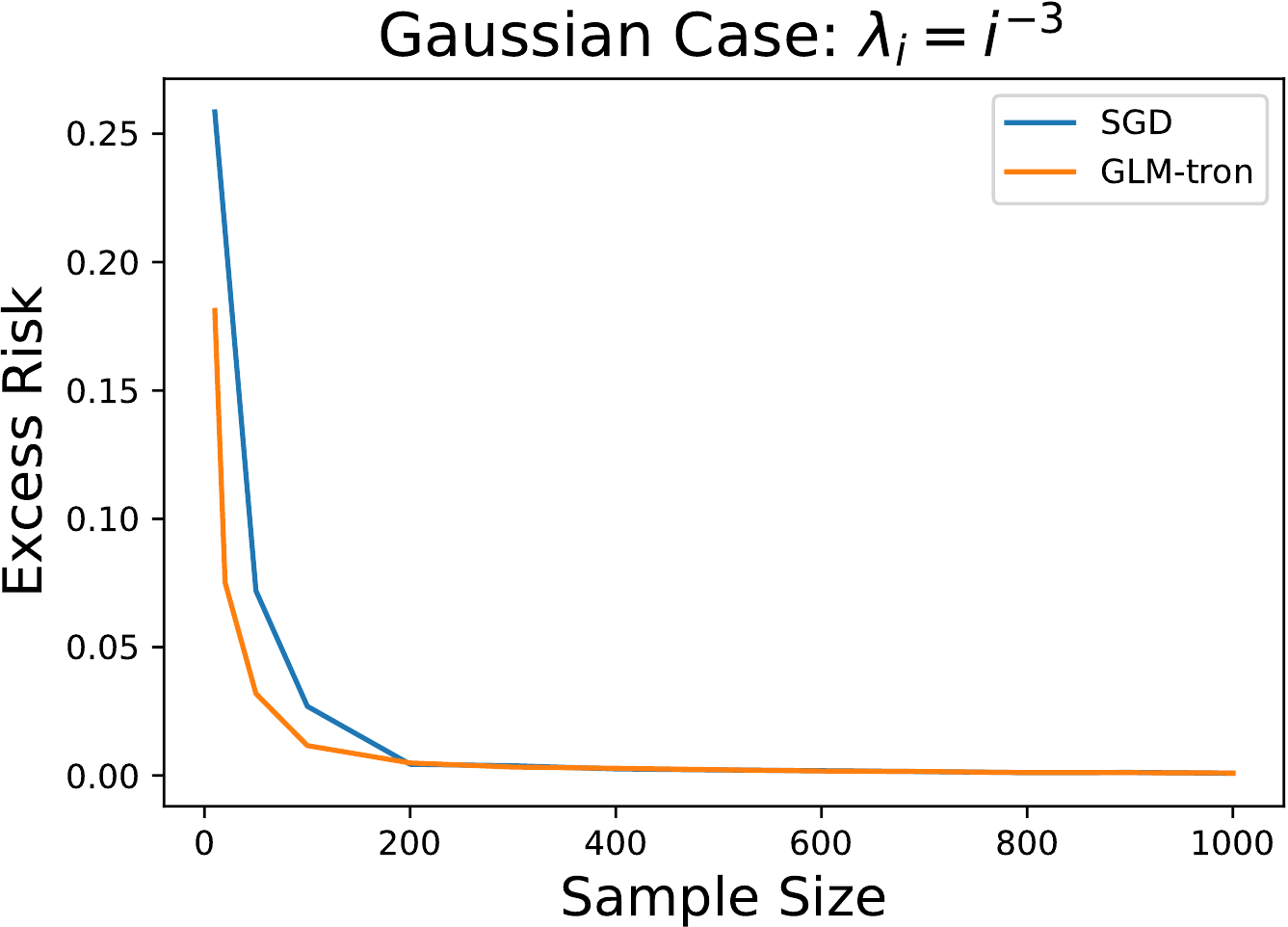}}
    \caption{\small Excess risk comparison between SGD and GLM-tron on Gaussian Distribution.  The regression model is well-specified with noise variance $\sigma = 0.1$. Other problem parameters and algorithm designs are the same as those in Figure \ref{fig:bernoulli3}. 
}
    \label{fig:gaussian1}
\end{figure}

\begin{figure}[t!]
    \centering
    \subfigure{\includegraphics[width=0.35\linewidth]{./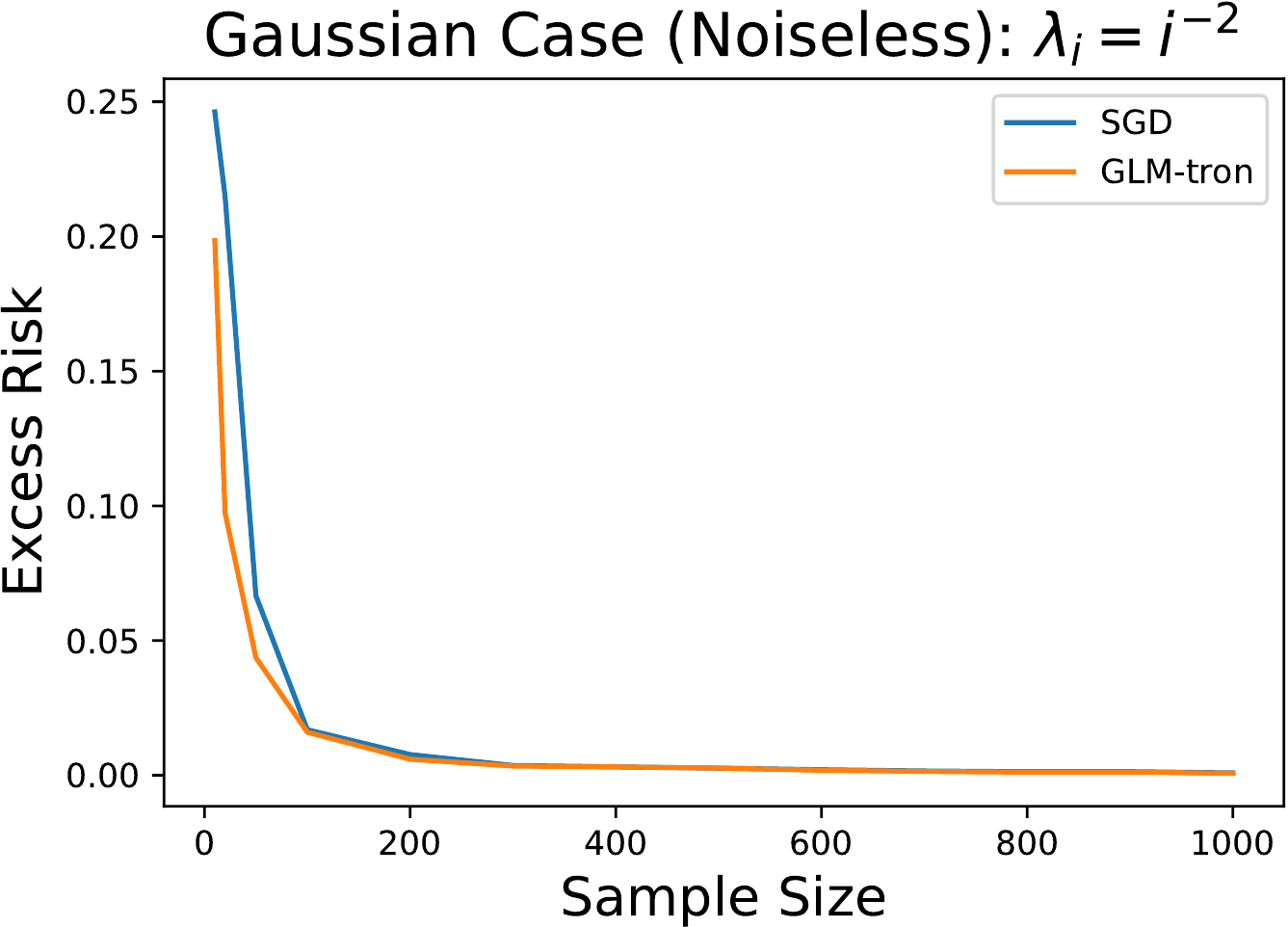}}
      \subfigure{\includegraphics[width=0.35\linewidth]{./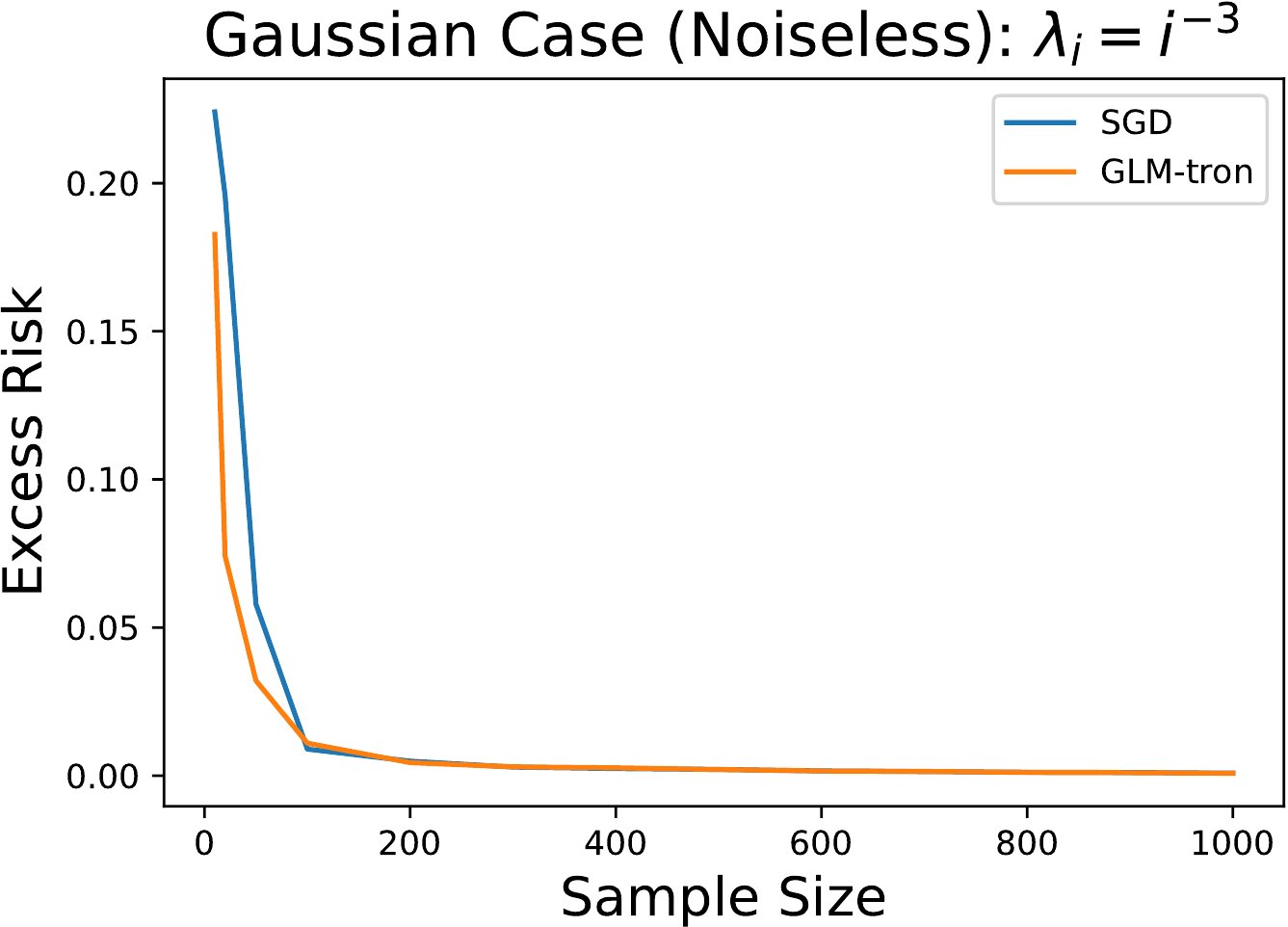}}
    \caption{\small Excess risk comparison between SGD and GLM-tron on Gaussian Distribution. The regression model is well-specified without noise. Other problem and algorithm parameters are the same as those in Figure \ref{fig:gaussian1}.
}
    \label{fig:gaussian2}
\end{figure}

\end{document}